\newcounter{excounter}[section]
\renewcommand{\theexcounter}{\thesection.\arabic{excounter}}
\newcommand\blfootnote[1]{%
  \begingroup
  \renewcommand\thefootnote{}\footnote{#1}%
  \addtocounter{footnote}{-1}%
  \endgroup
}
\colorlet{darkred}{red!60!black}
  \setlist{leftmargin=*}
\newcommand{\newdata}{\text{SPROUT}}
\definecolor{codegreen}{rgb}{0,0.6,0}
\definecolor{codegray}{rgb}{0.5,0.5,0.5}
\definecolor{codepurple}{rgb}{0.58,0,0.82}
\definecolor{backcolour}{rgb}{0.95,0.95,0.92}
\lstdefinestyle{mystyle}{
    backgroundcolor=\color{backcolour},   
    commentstyle=\color{codegreen},
    keywordstyle=\color{magenta},
    numberstyle=\tiny\color{codegray},
    stringstyle=\color{codepurple},
    basicstyle=\ttfamily\footnotesize,
    breakatwhitespace=false,         
    breaklines=true,                 
    captionpos=b,                    
    keepspaces=true,                 
    numbers=left,                    
    numbersep=5pt,                  
    showspaces=false,                
    showstringspaces=false,
    showtabs=false,                  
    tabsize=2
}
\title{CARROT:  A Cost Aware Rate Optimal Router}
\author{%
  Seamus Somerstep$^{\dagger \square}$ \ \ \
  Felipe Maia Polo$^{\dagger  \square}$ \ \ \
  Allysson Flavio Melo de Oliveira $^ {\ddag \circ}$ \\ 
  \textbf{Prattyush Mangal} $^ \ddag$ \ \ \
  \textbf{Mírian Silva} $^ {\ddag \circ \triangle}$  \ \ \
  \textbf{Onkar Bhardwaj} $^ {\ddag \circ}$ \\ \vspace{0.5cm}
  \textbf{Mikhail Yurochkin}* $^ {\ddag \circ}$ \ \ \
  \textbf{Subha Maity}* $^ \spadesuit$ \\ 
\normalsize $^\square$ Department of Statistics, University of Michigan\\
$^\ddagger$ IBM Research  ~~~ $^\circ$ MIT-IBM Watson AI Lab  ~~~
$\triangle$ Federal University of Minas Gerais\\
$^ \spadesuit$ Department of Statistics and Actuarial Science, University of Waterloo 
  % examples of more authors
  % \And
  % Coauthor \\
  % Affiliation \\
  % Address \\
  % \texttt{email} \\
  % \AND
  % Coauthor \\
  % Affiliation \\
  % Address \\
  % \texttt{email} \\
  % \And
  % Coauthor \\
  % Affiliation \\
  % Address \\
  % \texttt{email} \\
  % \And
  % Coauthor \\
  % Affiliation \\
  % Address \\
  % \texttt{email} \\
}
\begin{document}
\nolinenumbers

\maketitle

% \begin{figure*}
%     \centering

%     \href{https://huggingface.co/CARROT-LLM-Routing}{https://huggingface.co/CARROT-LLM-Routing}

% \end{figure*}

%\vspace{-0.8cm}
\begin{center}
   \large{ %%Hugging Face repository:
   \href{https://huggingface.co/CARROT-LLM-Routing}{\textcolor{purple!80!black}{\texttt{huggingface.co/CARROT-LLM-Routing}}}
   }
    %\large{\url{https://huggingface.co/CARROT-LLM-Routing}}
    \end{center}
  % 

% \linenumbers

\begin{abstract}

With the rapid growth in the number of Large Language Models (LLMs), there has been a recent interest in \emph{LLM routing}, or directing queries to the cheapest LLM that can deliver a suitable response. We conduct a minimax analysis of the routing problem, providing a lower bound and finding that a simple router that predicts both cost and accuracy for each question can be minimax optimal. Inspired by this, we introduce CARROT, a Cost AwaRe Rate Optimal rouTer that selects a model based on estimates of the models' cost and performance. Alongside CARROT, we also introduce the Smart Price-aware ROUTing (\newdata) dataset to facilitate routing on a wide spectrum of queries with the latest state-of-the-art LLMs. Using \newdata\ and prior benchmarks such as Routerbench and open-LLM-leaderboard-v2 we empirically validate CARROT's performance against several alternative routers. 
\blfootnote{$^\dagger$ primary contributors and corresponding authors (smrstep@umich.edu, maiapolo@umich.edu)\\ \phantom{fill f} * lead investigators (mikhail.yurochkin@ibm.com, smaity@uwaterloo.ca)}
\end{abstract}

% {\bf TL;DR:} We present a theoretical and empirical investigation of performance and cost trade-offs in LLM routing. We introduce 

\section{Introduction}
\label{sec:intro}

%The rapid advancement of large-language models (LLMs) is remarkable, as they 

% Large language models (LLMs) can now routinely tackle a variety of tasks in academic, industrial, and everyday contexts \citep{minaee2024large}. This continued success has inspired the rapid introduction of new LLMs for both general and specialized purposes. Although this abundance has the potential to provide practitioners with flexibility, the sheer number of options often makes it challenging to select a suitable model given budget and performance needs. In particular, given a question of interest, how should one know which LLM to send it to? In a perfect world, one might simply opt to use the most powerful model, but for many, this will quickly lead to prohibitive financial and computational costs. 

Large language models (LLMs) have demonstrated the capability to effectively address a diverse array of tasks across academic, industrial, and everyday settings \citep{minaee2024large}. This continued success has catalyzed the rapid development of new LLMs tailored for both general and specialized applications \citep{myrzakhan2024open}. While %the proliferation of these models
this offers practitioners increased flexibility, the vast number of available options may pose a daunting challenge in their real-world deployment. Particularly, determining the optimal LLM for a given query remains a significant challenge. In a perfect world, all queries can be routed to the most powerful model, but for many, this may quickly become prohibitively expensive.

% \begin{figure}[h]
%     \centering
%         \includegraphics[width=0.4\textwidth]{} 
%         %\caption{Test 1}
%         %\label{fig:RouterBench}
%     %\hfill
%     %\begin{minipage}{0.49\textwidth}
%       % \includegraphics[width=\textwidth]{}
%       %  \caption{Test 2}
%      %   \label{fig:router-vs-gpt4}
%     %\end{minipage}
% \caption{Percent of GPT-4o performance achieved by CARROT across dataset at various discounted costs, where the blue dotted line indicates similar ($100\%$) performance to GPT-4o.}
% \label{fig:IBMMix-spider}
% \end{figure}

%Although some proprietary models, like GPT-4, offer outstanding performance, they often come with significant costs due to high API prices.

%A common approach to address this issue is to optimize LLMs to maintain a desirable performance at a uniformly cheaper cost. To do so, researchers have tried chain-of-thought reasoning \citep{wei2022chain,wu2023chain} to improve performance (at a relatively fixed cost), and weight quantization \citep{lin2024awq,zhao2024atom} or model distillation \citep{hsieh2023distilling}  to reduce model size. These techniques can be effective in reducing the serving cost of a single model but have not uniformly addressed the cost of deployment that LLMs often incur. %given the rapid rise in the number of available LLMs, these techniques are certainly not scalable in the long run. 

A common approach to address this issue %Another popular approach 
is \emph{routing} \citep{shnitzer2023large, hu2024routerbench, ong2024routellmlearningroutellms, notdiamond2023rorf,akota2024, chen2022efficientonlinemlapi, nguyen2025metallmhighperformantcostefficientdynamic}. There are two paradigms of routing; \emph{non-predictive} routers repeatedly call LLMs and evaluate the responses to select the best one for a given query. Examples include Fusion of Experts (FoE) \citep{wang2023fusing}, FrugalGPT \citep{chen2024frugalgptTMLR}, and techniques that cascade answers from weak to strong LLMs \citep{yue2024largelanguagemodelcascades}. The obvious disadvantage of non-predictive routing is the required inference of many LLMs for all queries, even those that are not suitable for the task at hand. As a workaround, researchers have also considered \emph{predictive routers}, which take LLM queries as inputs and output guesses at the most appropriate LLM.  %Routing has several advantages over optimizing a single LLM. First, as we shall see later, it treats each LLM as an input-output black-box, avoiding the need to delve into intricate infrastructure details, thus making it both flexible and scalable. Secondly, routing can allow practitioners to ``customize" the criteria by which LLM's are selected. 
A key limitation of the prior literature on predictive routing is \emph{the avoidance of the cost prediction problem} for text generation in unknown queries. For example, \citet{shnitzer2023large} only considers performance prediction. In another direction, RouteLLM %their router cannot handle the inclusion of cost into the model selection desiderata. 
\citep{ong2024routellmlearningroutellms} and RoRF \citep{notdiamond2023rorf} take a step forward and implicitly incorporate model cost by creating binary routers that select between a large, costly model and a cheap, small model. However, they do not predict the cost of individual queries and, as we shall see, the reduced flexibility of binary routing leads to performance degradation in practice. A recent work, \citet{hu2024routerbench}, introduces a router that considers cost and accuracy, but they assume that cost is constant across all questions in the dataset. The works \citet{chen2022efficientonlinemlapi, nguyen2025metallmhighperformantcostefficientdynamic} consider dynamic vs. static cost prediction in routing, but their attention is limited to classification tasks; as the inference cost of LLMs is heavily dependent on the number of output tokens, studying this question for more open-ended prompts remains an important problem. Finally, \citet{akota2024} considers cost prediction, but their method does not generalize to unseen queries, undermining its use in more realistic applications. In summary, they require prior knowledge of the test set queries, as test-time routing decisions are made by solving a Linear Program for each query. In contrast, we will introduce a router that can handle an unknown stream of questions at test time.

%\my{discuss limitations of key methods: they don't explicitly consider cost. My paper just tries to choose best model; RouteLLM implicitly considers cost by routing between small and big models, but this limits its ability to benefit from diverse strengths of many available closed- and open-source models.}
%while a single LLM, especially the small ones, may struggle to perform well on an expanding landscape of LLM tasks, by escaping the reliance on a single LLM, routing allows us to benefit from the diversity of LLMs. 
% \SM{mention somewhere that the collection of LLMs can be easily extended on the fly.}

% Let us understand the task of routing from a statistician's point of view. Given spaces of inputs $\cX$ and outputs $\cY$ we have a collection of input-output black-box models, denoted as $f_1, \dots, f_M: \cX \to \cY$.

To quantify the importance of cost prediction in routing, we provide a minimax analysis of the routing problem. In Theorem \ref{thm:lower-bound}, we establish a lower bound on minimax excess risk for any possible LLM router, in terms of the training sample size and certain quantities of the underlying prompt and model cost/accuracy distribution. Next, in Theorem \ref{thm:upper-bound} we show that a simple router based on predicting \emph{both LLM cost and accuracy} from a given prompt can achieve the minimax lower bound established in Theorem \ref{thm:lower-bound}. Collecting adequate data to train a router is challenging; for each LLM and every query one must collect a response \emph{and} an evaluation from a judge of that response. By necessity, this collection process must include inference from closed source models.  This makes the routing data gathering process expensive; the following informal Theorem emphasizes the importance of cost prediction in producing a router that makes the best use of this expensive data.

\begin{theorem}[Theorems \ref{thm:lower-bound} and \ref{thm:upper-bound} informal]
    An LLM router that predicts both cost and accuracy for every question and all models in a family can achieve optimal statistical efficiency.
\end{theorem}
Inspired by these findings (and to test them empirically), we introduce CARROT: a Cost AwaRe Rate Optimal rouTer and the Smart Price-aware ROUTing (\newdata) dataset. CARROT utilizes a simple two-stage approach. We first attain an estimator for each of the metrics (\eg\, cost and accuracy) for each model given a query, then we plug in these estimators into the formed risk function and select a model that minimizes the appropriate convex combination of the estimated metrics. 

The key learning step is attaining these aforementioned predictors, and this is where \newdata\ comes into play. \newdata\ covers 14 state-of-the-art language models (\eg, Llama-3-herd \citep{grattafiori2024llama3herdmodels}, GPT-4o \citep{openai2024gpt4}, \etc ) and approximately $45$k prompts from 6 benchmarks covering RAG, science, reasoning, and GPT-4 generated user queries. For all models, we use zero-shot prompting and corresponding chat templates to represent practical use cases and collect input and output token counts to allow flexibility when studying cost-performance trade-offs. The importance of a carefully curated dataset when studying routing cannot be overstated. In fact, on prior datasets such as  \citep{hu2024routerbench}, predictive routing offers almost no benefit over simply assigning prompts at random to the ``best" models. In contrast, on \newdata\ we will see that predictive routers are substantially better than this simple baseline. %We defer further details about \newdata\ and its advantages over other routing datasets to Section \ref{sec:datasets} and Appendix \ref{sec:append:sprout}. 

\subsection{Paper Outline} In Section \ref{sec:setup}, we introduce the routing problem and the ``plug-in" approach to routing that CARROT utilizes. In Section \ref{sec:lower-bound}, we provide our minimax analysis of the routing problem. This entails extending previous minimax studies in nonparametric classification \citep{audibert2007Fast} on two fronts: (1) with more than two classes, and (2)  with general losses beyond $0/1$-loss. Both of these extensions require us to introduce a generalized definition of margin (\cf\ eq. \eqref{eq:margin}), which reduces to the usual margin definition as in \citet{audibert2007Fast} when the classification task is binary and the loss is $0/1$. In Section \ref{sec:routing-application}, we introduce \newdata\ and empirically test the theoretical ideas discussed in the prior sections. To test CARROT's efficiency compared with prior routers, we utilize it to estimate the Pareto frontier of performance and cost trade-off on 
% two classic benchmark datasets, 
{RouterBench}%\footnote{\url{https://huggingface.co/datasets/withmartian/routerbench}}
 \citep{hu2024routerbench}, 
open-LLM-leaderboard-v2\
%\footnote{\url{https://huggingface.co/spaces/open-llm-leaderboard/open\_llm\_leaderboard}} 
\citep{open-llm-leaderboard-v2}, and our new \newdata%\footnote{\href{https://huggingface.co/datasets/CARROT-LLM-Routing/SPROUT}{\texttt{https://huggingface.co/datasets/CARROT-LLM-Routing/SPROUT}}} 
dataset. As a sneak peek, in Figure \ref{fig:IBMMix-spider}, we present the ratio of CARROT's performance to GPT-4o's \citep{openai2024gpt4} on several key benchmarks across diverse use cases represented in \newdata. At $30 \%$ of the cost, CARROT matches or exceeds the performance of GPT-4o on each benchmark. %Collecting adequate data to train a router is quite expensive as we need to generate and evaluate with every LLM for every query.

\subsection{Related literature}
\label{sec:related-work}

\paragraph{Performance vs cost trade-off in LLM predictions.} Several recent studies have explored optimizing the cost and performance trade-offs in the implementation of
large-language models (LLMs). LLM-BLENDER \citep{jiang2023llm} ensembles outcomes from
multiple LLMs to select the best response. Frugal-ML, Frugal-GPT \citep{chen2020frugalml,chen2024frugalgptTMLR} and FrugalFoE \cite{wang2023fusing} employ an LLM cascade to sequentially query LLMs until a reliable response is found. AutoMix \citep{madaan2023automix} relies on a smaller model to self-verify its response before potentially considering a larger model. While these approaches rely
on multiple LLM queries, our approach routes each query to a single LLM, an approach also considered in \citet{hu2024routerbench}. We complement these works by providing a statistically principled approach to learning this performance vs. cost trade-off. 

\begin{wrapfigure}[22]{r}{0.45\textwidth}
\vspace{-0.4cm}
  \begin{center}
     \includegraphics[width=\linewidth]{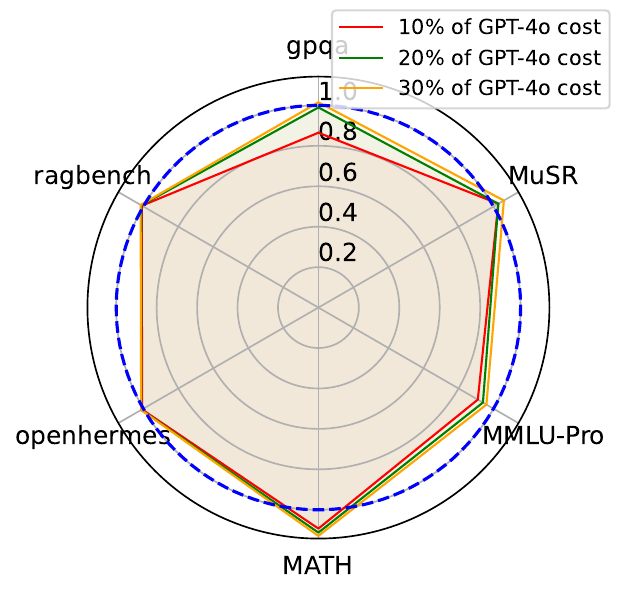} 
  \end{center}
  \caption{Percent of GPT-4o performance achieved by CARROT across datasets at various discounted costs, where the blue dotted line indicates similar ($100\%$) performance to GPT-4o.
  }
\label{fig:IBMMix-spider}
\end{wrapfigure}

\paragraph{Ensemble learning.} The routing problem is closely related to ensemble learning that combines multiple models to obtain better performance. Classical ensemble methods include bagging (bootstrap aggregating), boosting, and
stacking (model blending) \citep{breiman1996bagging,breiman1996stacked,freund1996experiments,friedman2001greedy,wolpert1992stacked}. Most of these works implicitly assume that the models in the ensemble have \emph{similar expertise}, and
thus it is beneficial to aggregate their predictions, whereas in our case, models may have \emph{complementary expertise}, and averaging their outputs might be detrimental because most of them may not be suitable for an input. Therefore, we choose to predict using the model with the best outcome, rather than aggregating them. 

\paragraph{Minimax studies in non-parametric classification.} One of the earliest works on the minimax rate of convergence in non-parametric classification is \citet{audibert2007Fast}. These techniques were later adopted for investigating the ability of transfer learning under a distribution shift
% ; some prominent examples being
\citep{kpotufe2018Marginal,cai2019Transfer,maity2022minimax}. All of these works consider binary classification with $0/1$ loss. In comparison, our minimax investigation differs on two fronts: we extend the settings to classification with more than two classes and general cost functions. 
% To quantify the difficulty of classification in our extended framework through a margin condition, as done in \citet{audibert2007Fast} for binary classification with $0/1$ loss, we needed to extend the notion of margin appropriately (\cf\ eq. 
 % \eqref{eq:margin}). 

% \paragraph{Multiobjective learning}

\section{Routing Problem and Plug-in Approach}
\label{sec:setup}

\subsection{Notation and preliminaries}

To begin, let us introduce our notation. We have $M$ pre-trained LLMs indexed as $m \in [M] = \{1, \dots, M\}$ and $K$ metrics indexed as $k\in [K] = \{1, \dots, K\}$. %Throughout our paper, we index the models as $m \in [M]$ and the metrics as $k \in [K]$. 
We denote a generic input or query as $X\in \cX$, where $\cX$ is the space of inputs.  Thus, for any input $X$, the metrics of interest are stored in a $M\times K$ matrix. We denote this matrix as $Y \in \reals^{M \times K}$, whose $(m, k)$-th entry $[Y]_{m, k}$ is the metric value for obtaining a prediction from the $m$-th model evaluated with respect to $k$-th metric.  For all metrics, we assume that a lower value is preferred. With this convention, we shall also refer to them as risks. For a probability distribution $P$ in the sample space $\cX \times \reals^{M\times K}$ we assume that the training dataset $\cD= \{(X_i, Y_i)\}_{i = 1}^n$ is an $\iid$ sample from $P$. 

 For the probability $P $ defined on the space $\cX \times \reals^{M \times K}$, we denote the marginal distribution of $X$ by $P_X$. Let us denote $\supp(\cdot)$ as the support of a probability distribution. Within the space $\reals^d$, we denote $\Lambda_d$ as the Lebesgue measure, $\|\cdot\|_2$ and $\|\cdot\|_\infty$ as the $\ell_2$ and $\ell_\infty$-norms, and $\cB(x, r, \ell_2)$ and $\cB(x, r, \ell_\infty)$ as closed balls of radius $r$ and centered at $x$ with respect to the $\ell_2$ and $\ell_\infty$-norms. 

%To describe the approach, let us introduce some notation. We have $M$ pre-trained LLMs indexed as $m \in [M] = \{1, \dots, M\}$ and $K$ metrics indexed as $k\in [K] = \{1, \dots, K\}$. %Throughout our paper, we index the models as $m \in [M]$ and the metrics as $k \in [K]$. 
%We denote a generic input or query as $X\in \cX$, where $\cX$ is the space of inputs.  Thus, for any input $X$, the metrics of interest are stored in a $M\times K$ matrix. We denote this matrix as $Y \in \reals^{M \times K}$, whose $(m, k)$-th entry $[Y]_{m, k}$ is the metric value for obtaining a prediction from the $m$-th model evaluated with respect to $k$-th metric.  For all metrics, we assume that a lower value is preferred. With this convention, we shall also refer to them as risks. For a probability distribution $P$ in the sample space $\cX \times \reals^{M\times K}$ we assume that the training dataset $\cD= \{(X_i, Y_i)\}_{i = 1}^n$ is an $\iid$ sample from $P$. 

\subsection{The Routing Problem}
We will consider a convex combination of our $K$ metrics with coefficients $\mu \in \Delta^{K-1} \triangleq \{(\mu_1, \dots, \mu_K): \mu_k \ge 0, \sum_k \mu_k = 1\}$ and a generic point $(X, Y)\sim P$. The $\mu$-th convex combination of the risks (or, $\mu$-th risk) can be written as $Y\mu\in \reals^{M}$, with the risk incurred for obtaining a prediction from the $m$-th model is 
\[
\textstyle [Y\mu]_m = \sum_{k = 1}^K [Y]_{m, k} \mu_k\,.
\] We want to learn a predictive router $g: \cX \to [M]$, that takes $X$ as an input and predicts the index of the LLM to be used for inference. The average $\mu$-th risk for using the router $g$ is 
\begin{equation}\label{eq:RM}
  \textstyle   \cR_P(g, \mu) = \Ex\big [\sum_{m = 1}^M[Y\mu]_m \bbI\{g(X) = m\} \big]\,. 
\end{equation} For a given $\mu$ let us refer to the minimizer $g_\mu^\star$ as an oracle router. The objective of the routing problem is to learn the oracle routers $g_\mu^\star$ at every value of $\mu$. 

\subsection{Plug-In Approach}
While one may minimize an empirical risk corresponding to $\cR_P(g, \mu)$ to estimate the oracle router at a particular $\mu$, this approach is not scalable, any small change in $\mu$ would require refitting a new router. Given this, we develop a plug-in approach which lets us estimate the oracle routers at every value of $\mu$. The key intuition lies within an explicit form of the $g_\mu^\star$ that we provide in the next lemma. 
\begin{lemma} \label{lemma:oracle-router}
    Let us define $\Phi(x) = \Ex [Y\mid X = x]$ and $\eta_{\mu, m}(x) = \sum_{k = 1}^K \mu_k [\Phi (x)]_{m, k} $. Then for any $\mu\in \Delta^{K-1}$ the oracle router that minimizes $\cR_P(g, \mu)$ is 
    \begin{align} \label{eq:oracle-router-2}
        \textstyle g_\mu^\star(X) & \textstyle = \argmin_m ~ \eta_{\mu, m} (X) \textstyle  = \argmin_m \big\{ \sum_{k = 1}^K \mu_k [\Phi (X)]_{m, k} \big\} \, \nonumber.
    \end{align}
\end{lemma}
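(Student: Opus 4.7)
The plan is to apply the tower property of conditional expectation, reduce to a pointwise minimization problem over $[M]$, and read off the $\argmin$ directly.

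First I would rewrite the risk by conditioning on $X$:
\begin{equation*}
  \cR_P(g, \mu) = \Ex\Big[\sum_{m=1}^M \Ex\big[[Y\mu]_m \bbI\{g(X) = m\} \mid X\big]\Big].
\end{equation*}
Since $g(X)$ is $\sigma(X)$-measurable, the indicator $\bbI\{g(X)=m\}$ can be pulled outside the inner expectation. By linearity of $\mu \mapsto Y\mu$ and the definition of $\Phi(x) = \Ex[Y\mid X=x]$, we have
\begin{equation*}
  \Ex[[Y\mu]_m \mid X = x] = \sum_{k=1}^K \mu_k \,\Ex[[Y]_{m,k}\mid X = x] = \sum_{k=1}^K \mu_k [\Phi(x)]_{m,k} = \eta_{\mu,m}(x).
\end{equation*}

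Substituting back, I obtain the cleaner representation
\begin{equation*}
  \cR_P(g,\mu) = \Ex\Big[\sum_{m=1}^M \eta_{\mu,m}(X)\,\bbI\{g(X)=m\}\Big] = \Ex\big[\eta_{\mu,g(X)}(X)\big],
\end{equation*}
where the last equality uses that the indicators $\{\bbI\{g(X)=m\}\}_{m\in[M]}$ form a partition of unity, so the sum collapses to the single term indexed by $g(X)$.

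The minimization now decouples across $x$: for any candidate router $g$ and for each $x \in \cX$, the integrand $\eta_{\mu,g(x)}(x)$ is clearly bounded below by $\min_{m\in[M]} \eta_{\mu,m}(x)$, and this lower bound is attained by the choice $g(x) = \argmin_m \eta_{\mu,m}(x)$. Plugging this pointwise minimizer into the expectation gives a router achieving the global minimum of $\cR_P(\,\cdot\,,\mu)$, which establishes \eqref{eq:oracle-router-2}. There is no real obstacle here; the only minor subtlety is that the $\argmin$ need not be unique, so $g_\mu^\star$ is defined only up to a tie-breaking rule on the (possibly empty) set $\{x: \argmin_m \eta_{\mu,m}(x)\text{ is not a singleton}\}$, and any measurable selection from this argmin yields a valid oracle router.
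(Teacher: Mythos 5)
Your proof is correct and follows essentially the same route as the paper's: condition on $X$, use that $\bbI\{g(X)=m\}$ is $\sigma(X)$-measurable to rewrite the risk as $\Ex[\sum_m \eta_{\mu,m}(X)\bbI\{g(X)=m\}]$, and minimize pointwise over $m$. Your extra remarks on the partition-of-unity collapse and on tie-breaking/measurable selection are fine elaborations of the same argument, which the paper states more tersely.
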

The key conclusion of \ref{lemma:oracle-router} is the expression $ g_\mu^\star(X)= \argmin_m \{ \sum_{k = 1}^K \mu_k [\Phi (X)]_{m, k} \} $. %where we have isolated the effects of $\mu$. 
It suggests a straightforward approach to estimate $g_\mu^\star(X)$ at all values of $\mu$.  Namely, we only need to plug-in an estimate of $\Phi(X) = \Ex[Y \mid X]$ to the expression of $g_\mu^\star(X)$. Compared to minimizing empirical risk at different values of $\mu$, this plug-in approach is more scalable if the practitioner plans on tuning $\mu$. %In addition to estimating the routers, we also estimate the average risks 
%\[
%\textstyle \cR(\mu, k) = \Ex \big[\sum_{m = 1}^M[Y]_{m, k} \bbI\{g_\mu^\star(X) = m\}\big]
%\]
%on a test split of the dataset, which would be useful for understanding the trade-off across different risks.
%We summarize our approach in algorithm \ref{box:CARROT}. 
 \begin{tcolorbox}[enhanced,title=CARROT,
        colframe=blue!40!black,
        colback=blue!2!white,
        fonttitle=\bfseries,
      attach boxed title to top text left={xshift=30mm,yshift=-2.5mm},
      boxed title
      style={size=small,colframe=blue!40!black,colback=blue!40!black}]
      \label{box:CARROT}
  CARROT is implemented in the following steps:
  \begin{enumerate}
      \item Learn an estimate $\widehat \Phi (X)$ of $\Phi(X)$ using a training split of a routing data set $\cD_{\text{tr}}$. 
      \item For a given convex combination of interest $\mu$, produce the router $\widehat g_\mu(X) = \argmin_m \widehat \eta_{\mu, m}(X)$ where $\widehat \eta_{\mu, m}(X) =  \sum_{k = 1}^K \mu_k [\widehat \Phi (X)]_{m, k}   $. 
  \end{enumerate}
\end{tcolorbox}
\section{Statistical efficiency of CARROT} 
\label{sec:lower-bound}

In this section we establish that, under certain conditions, the plug-in approach to routing is minimax optimal. First we establish an information theoretic lower bound on the sample complexity for learning the oracle routers (\cf\ Theorem \ref{thm:lower-bound}). Next, we establish an upper bound for the minimax risk of plug-in routers (\cf\ Theorem \ref{thm:upper-bound}). Finally, we show that under sufficient conditions on the estimates of $\Ex[Y\mid X]$ the sample complexity in the upper bound matches the lower bound. We will also generalize slightly to the setting where the last $K_2$ metrics are known functions of $X$, \ie\ for $m \in [M], k \in \{K - K_2 +1 , \dots K\}$ there exist known functions $f_{m, k}: \cX \to \reals$ such that $[Y]_{m, k} = f_{m, k}(X)$. Since $\Ex[[Y]_{m, k}\mid X] = f_{m, k}(X)$ are known for $k \ge K - K_2 +1 $ they don't need to be estimated. 

\subsection{Technical Assumptions}

%Let us discuss a notational convention for $g_\lambda^\star(X)$. The minimum can be attained at multiple $m$'s. In that case, $g_\lambda^\star(X) \subset [M]$. However, when the minimum is uniquely attained, the $g_\lambda^\star(X)$ refers to both the index $m_X$ where the minimum is attained and the singleton set $\{m_X\} \subset [M]$. The distinctions should be clear from the contexts.

%We also assume that the last $K_2$ many metrics are known functions of $X$, \ie\ for $m \in [M], k \in \{K - K_2 +1 , \dots K\}$ there exist known functions $f_{m, k}: \cX \to \reals$ such that $[Y]_{m, k} = f_{m, k}(X)$. Since $\Ex[[Y]_{m, k}\mid X] = f_{m, k}(X)$ are known for $k \ge K - K_2 +1 $ within the Algorithm \ref{box:CARROT}, they don't need to be estimated. We shall see its consequence in the study sample complexity. Define $K_1 = K - K_2$.    

The technical assumptions of our minimax study are closely related to those in investigations of non-parametric binary classification problems with $0/1$ loss functions, \eg\  \citet{cai2019Transfer,kpotufe2018Marginal,maity2022minimax,audibert2007Fast}. In fact, our setting generalizes the classification settings considered in these papers on multiple fronts: (i) we allow for general loss functions, (ii) we allow for more than two classes, and (iii) we allow for multiple objectives. %So, before we describe the assumptions, 

To clarify this, we discuss how binary classification is a special case of our routing problem. %This connection will be later used for adapting the standard assumptions considered in these papers to our setting. 

\begin{example}[Binary classification with $0/1$-loss] \label{example:binary-classification}
    Consider a binary classification setting with $0/1$-loss: we have the pairs $(X, Z) \in \cX \times \{0, 1\}$ and we want to learn a classifier $h: \cX \to\{0, 1\} $ to predict $Z$ using $X$. This is a special case of our setting with $M = 2$ and $K= 1$, where for $m \in \{0, 1\}$ the $[Y]_{m, 1} = \bbI\{Z \neq m\}$. Then the risk for the classifier $h$, which can also be thought of as a router, is 
\begin{align*}
\textstyle \cR_P(h) & \textstyle = \Ex\big[\sum_{m \in \{0, 1\}}[Y]_{m, 1} \bbI\{h(X) = m\} \big] = \Ex\big[ \bbI\{h(X) \neq Z\} \big]\,,
\end{align*} the standard misclassification risk for binary classification. 
\end{example}

% \SM{Mention that for $\lambda = 0$ the oracle router is precisely known. Thus, we only focus on the cases of $\lambda > 0$.}

We assume that $\supp(P_X)$ is a compact set in $\reals^d$. This is a standard assumption in minimax investigations for non-parametric classification problems \citep{audibert2007Fast,cai2019Transfer,kpotufe2018Marginal,maity2022minimax}. 
Next,  we place H\"older smoothness conditions on the functions $\Phi_m^\star$. This controls the difficulty of their estimation. For a tuple $s = (s_1 , \dots, s_d) \in (\bN \cup \{0\})^d$ of $d$ non-negative integers  define $|s| = \sum_{j = 1}^d s_j$ and for a function $\phi: \reals^d\to \reals$ and $x = (x_1, \dots, x_d) \in \reals^d$ define the differential operator, assuming that such a derivative exists: 
\begin{equation}
  \textstyle  D_s(\phi, x) = \frac{\partial^{|s|}\phi(x)}{\partial x_1^{s_1} \dots \partial x_d^{s_d}}\,. 
\end{equation} Using this differential operator we now define the H\"older smoothness condition: 

\begin{definition}[H\"older smoothness]
   For $\beta, K_\beta >0$ we say that $\phi:\reals^d \to \reals$ is $(\beta, K_\beta)$-H\"older smooth on a set $ A \subset \reals^d$ if it is $\lfloor \beta \rfloor$-times continuously differentiable on $A$ and for any $x, y \in A $ 
   \begin{equation}
       |\phi(y) - \phi_x ^{(\lfloor \beta \rfloor)}(y)| \le K_\beta \|x - y\|_2^\beta\,,
   \end{equation} where 
$\phi_x ^{(\lfloor \beta \rfloor)}(y) = \sum_{|s| \le \lfloor \beta \rfloor} D_s(\phi, x) \{\prod_{j = 1}^d(y_j - x_j)^{s_j}\} $ is the $\lfloor \beta \rfloor$-order Taylor polynomial approximation of $\phi(y)$ around $x$. 
\end{definition}
With this definition, we assume the following:
\begin{assumption}\label{assmp:smooth}
    For $m \in [M]$ and $k \in [K_1]$ the 
    % functions $\kappa_m$ and 
    $[\Phi(X)]_{m, k}$ is $(\gamma_{k}, K_{\gamma, k})$-H\"older smooth. 
\end{assumption} 
%The H\"older smoothness assumption controls how well a non-parametric function can be estimated \citep{fan1997local}, where higher smoothness parameters lead to a smaller error in estimation. 
This smoothness parameter will appear in the sample complexity of our  plug-in router. Since the $[\Phi(X)]_{m, k}$ are known for $k \ge K_1 + 1$ we do not require any smoothness assumptions on them.

Next, we introduce the \emph{margin condition}, which quantifies the difficulty in learning the oracle router.  For a given $\mu$ define the margin as the difference between the minimum and second minimum of the risk values: 
{ \begin{equation}\label{eq:margin}
    \begin{aligned}
        & \textstyle \Delta_\mu(x) =  
    \begin{cases}
       \min\limits_{m \notin g_\mu(x)} \eta_{\mu, m}(x) - \min\limits_m \eta_{\mu, m}(x) & \text{if} ~ g_\mu^\star(x) \neq [M]\\ 
       0 & \text{otherwise}.
       \end{cases} 
    \end{aligned}
\end{equation}}

The margin determines the difficulty in learning the oracle router. A query $X$ with a small margin gap is difficult to route, because to have the same prediction as the oracle, \ie\  $\argmin_{m} \hat \eta_{\mu, m}(X) = \argmin_{m} \eta_{\mu, m}^\star(X)$ we need to estimate $ \eta_{\mu, m}^\star(X)$ with high precision. In the following assumption, we control the probability of drawing these ``difficult to route'' queries.

\begin{assumption}[Margin condition]\label{assmp:margin}
    For $\alpha, K_\alpha >0$ and any $t > 0$ the margin $\Delta_{\mu}$ \eqref{eq:margin} satisfies: \begin{equation}
        P_X \big\{0 < \Delta _\mu(X) \le t\big \}  \le K_\alpha t^{\alpha}\,. 
    \end{equation}
\end{assumption}
%From Proposition 3.4 of \cite{audibert2007Fast}, if $\alpha  (1 \wedge \gamma_{k})  \ge d$ for some $k$ then if $\mu = e_k$ for which the $g_\mu^\star$ never changeswe argue that when $\alpha  (1 \wedge \gamma_{k})  \ge d$ for some $k$ then for $\mu = e_k$ for which the $g_\mu^\star$ never changes its decision within the interior of $\supp(P_X)$. 
Following \citet{audibert2007Fast}, we focus on the cases where $\alpha < d$ and for every $k$ the $\alpha \gamma_k < d$. This helps to avoid trivial cases where routing decisions are constant over $P_X$ for some $\mu$.  %These are trivial cases, which we ignore. Thus, throughout our paper, we assume that $\alpha   < d$ and for every $k$ the $\alpha \gamma_k < d$.  
Next, we assume that $P_X$ has a density $p_X$ that satisfies a strong density condition described below.
% We start by formalizing the problem setup. We assume that the covariate space is $\cX$ is a compact set in $\reals^d$. Next, we assume that the density exists for the marginal probability $P_X$ and satisfies a strong density condition, which is formalized below. 
\begin{assumption}[Strong density condition] \label{assmp:strong-density}
Fix constants $c_0, r_0> 0$ and $0 \le \mu_{\min}  \le \mu_{\max} < \infty$. We say $P_X$ satisfies the strong density condition if its support is a compact $(c_0, r_0)$-regular set and it has density $p_X$ which is bounded: $\mu_{\min} \le p_X (x) \le \mu_{\max} $ for all $x$ within $\supp(P_X)$. A set $A \subset \reals^d$ is $(c_0, r_0)$-regular if it is Lebesgue measurable and %\ie\ for the Lebesgue measure $\Lambda_d$ on $\reals^d$, any Lebesgue measurable set $A \subset \reals^d$ and 
$\text{for any} ~ 0 < r \le r_0, ~  x \in A$ it satisfies
\begin{equation}
    \Lambda_d (A \cap \cB(x, r, \ell_2)) \ge c_0 \Lambda_d(\cB(x, r, \ell_2)). %~ \text{for any} ~ 0 < r \le r_0, ~  x \in A,
\end{equation} %and the density $p_X$ is bounded as: $\mu_{\min} \le p_X (x) \le \mu_{\max} $ for all $x$ within $\supp(P_X)$. 
\end{assumption}
This is another standard assumption required for minimax rate studies in nonparametric classification problems \citep{audibert2007Fast,cai2019Transfer}. All together, we define 
% $\cP(c_0, r_0, \mu_{\min}, \mu_{\max}, \beta_{m ,k}, K_{\beta, m, k}, \alpha, K_\alpha)$, or simply
$\cP$, as the class of probabilities $P$ defined on the space $\cX \times \cY$ for which $P_X$  is  compactly supported and satisfies the strong density assumption \ref{assmp:strong-density} with parameters $(c_0, r_0, \mu_{\min}, \mu_{\max})$, and the H\"older smoothness assumption \ref{assmp:smooth} and the $(\alpha, K_\alpha)$-margin condition in Assumption \ref{assmp:margin} hold. We shall establish our minimax rate of convergence within this probability class.

% In this section we provide a ``mini-max'' investigation on learning of the oracle $g^\star$. For this purpose, assume that 
% \begin{itemize}
    
%     \item {\bf Strong density condition:} $X$ is distributed in the space $[0, 1]^d$ that satisfies the strong density condition \citep{audibert2007Fast}.
%     \item {\bf H\"older smoothness:} The functions $\kappa_\lambda(x, l) \triangleq\ell (f^\star(x), f_l(x)) + \lambda c_l (x)$ are $\alpha$-H\"older smooth. 
% \item {\bf Noise condition:} For $t > 0$ there exists a $\gamma > 0$ such that 
% \[
% \textstyle P_X \big ( 0 <\max_l \big | \kappa_\lambda(x, l) - \min_{l' \neq l} \kappa_\lambda (x, l') \big | \le t \big ) = \cO(t^\gamma)
% \]
% \end{itemize}
% Denote $\cP$ as the class of all probabilities that satisfies the above conditions. 

% \SM{revise the lower bound in light of $\lambda  \ge c n^{\frac{\beta - \nicefrac d\alpha }{2\beta +d }}$ requirement.}
\subsection{The lower bound} 
Rather than the actual risk $\cR_P(\mu, g)$, we establish a lower bound on the excess risk:
\begin{equation}\label{eq:excess-risk}
    \cE_P(\mu, g) = \cR_P(\mu, g) - \cR_P(\mu, g_\mu^\star)\,,
\end{equation} that compares the risk of a proposed router to the oracle one. We denote $\Gamma = \{g: \cX \to [M]\}$ as the class of all routers. For an $n \in \bN$ we refer to the map $A_n: \cZ^n \to \Gamma$, which takes the dataset $\cD_n $ as an input and produces a router $A_n(\cD_n): \cX \to [M]$, as an algorithm. Finally, call the class of all algorithms that operate on $\cD_n$ as $\cA_n$. The following Theorem describes a lower bound on the minimax risk for any such algorithm $A_n$. 
\begin{theorem}\label{thm:lower-bound}
    For an $n \ge 1$  and $A_n \in \cA_n$ define  $\cE_P(\mu, A_n) = \Ex_{\cD_n}\big[\cE_P\big(\mu, A_n(\cD_n)\big)\big]$ as the excess risk of an algorithm $A_n$. There exists a constant $c> 0$ that is independent of both $n$ and $\mu$ such that for any $n\ge 1$ and $\mu\in \Delta^{K-1}$ we have the lower bound
    \begin{equation}\label{eq:lower-bound}
      \textstyle  \min\limits_{A_n \in \cA_n} \max\limits_{P \in \cP} ~~ \cE_P(\mu, A_n) \ge c \big \{\sum_{k = 1}^{K_1} \mu_k n^{- \frac{\gamma_k}{2\gamma_k + d}}\big\}^{1+\alpha} \,.
    \end{equation}
\end{theorem}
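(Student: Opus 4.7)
The plan is to adapt the Assouad-style minimax construction from \citet{audibert2007Fast} and generalize it to multiple objectives with heterogeneous smoothness parameters. It suffices to argue for $M = 2$ (a larger $M$ only makes the routing problem harder, so the bound transfers) and to assume $\mu_k > 0$ for all $k \in [K_1]$ (terms with $\mu_k = 0$ drop out of both sides). I would build a finite family of hypotheses $\{P_\sigma : \sigma \in \{0, 1\}^J\} \subset \cP$ indexed by bit-strings, such that any router pays an excess risk proportional to the Hamming distance between its implicit $\sigma$ and the truth, while $n$ samples suffice to identify only a vanishing fraction of those bits.

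The construction partitions $\supp(P_X)$ into disjoint, positively separated sub-regions $R_0, R_1, \dots, R_{K_1}$, one per unknown metric. Inside $R_k$ I tile with $\ell_\infty$-cubes of side $h_k = c_1 n^{-1/(2 \gamma_k + d)}$ and mark $J_k$ of them, where $J_k$ is the integer part of $c_2 (\mu_k \epsilon_k)^\alpha / h_k^d$ and $\epsilon_k = h_k^{\gamma_k}$. In the $j$-th marked cell only the $k$-th metric of the two models is perturbed: $[\Phi(x)]_{1,k} = \tfrac{1}{2} + (2 \sigma_j - 1)\,\epsilon_k \phi_j(x)$ and $[\Phi(x)]_{2,k} = \tfrac{1}{2} - (2 \sigma_j - 1)\,\epsilon_k \phi_j(x)$, where $\phi_j$ is a standard $C^\infty$ bump of unit amplitude supported in that cell; elsewhere the $[\Phi]_{m, k}$ are fixed at constants chosen so that a single model is strictly preferred and $\Delta_\mu$ is bounded below away from zero outside the marked cells. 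The scaling $\epsilon_k = h_k^{\gamma_k}$ preserves Assumption~\ref{assmp:smooth} with a H\"older constant depending only on $\phi$; a uniform density on $\supp(P_X)$ gives Assumption~\ref{assmp:strong-density}; and the choice of $J_k$ matches Assumption~\ref{assmp:margin} tightly, since the global margin on a marked cell of $R_k$ equals $\mu_k \epsilon_k |\phi_j(x)|$.

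The key local computations are standard. The neighboring KL divergence, which governs Assouad's lemma, is localized to a single cell and satisfies $\mathrm{KL}(P_\sigma^n \| P_{\sigma'}^n) \le C\, n\, \epsilon_k^2\, h_k^d = C\, n\, h_k^{2 \gamma_k + d} = C\, c_1^{2 \gamma_k + d}$, and so can be made smaller than any prescribed constant by choosing $c_1$ small. Applying Assouad's lemma to the $J_k$ bits in each $R_k$ independently (contributions add because the excess risk decomposes as a sum over cells), one obtains
\begin{equation*}
\min_{A_n \in \cA_n} \max_{P \in \cP} \cE_P(\mu, A_n) \ge c \sum_{k = 1}^{K_1} J_k \mu_k \epsilon_k h_k^d \ge c' \sum_{k = 1}^{K_1} (\mu_k \epsilon_k)^{1 + \alpha}.
\end{equation*}
Finally, the power-mean inequality $\sum_k a_k^{1 + \alpha} \ge K_1^{-\alpha} \big(\sum_k a_k\big)^{1 + \alpha}$ (valid for $\alpha \ge 0$) converts this into a lower bound of order $\big(\sum_k \mu_k\, n^{-\gamma_k / (2 \gamma_k + d)}\big)^{1 + \alpha}$, with $K_1^{-\alpha}$ and all other bounded factors absorbed into the final constant $c$.

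The main obstacle is bookkeeping. All three assumptions must hold uniformly across the exponentially many $\sigma$, and in particular the margin condition is a \emph{global} statement about $\Delta_\mu$, so perturbations in different $R_k$ must not interact to inflate the low-margin set beyond the single-region contribution I already charged. This is why I keep the $R_k$ positively separated, perturb only one metric per region, and choose the unperturbed $[\Phi]_{m,k}$ so that the baseline global margin is large on the complements of the marked cells. A secondary subtlety is the $\mu$-uniformity of $c$: the construction's tuning depends on $\mu$ only through the explicit $\mu_k$ factors inside the final sum, so the same constant $c$ works for every $\mu \in \Delta^{K-1}$.
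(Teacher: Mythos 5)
Your high-level route (a single simultaneous Assouad construction over positively separated regions, one per unknown metric, followed by a power-mean step) differs from the paper, which instead proves a separate per-metric bound $c_k\{\mu_k n^{-\gamma_k/(2\gamma_k+d)}\}^{1+\alpha}$ via a Gilbert--Varshamov packing and generalized Fano's lemma and then converts the maximum over $k$ into the sum by a max-versus-average argument. The route could in principle work, but as written your construction has a genuine gap: the smooth-bump perturbation combined with a \emph{uniform density on the full support} cannot satisfy the margin condition (Assumption \ref{assmp:margin}) for all $t>0$ once $\alpha\gamma_k>1$, a regime the theorem must cover since only $\alpha\gamma_k<d$ is assumed. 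Because $[\Phi]_{m,k}=\tfrac12\pm\epsilon_k\phi_j$ must be $(\gamma_k,K_{\gamma,k})$-H\"older and identically $\tfrac12$ off the cell, $\phi_j$ is forced to vanish at the cell boundary at rate at least $\mathrm{dist}^{\gamma_k}$, so the sub-level set $\{0<\Delta_\mu\le t\}$ inside a marked cell has relative measure of order $(t/(\mu_k\epsilon_k))^{1/\gamma_k}$; summing over the $J_k\asymp(\mu_k\epsilon_k)^{\alpha}/h_k^d$ marked cells gives mass of order $(\mu_k\epsilon_k)^{\alpha-1/\gamma_k}t^{1/\gamma_k}$, which exceeds $K_\alpha t^{\alpha}$ for small $t$ whenever $\alpha>1/\gamma_k$. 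This is exactly why \citet{audibert2007Fast}, and the paper following them, puncture the support: $P_X$ is uniform on a union of well-separated small balls and the perturbation is \emph{constant at full amplitude} on each ball, so on $\supp(P_X)$ the margin only takes the values $0$ or the full gap and the condition is trivial to verify for every $t$.

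Two further points need repair. First, your prescription that off the marked cells the constants make "a single model strictly preferred with $\Delta_\mu$ bounded below away from zero" is incompatible with the bits mattering: if those constants are global, the same fixed advantage persists inside the marked cells, and since $\mu_k\epsilon_k\to0$ the perturbation eventually cannot flip the oracle router, so the hypotheses are not separated and Assouad yields nothing; the correct choice (as in the paper) is an exact tie, $\Delta_\mu=0$, off the perturbed balls, which the margin condition permits because it only constrains $\{0<\Delta_\mu\le t\}$. Second, the claimed $\mu$-uniformity of the constant is not automatic: with $h_k=c_1n^{-1/(2\gamma_k+d)}$ your $J_k\asymp(\mu_k\epsilon_k)^{\alpha}/h_k^d$ can fall below $1$ when $\mu_k$ is small, in which case the $k$-th region contributes nothing and the bound $\sum_k(\mu_k\epsilon_k)^{1+\alpha}$ is not established (e.g.\ when all unknown-metric weights are tiny). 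The paper sidesteps this by building the hard family at bandwidth $h\propto\mu_{k_0}^{1/\gamma_{k_0}}n^{-1/(2\gamma_{k_0}+d)}$ with amplitude scaled by $\mu_{k_0}^{-1}$, so the cell count and KL bounds are uniform over $\mu\in\Delta^{K-1}$; you would need an analogous rescaling (or a separate treatment of small $\mu_k$) to justify a $\mu$-independent $c$. Your reduction to $M=2$ and the KL/Assouad arithmetic are otherwise fine.
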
 
This result is a generalization of that in \citet{audibert2007Fast}, which considers binary classification. 
\begin{remark} \label{remark:minimax-lower-bound}
    Consider the binary classification in Example \ref{example:binary-classification}. Since $K = 1$, the lower bound simplifies to $\cO(n^{-\nicefrac{\gamma_1 (1+ \alpha)}{2\gamma_1 + d}})$,  which matches with the rate in \citet[Theorem 3.5]{audibert2007Fast}. 
    Beyond $0/1$ loss, our lower bound also establishes that the rate remains identical for other classification loss functions as well. 

\end{remark}

\subsection{The upper bound }\label{sec:upper-bound}
Next, we show that if the algorithm $A_n$ corresponds to CARROT, the performance of $\hat{g}_{\mu}$ matches the lower bound in Theorem \ref{thm:lower-bound} (\cf\ equation \ref{eq:lower-bound}). En-route to attaining $\hat{g}_{\mu}$, we need an estimate $\widehat \Phi(X)$ of $\Phi(X) = \Ex_P[Y \mid X ]$. %In this section, we ask what is the needed performance of this estimate? More importantly, we also study if the plug-in approach leads to statistically efficient estimates of the oracle routers, or in other words does the performance of $\hat{g}_{\mu}$ match \ref{eq:lower-bound} 
%Our strategy will consist of two steps: 
%\begin{itemize}
   % \item First,  we establish an upper bound on the rate of convergence for excess risk \eqref{eq:excess-risk} for the plug-in router in terms of the rate of convergence for $\widehat \Phi(X)$. 
   % \item Then we discuss the desired rate of convergence in $\widehat \Phi(X)$ so that the upper bound has the identical rate of convergence to the lower bound \eqref{eq:lower-bound}. Later in Appendix \ref{sec:reg-fn-estimate} we provide an estimate $\widehat \Phi(X)$ that has the required convergence rate. 
%\end{itemize}
%These two steps, together with the lower bound in \eqref{eq:lower-bound} establish that our plug-in router achieves the best possible rate of convergence in excess risk. 
We begin with an assumption for a  rate of convergence for $[\widehat \Phi(X)]_{m, k}$. 
\begin{assumption} \label{assmp:convergence}
    For some constants $\rho_1, \rho_2 > 0$ and any $n \ge 1$ and $t > 0$ and almost all $X$ with respect to the distribution $P_X$ we have the following concentration bound:
    \begin{align}\label{eq:concentration-phi}
        \max_{P\in \cP} P \big \{ \max_{m, k} a_{k, n}^{-1}\big |[\widehat \Phi (X)]_{m, k} - [\Phi  (X)]_{m, k}\big |
        \ge t\big \}  
        \le  \rho_1 \exp\big (- \rho_2  t^2 \big )\,,  
    \end{align}  where for each $k$ the  $\{a_{k,n}; n \ge 1\}\subset (0, \infty)$ is a sequence that decreases to zero. 
\end{assumption}
Using this high-level assumption, in the next theorem, we establish an upper bound on the minimax excess risk for CARROT that depends on both $a_{k, n}$ and $\mu$.  
\begin{theorem}[Upper bound]\label{thm:upper-bound}
  Assume \ref{assmp:convergence}.   If all the $P\in \cP$ satisfy the margin condition \ref{assmp:margin} with the parameters $(\alpha, K_\alpha)$ then there exists a $K> 0$ such that for any $n \ge 1$ and $\mu\in \Delta^{K-1} $ the excess risk for the router $\widehat g_\mu$ in Algorithm \ref{box:CARROT} is upper bounded as 
    \begin{equation}
        \max_{P\in \cP} \Ex_{\cD_n}\big [\cE_P(\widehat g_\lambda,\lambda)\big ] \le\textstyle K \big \{\sum_{k = 1}^{K_1} \mu_k a_{k, n}\big\}^{1+\alpha} \,.
    \end{equation}
\end{theorem}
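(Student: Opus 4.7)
The plan is to reduce the excess risk to a pointwise bound on $\eta_{\mu, \widehat g_\mu(X)}(X) - \eta_{\mu, g_\mu^\star(X)}(X)$ and then integrate using Assumption \ref{assmp:convergence} together with the margin condition \ref{assmp:margin}. By the tower property applied to \eqref{eq:RM},
$$\cE_P(\widehat g_\mu, \mu) = \Ex\bigl[\eta_{\mu, \widehat g_\mu(X)}(X) - \eta_{\mu, g_\mu^\star(X)}(X)\bigr],$$
and the integrand vanishes unless $\widehat g_\mu(X) \notin g_\mu^\star(X)$. On that event, the optimality of $\widehat g_\mu$ for $\widehat\eta_{\mu,\cdot}(X)$ and a standard rearrangement give the two-sided bound
$$\Delta_\mu(X) \le \eta_{\mu, \widehat g_\mu(X)}(X) - \eta_{\mu, g_\mu^\star(X)}(X) \le 2 \max_{m}\bigl|\widehat\eta_{\mu, m}(X) - \eta_{\mu, m}(X)\bigr|.$$
Since $[\widehat\Phi(X)]_{m,k} = [\Phi(X)]_{m,k}$ for $k > K_1$ by construction, the right-hand side is at most $A_n T_n(X)$, where $A_n = \sum_{k=1}^{K_1} \mu_k a_{k,n}$ and $T_n(X) = \max_{m,\, k \le K_1} a_{k,n}^{-1}\bigl|[\widehat\Phi(X)]_{m,k} - [\Phi(X)]_{m,k}\bigr|$. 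Consequently
$$\cE_P(\widehat g_\mu, \mu) \le 2 A_n\, \Ex\bigl[T_n(X)\, \bbI\{0 < \Delta_\mu(X) \le 2 A_n T_n(X)\}\bigr].$$

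Conditioning on $X$ and invoking Assumption \ref{assmp:convergence}, which provides $P_{\cD_n}(T_n(X) \ge t \mid X) \le \rho_1 e^{-\rho_2 t^2}$ for $P_X$-a.e.\ $X$, a layer-cake identity yields for any $c > 0$
$$\Ex\bigl[T_n(X)\, \bbI\{T_n(X) \ge c\} \mid X\bigr] = c\, P(T_n(X) \ge c \mid X) + \int_c^\infty P(T_n(X) \ge t \mid X)\, dt \le \rho_1\Bigl(c + \frac{1}{2\rho_2 c}\Bigr) e^{-\rho_2 c^2},$$
using the Mill's-ratio bound on the Gaussian tail. The same tail also gives an unconditional bound $\Ex[T_n(X) \mid X] \le C_T$ independent of $X$. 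Substituting $c = \Delta_\mu(X)/(2 A_n)$ shows that, writing $\delta = \Delta_\mu(X)$, the integrand in the outer expectation is at most $2 A_n \min\{C_T,\, C(1 + A_n^2/\delta^2)\, e^{-\rho_2 \delta^2/(4 A_n^2)}\}$ on the event $\delta > 0$.

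The final and most delicate step integrates this against the law of $\delta$, splitting at the threshold $\delta = A_n$. On $\{0 < \delta \le A_n\}$ I use the crude bound $\Ex[T_n \mid X] \le C_T$, which by Assumption \ref{assmp:margin} contributes at most $2 C_T A_n \cdot K_\alpha A_n^{\alpha} = O(A_n^{1+\alpha})$. On $\{\delta > A_n\}$ the exponential factor dominates: the change of variables $u = \delta/A_n \ge 1$ combined with a Stieltjes integration by parts against $F(\delta) = P_X\{0 < \Delta_\mu(X) \le \delta\}$, using $F(A_n u) \le K_\alpha A_n^\alpha u^\alpha$, extracts the factor $A_n^{1+\alpha}$ and leaves a convergent Gaussian-weighted integral in $u$. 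Summing the two regimes yields $\cE_P(\widehat g_\mu, \mu) \le K A_n^{1+\alpha}$, the claim. The main obstacle is precisely this two-regime integration: a naive single-threshold truncation picks up a parasitic logarithm, so one must pair the crude $C_T$ bound on the low-margin piece with the sharp Mill's-ratio bound on the high-margin piece to obtain the sharp $(1+\alpha)$-exponent.
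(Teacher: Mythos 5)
Your proposal is correct in substance and reaches the theorem by the same high-level plug-in/margin strategy, but the execution differs enough from the paper's to be worth comparing. The paper routes everything through a standalone result (Lemma \ref{lemma:excess-risk-bound-generalization}): it expands the excess risk over models via the gaps $\delta_m(X)=\eta_{\mu,m}(X)-\eta_{\mu,g^\star_\mu(X)}(X)$, uses that $\widehat g_\mu(X)=m\neq g_\mu^\star(X)$ forces $\|\widehat\eta_\mu(X)-\eta_\mu(X)\|_\infty\ge \delta_m(X)/2$, and integrates the margin condition by dyadic peeling over shells $\{\tau 2^{i-1}<\Delta_\mu\le \tau 2^{i}\}$ with the elementary bound on $\max_{x\ge y}x e^{-x^2/\tau^2}$; the theorem is then reduced to that lemma by a union bound over the $K_1$ unknown metrics to get concentration of $\widehat\eta_{\mu,m}$ at scale $\sum_k\mu_k a_{k,n}$. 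You instead sandwich the conditional excess between $\Delta_\mu(X)$ and $2\max_m|\widehat\eta_{\mu,m}(X)-\eta_{\mu,m}(X)|$ on the error event, aggregate metrics through the weighted triangle inequality $\max_m|\widehat\eta_{\mu,m}-\eta_{\mu,m}|\le A_n T_n(X)$ (which invokes Assumption \ref{assmp:convergence} exactly once, since the max over $k$ already sits inside that probability --- cleaner than the paper's union bound, and it also avoids the paper's factor-$M$ sum over models), and replace the dyadic peeling by a layer-cake/Mill's-ratio evaluation of $\Ex[T_n\bbI\{T_n\ge c\}\mid X]$ followed by a split at $\Delta_\mu=A_n$ and integration by parts against the margin distribution function; both routes give the same $\{\sum_{k\le K_1}\mu_k a_{k,n}\}^{1+\alpha}$ rate. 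One intermediate display of yours is inaccurate: with $c=\delta/(2A_n)$ the Mill's-ratio step yields the prefactor $\rho_1\big(\delta/(2A_n)+A_n/(\rho_2\delta)\big)$, which grows linearly in $\delta/A_n$, not $C(1+A_n^2/\delta^2)$ as written, so that bound fails when $\delta\gg A_n$; the slip is harmless --- either absorb the linear factor by halving the constant in the exponent or note that your final $u$-integral tolerates an extra power of $u$ --- but it should be corrected in a full write-up.
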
 
% \begin{remark} \label{remark:upper-bound-lambda}
%     Recall the Remark \ref{remark:lower-bound-lambda}, where we discuss that it is easier to route for smaller $\lambda$. Indeed, under Assumption \ref{assmp:convergence} the same holds for $\hat \eta_{\lambda, m}(X) -  \eta_{\lambda, m}^\star(X) = \lambda \{ \widehat \Phi_m(X) - \Phi^\star_m(X)\}$ at a rate $\lambda a_n^{-1}$, which, under the $\alpha$-margin condition (Assumption \ref{assmp:margin}), reveals such a dependence for the minimax upper bound on $\lambda$. 
% \end{remark}

\begin{remark}[Rate efficient routers] \label{cor:efficient-routers}
    When $a_{k, n} = n^{-\nicefrac{\gamma_k}{(2\gamma_k +d)}}$ the upper bound in Theorem \ref{thm:upper-bound} has the $\cO(\{\sum_{k = 1}^{K_1}\mu_k n^{-\nicefrac{\gamma_k}{(2\gamma_k+d)}}\}^{1+\alpha})$-rate, which is identical to the rate in the lower bound (\cf\ Theorem \ref{thm:lower-bound}), suggesting that the minimax optimal rate of convergence for the routing problem is 
\begin{equation}
    \label{eq:minimax-rate}
     \textstyle  \min\limits_{A_n \in \cA_n} \max\limits_{P \in \cP} ~~ \cE_P(A_n, \lambda) \asymp \textstyle  \cO\big ( \big \{\sum_{k = 1}^{K_1} \mu_k n^{- \frac{\gamma_k}{2\gamma_k + d}}\big\}^{1+\alpha}\big ) \,.
\end{equation}
   Following this, we conclude: When $a_{k, n} = n^{-\nicefrac{\gamma_k}{(2\gamma_k +d)}}$ the plug-in approach in Algorithm \ref{box:CARROT}, in addition to being computationally efficient, is also minimax rate optimal. 
%\begin{enumerate}
   % \item When $a_{k, n} = n^{-\nicefrac{\gamma_k}{(2\gamma_k +d)}}$ the plug-in approach in Algorithm \ref{box:CARROT}, in addition to being computationally efficient, is also minimax rate optimal in excess risk. 
   % \item  Following up on the Remark  \ref{remark:minimax-lower-bound}, this result generalizes the minimax study of non-parametric binary classification to (a) more than two classes, and (b) classification loss functions beyond $0/1$-loss. 
%\end{enumerate}
\end{remark} 
An example of an estimator $\widehat{\Phi}$ that meets the needed conditions for $a_{k, n} = n^{-\nicefrac{\gamma_k}{(2\gamma_k +d)}}$ to hold is described in Appendix \ref{sec:reg-fn-estimate}. 
\section{Routing in benchmark case-studies}
\label{sec:routing-application}
We use CARROT (Algorithm \ref{box:CARROT}) to perform routing on several benchmark datasets. %task on two benchmarks: {RouterBench} \citep{hu2024routerbench} and {Open LLM leaderboard v2} \cite{open-llm-leaderboard-v2}. The broad question that we address here is to understand the performance vs. cost trade-off in LLM predictions. 
% \SM{revisit this once we have all the expt. details.}

\subsection{Datasets} \label{sec:datasets}

%\begin{wrapfigure}[17]{r}{0.5\textwidth} % 'r' for right, '0.4\textwidth' for width

%\vspace{-0.3cm}
    %\centering
    %\includegraphics[width=\linewidth]{} % Replace with your image file
    %\caption{This is a wrapped figure.}
    %\label{fig:wrapped}
    %\label{fig:open-llm-leaderboard}
%\end{wrapfigure}

\paragraph{RouterBench:} RouterBench \citep{hu2024routerbench} is a benchmark dataset for routing tasks consisting of approximately 30k prompts and responses from eleven ($M = 11$) different LLMs.  The data includes prompts from 8 benchmarks covering commonsense reasoning, knowledge-based understanding, conversation, math, and coding. %Paired with each question are model responses, estimated costs associated with the responses (which need to be predicted at test time), and finally a performance score on whether a model got the answer correct. All prompts have a single correct answer, and LLM predictions are compared with that correct answer to obtain a performance score as $0/1$, where $1$ corresponds to a correct answer.  
\paragraph{Open LLM leaderboard:} The {Open LLM leaderboard v2}\footnote{\url{https://huggingface.co/spaces/open-llm-leaderboard/open\_llm\_leaderboard}} \citep{open-llm-leaderboard-v2} is an open-source benchmarking platform that comprises responses and evaluations of a collection of LLMs on six benchmarks comprising a diverse collection of tasks. %As before, the dataset includes the questions, responses from LLMs, and their evaluation scores as $0/1$ based on whether the answer was correct. 
\paragraph{\newdata:} We introduce (and evaluate CARROT on) \newdata, a large and diverse dataset designed for training and evaluating routers. \newdata\ integrates $M=15$ state-of-the-art language models (see Table \ref{tab:price_by_token_newdata}) and prompts from 6 benchmarks, including GPQA \citep{rein2023gpqagraduatelevelgoogleproofqa}, MuSR \citep{sprague2024musrtestinglimitschainofthought}, MMLU-Pro \citep{wang2024mmluprorobustchallengingmultitask}, MATH \citep{hendrycks2021measuringmathematicalproblemsolving}, OpenHermes \citep{teknium_openhermes_2.5}, and RAGBench \citep{friel2025ragbenchexplainablebenchmarkretrievalaugmented}. Compared to existing routing benchmarks such as RouterBench, \newdata\ offers several key advantages: 
\begin{enumerate}
    \item \newdata\ encompasses a highly diverse set of questions, including instruction queries.
    % , which is feasible due to the evaluation based on LLaMa-3.1-70b-Instruct. 
    \item Unlike previous benchmarks, it does not rely on few-shot prompting and utilizes chat templates appropriate for each model, making it more representative of real-world use cases.
    \item It leverages LLaMa-3.1-70b-Instruct \citep{grattafiori2024llama3herdmodels} to evaluate LLM responses against the ground truth, similarly to \citet{ni2024mixeval}. This is crucial for evaluating on open-ended instruction queries as well as mitigating errors associated with traditional automatic evaluation methods like exact match.
    % \footnote{Without in-context examples and when using chat templates LLMs tend to perform better, but exact match evaluation is no longer reliable due to varying answer styles of LLMs.}
    \item We provide input and output token counts for each LLM-prompt pair, enabling flexibility when conducting cost-aware analysis.
\end{enumerate}
We have released the \newdata\ in {huggingface }%\footnote{\url{https://huggingface.co/datasets/CARROT-LLM-Routing/SPROUT}} 
 and will open-source a platform that allows practitioners to extend \newdata\ by adding new queries and seamlessly evaluating state-of-the-art models on them. For further details, please refer to Appendix \ref{sec:append:sprout}.
% First,  Second,  Finally, 

%\SM{comment about the diversity in task in these two benchmarks? and how router can help? Connect this to intro.}

\subsection{Plug-in Estimates}
\label{sec: exp-estimators}
CARROT requires an estimate for the function $\Phi^\star_m(X) = \Ex_P[ Y_m\mid X]$. In our benchmark tasks, $Y_m$ is 2-dimensional, consisting of model performance measured as accuracy and model cost measured in dollars. In all routing datasets, $Y_{\text{acc}, m}$ is binary, and thus %thus $\Phi^\star_m(X) = P(Z_m = 1\mid X)$.  
we can view its estimation as a binary classification problem, where our objective is to predict the probability that $m$-th model will answer the question $X$ correctly, \ie\ $P_m(X) = P(Y_{\text{acc}, m} = 1\mid X)$) $Y_{\text{acc}, m}$.  We train several multi-label classification models $\widehat P: \cX \to [0, 1]^M$ on a training data split consisting of $80\%$ of the full dataset, where the $m$-th coordinate of $\widehat{P}(X)$ is the predicted probability that $m$-th model accurately answers the question $X$. In the RouterBench and \newdata\ task the cost must also be estimated. We train multi-label regression models $\widehat C: \cX \to \reals^M$, where $\hat{C}_m(X) =\mathbb{E}[Y_{\text{cost}, m}|X]$ is the estimated cost of calling model $m$ for query $X$. To train the cost or performance predictors we consider two procedures:
%In the RouterBench and \newdata\ datasets, the cost is treated as unknown, while in the Open LLM leaderboard v2, most evaluations are likelihood-based; thus, cost is essentially the length of the input, i.e., a known function. In all cases, the accuracy $Y_{\text{acc}, m}$ is binary, and thus %thus $\Phi^\star_m(X) = P(Z_m = 1\mid X)$.  
%we can view its estimation as a binary classification problem, where our objective is to predict the probability that $m$-th model will answer the question $X$ correctly, \ie\ $P_m(X) = P(Y_{\text{acc}, m} = 1\mid X)$) $Y_{\text{acc}, m}$. Using this intuition, we train several multi-label classification models $\widehat P: \cX \to [0, 1]^M$ on a training data split consisting of $80\%$ of the full dataset, where the $m$-th coordinate of $\widehat{P}(X)$ is the predicted probability that $m$-th model accurately answers the question $X$. %, \ie\ $\widehat \Phi_m(X) = \widehat P(Z_m = 1\mid X)$. 
\begin{enumerate}
    \item  {\bf CARROT (KNN):} We embed the model inputs using the {text-embedding-3-small} model from OpenAI \citep{openai2023textembedding3small}. On these text embeddings, we train a multi-label K-nearest-neighbors (KNN) classifier/regressor.

    \item {\bf CARROT (Roberta):} We fine-tune the pre-trained weights of the {{roberta-base}}\footnote{\url{https://huggingface.co/FacebookAI/roberta-base}} architecture. In order to enhance efficiency, across $m$ we allow $\widehat{P}$ to share the same network parameters, except for the final classification/regression layer. 
\end{enumerate}

\begin{figure}
       \centering
       \begin{subfigure}[b]{0.45\textwidth}
           \centering
           \includegraphics[width=0.8\linewidth]{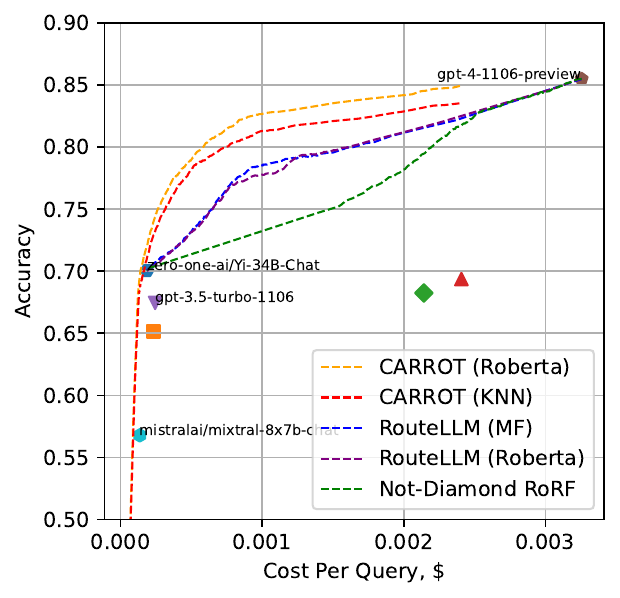} 
           \caption{CARROT vs. Binary Routers on Routerbench.}
           \label{fig:RouterBench-main}
       \end{subfigure}
       \hspace{0.05\textwidth} % Space between subfigures
       \begin{subfigure}[b]{0.45\textwidth}
           \centering
           \includegraphics[width=0.8\linewidth]{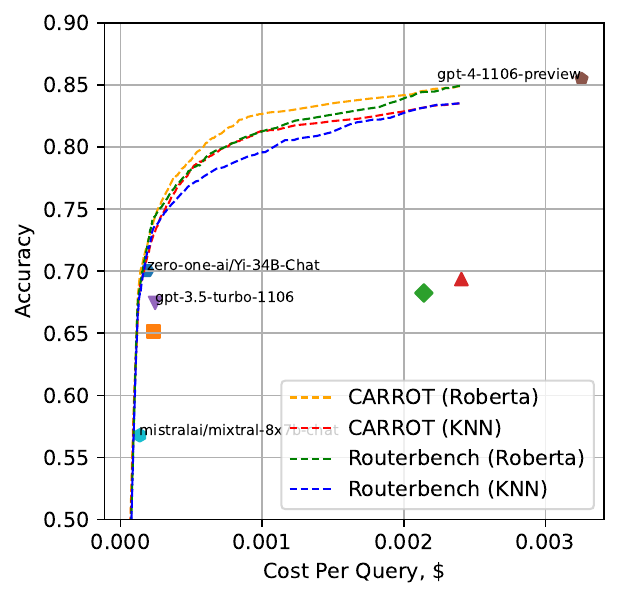}
           \caption{CARROT vs. Routerbench on Routerbench.}
           \label{fig:RouterBench-Routerbench}
       \end{subfigure}
       \caption{Performance of several routers and individual LLMs on test data-split in Routerbench.}
       \label{fig:mainfig}
   \end{figure}

\subsection{Baseline methods}
\emph{Zero Router:} The zero router is a simple check to see if predictive routing is of any value on a given data set. This benchmark randomly assigns prompts to the best performing (in terms of cost/accuracy trade-off) in the dataset.

\emph{Binary Routers:} \citet{ong2024routellmlearningroutellms} (RouteLLM) proposes a collection of methods for learning binary routers from preference data (data consisting of queries $q$ and labels $l_{i,j}$ indicating a winner between model $i$ and $j$). While the usage of preference data is slightly different from ours, we implement their methods on our data by creating pseudo-preference data between two models. In particular, we select a costly and high-performing model and a cheaper model and say the costly model wins if and only if it is correct while the cheaper model is incorrect. On this pseudo preference data, we fit two methods from \citet{ong2024routellmlearningroutellms} for learning win probabilities between expensive and cheap models: the first is a matrix factorization method, called {\bf RouteLLM (MF)}, while the second uses fine-tuned {roberta-base}, called {\bf RouteLLM (Roberta)}. A follow-up method to these is Routing on Random Forests (RoRF) from Not-Diamond \citep{notdiamond2023rorf}, referred to as {\bf Not-Diamond RoRF}. This method uses a text-embedder and random forest model to predict the win probability; we provide a comparison to this method with the {text-embedding-3-small} embedder from OpenAI. 

\emph{Cost-Unaware Routers:} Another class of routers routes to multiple models but does not attempt to predict inference cost for each question \citep{chen2022efficientonlinemlapi, nguyen2025metallmhighperformantcostefficientdynamic, hu2024routerbench}. The most comparable router to ours is the \textbf{Routerbench} router proposed in \citep{hu2024routerbench}. Their router is essentially CARROT with a constant cost predictor: For each test question, the predicted model use cost is the average cost over the training set for that model, while the performance predictor matches what we describe in Subsection \ref{sec: exp-estimators} %As in \citet{notdiamond2023rorf}, we use a slightly different procedure to construct the preference data; a label can take one of four possible values (one for each combination of correct/incorrect from each model), and the costly model is favored if either both models are wrong or the cheaper model is incorrect while the expensive model is correct. For RouterBench, we consider GPT-4 and mixtral 8x7b to be the costly and cheaper models while for the Open LLM Leaderboard, we use Qwen2.5 72b and Qwen2.5 7b. Another class

%\textbf{ERM Router:} In the original RouterBench paper, the authors propose using empirical risk minimization (ERM) and linear interpolation to traverse the routing Pareto frontier \citep{hu2024routerbench}. We implement the ERM router and demonstrate that our method allows the practitioner to traverse the Pareto frontier smoothly. This essentially acts as an empirical verification of Lemma \ref{lemma:oracle-router}
\subsection{Results} %In both {RouterBench} and {Open LLM leaderboard v2} benchmarks we train the loss regressor $\widehat \Phi$ over ten different training and test splits of the datasets that are determined by some prespecified seed values. In both Figures \ref{fig:open-llm-leaderboard} and \ref{fig:RouterBench} we provide error-bars for both average accuracy and cost metrics calculated for the individual LLMs and estimated routers for \openllm\ and \RouterBench\ benchmarks. In both cases, the effect of the baseline architecture (\texttt{\texttt{roberta-base}} vs \texttt{bert-base-case}) were negligible, as we did not see significant differences in the estimated Pareto curves. 

% \begin{figure}
%     \centering
%         \includegraphics[width=0.38\textwidth]{plots/RouterBench (2).pdf} 
%         %\caption{Test 1}
%         %\label{fig:RouterBench}
%     %\hfill
%     %\begin{minipage}{0.49\textwidth}
%       % \includegraphics[width=\textwidth]{plots/router-vs-gpt4.pdf}
%       %  \caption{Test 2}
%      %   \label{fig:router-vs-gpt4}
%     %\end{minipage}
%     \vspace{-0.3cm}
% \caption{RouterBench: performance of several routers and individual LLMs on test data-split.}
% \label{fig:RouterBench-main}
% \end{figure}

% \begin{figure}
%     \centering
%     \includegraphics[width=.8\linewidth]{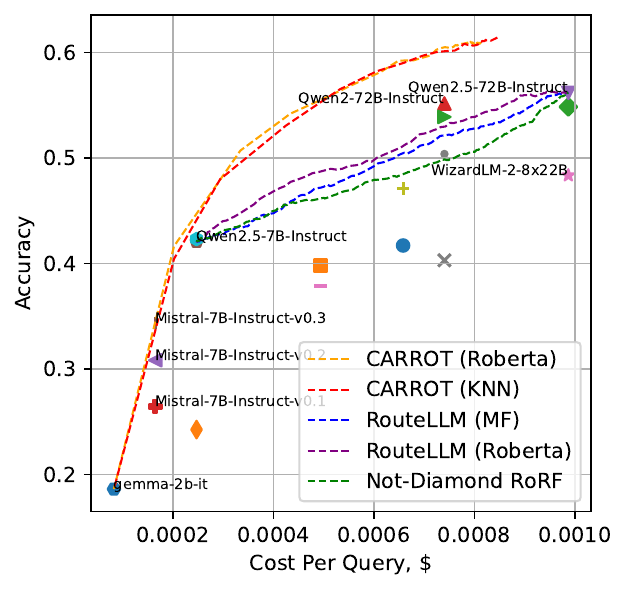}
%     \vspace{-0.3cm}
%     \caption{Open LLM leaderboard v2: performance of several routers and individual LLMs on test data-split.}
%     \label{fig:rexp_open_llm}
% \end{figure}

\paragraph{Performance against baselines:} 
In Figures \ref{fig:RouterBench-main} and \ref{fig:rexp_open_llm}, we see that CARROT handily beats routers that only consider two models. This is due to the fact that we route to \emph{all possible models}, which increases the accuracy coverage and decreases the cost of the cheapest accurate model for a given query. In Figures \ref{fig:RouterBench-Routerbench} and \ref{fig:ROOT In Distribution}, we see that CARROT offers marginal improvements over the Routerbench router \citep{hu2024routerbench} that does not attempt to predict cost. Together, these findings suggest that while good cost prediction is important for achieving the most efficient price/accuracy trade-off, the large majority of routing performance is made up of correctly assessing the accuracy of each model in the family.

\paragraph{CARROT can (sometimes) out-perform the best model:} In {RouterBench} we were unable to achieve significantly better accuracy than GPT-4; however, we were able to greatly reduce the prediction cost. Likewise on \newdata\ we are able to process the test set at a fraction of the cost of o3-mini, but at its best CARROT cannot exceed the o3-mini's accuracy. On the other hand, we showed that CARROT can outperform the best model (Qwen2-72B) by a large margin in {Open LLM leaderboard v2} (see Figure \ref{fig:rexp_open_llm}). The difference is likely due to the existence of a singular top-performing model, or multiple models with comparable best accuracies.

%The performance of CARROT on \newdata\ is illustrated in Figure \ref{fig:ROOT In Distribution}; a trend similar to our findings for RouterBench and the Open-LLM Leaderboard-v2 emerges. Moreover, Figure \ref{fig:IBMMix-spider} shows that CARROT can achieve similar or better performance of {GPT-4o} with a fraction of the cost. 
% \begin{figure}
%     \centering
% \includegraphics[width=.8\linewidth]{}

% \vspace{-0.3cm}
%     \caption{Test performance on \newdata.}
%     \label{fig:ROOT In Distribution}
% \end{figure}

\begin{figure}[ht]
       \centering
       \begin{subfigure}[b]{0.47\textwidth}
           \centering
           \includegraphics[width=\linewidth]{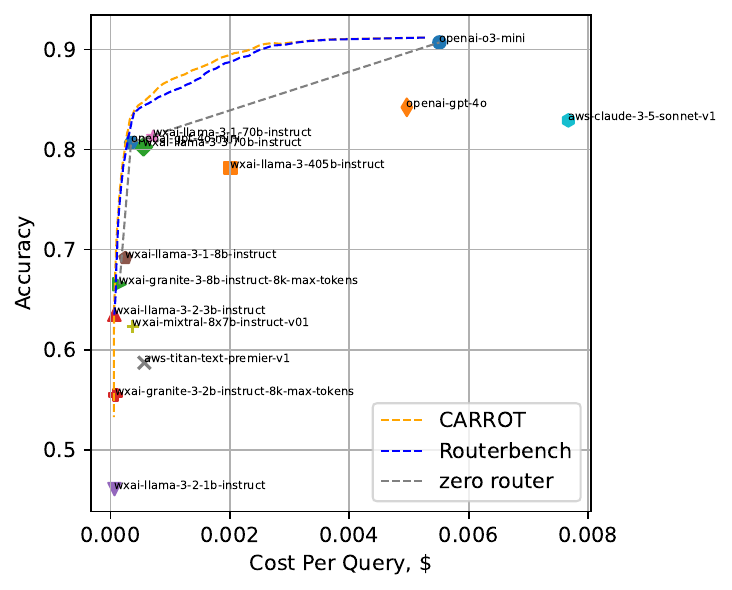}
           \caption{CARROT vs. Routerbench and zero router on \newdata.}
           \label{fig:ROOT In Distribution}
       \end{subfigure}
       \hspace{0.04\textwidth} % Space between subfigures
       \begin{subfigure}[b]{0.47\textwidth}
           \centering
           \includegraphics[width=0.86\linewidth]{plots/hf-leaderboard1.pdf}
    \caption{CARROT vs. Binary Routers on Open-LLM-Leaderboard v2.}
           \label{fig:rexp_open_llm}
       \end{subfigure}
       \caption{CARROT routing analysis on the \newdata\ and Open-LLM-Leaderboard-v2 dataset.}
       \label{fig:fmselect}
   \end{figure}

%In \texttt{RouterBench}  the \texttt{gpt-4-1106-preview} model, which is often considered as the state of the art, had the highest accuracy ($0.788 \pm 0.004$) but at the highest cost ($0.003 \pm 4.02 \times 10^{-5}$). Although we were unable to achieve significantly better accuracy than GPT-4, we were able to greatly reduce the prediction cost. A direct comparison between routers and GPT-4 with respect to average accuracy versus cost is provided in Figure \ref{fig:router-vs-gpt4}, where we see that routers can achieve an accuracy similar to GPT-4 at half the cost, while achieving $95\%$ of the accuracy at only $20\%$ of the cost. 

%\paragraph{Ablation studies:}

% \begin{figure}[h]
%   \centering
%   \begin{subfigure}[b]{0.48\textwidth}
%     \includegraphics[width=\textwidth]{}
%     \caption{First subfigure}
%     \label{fig:RouterBench}
%   \end{subfigure}
%   \hfill
%   \begin{subfigure}[b]{0.48\textwidth}
%     \includegraphics[width=\textwidth]{plots/router-vs-gpt4.pdf}
%     \caption{Second subfigure}
%     \label{fig:router-vs-gpt4}
%   \end{subfigure}
%   \caption{A figure with two subfigures}
%   \label{fig:main}
% \end{figure}

\paragraph{Predictive routing is highly valuable for SPROUT:} \citet{hu2024routerbench} conclude that "none of the routing algorithms significantly outperform the baseline zero router" on the Routerbench data set, where the zero router linearly interpolates between models on the frontier of cost and accuracy. This suggests that on Routerbench, predictive routing does not provide any significant benefit. In our view, this is not a shortcoming of predictive routing; rather, it is a shortcoming of the Routerbench dataset itself. Indeed, in Figure \ref{fig:ROOT In Distribution} we show that for a carefully collected dataset, predictive routing can provide substantial gains. On \newdata\ both CARROT and the cost-unaware router provide substantial improvement over the zero-router. This suggests that \newdata\ is an important introduction to the landscape of routing data sets.

% \begin{figure}
%     \centering
%     \includegraphics[width=.8\linewidth]{}
%     \vspace{-0.3cm}
%     \caption{Model selection proportions on \newdata\ across several accuracies on the test split.}
%     \label{fig:model selection}
% \end{figure}

% \input{sections/extensions}
\section{Discussion}
\label{sec:discussion}
% In this paper, we have introduced a series of routers  based on plug-in estimators that output predictions for model cost and accuracy given a query. These routers are computationally efficient in that one only needs to fit the plug-in estimators to smoothly traverse the Pareto frontier of LLM cost and accuracy on a given data set. To establish the statistical efficiency of our routers, we have provided an information-theoretic lower bound on the excess risk of any router in Theorem \ref{thm:lower-bound}. This lower bound is dependent on the smoothness of the underlying accuracy/cost functions (Assumption \ref{assmp:smooth}), the margin of the oracle router (Assumption \ref{assmp:margin}) and the convex combination coefficients $\mu$ used to measure the risk of the router. Through Theorem \ref{thm:upper-bound} we have seen that for an appropriate choice of plug-in estimate (e.g. Local Polynomial Regression) the minimax risk of our routers achieves the lower bound previously established. In practice, we introduced two systems of plug-in estimates for routing: the first uses roberta-base while the second uses a non-parametric regressor (KNN) on text embeddings. On both open-LLM-leaderboard-v2 and RouterBench, each of the routers can achieve performance equal to (or greater than) the best models at significantly reduced cost.

We introduced CARROT, a plug-in based router that is both computationally and statistically efficient. The computational efficiency stems from the requirement of merely calculating the plug-in estimators (see Algorithm \ref{box:CARROT}) to  perform routing %smoothly traverse the Pareto frontier of cost and accuracy for given group LLMs and dataset. 
Since collecting adequate data for router training might be challenging, we investigate CARROT's statistical efficiency in routing through a minimax rate study. To establish the statistical efficiency of CARROT, we have provided an information-theoretic lower bound on the excess risk of any router in Theorem \ref{thm:lower-bound} and corresponding upper bound for CARROT in Theorem \ref{thm:upper-bound}. To ensure a broad scope for CARROT to a diverse set of queries \emph{and} the latest state-of-the-art LLMs, we also introduced the \newdata\ dataset. %Using \newdata\ and other benchmarks such as Routerbench and open-LLM-leaderboard-v2 we empirically validate CARROT's performance. %Overall, our work presents a robust strategy toward building a router with a broad scope across queries, LLMs, and performance metrics.

Our routing and data approach is designed to be forward-looking. CARROT can incorporate many metrics besides performance and cost; an important next step is to explore which other metrics can improve LLM-based decision-making in practice. A related future goal is to benchmark our SPROUT-trained router on enterprise use cases like the Domain Intelligence Benchmark Suite (DIBS)\footnote{\url{https://www.databricks.com/blog/benchmarking-domain-intelligence}} to locate areas of improvement needed in our data.

% Likewise, on release, SPROUT will be designed to include new user queries and new models. Another future goal is to benchmark the \newdata\ data set itself. For example, one could battle test routers trained on \newdata\ a Domain Intelligence Benchmark Suite (DIBS)\footnote{\url{https://www.databricks.com/blog/benchmarking-domain-intelligence}} to locate areas of improvement needed in our data.

\bibliography{seamus,YK,RAGBench}
\bibliographystyle{abbrvnat}

\newpage
\appendix

\section{\newdata\ Construction details and plots}\label{sec:append:sprout}
In this section, we discuss data details for \newdata. \newdata\ will be released on HuggingFace hub as a HuggingFace datasets object. For convenience, the data is pre-divided into train, validation, and test splits. Consider the training set as an example; the features of this split are

% \begin{tcolorbox}[
%     title=Features of each split of \newdata,
%     breakable,
%     colback=gray!5,
%     colframe=gray!70
% ]
% \begin{minted}[numberblanklines=true,showspaces=false,breaklines=true]{text}
% features: [ "key", "dataset", "dataset level", "dataset idx", 
%             "prompt", "golden answer", "claude-3-5-sonnet-v1", 
%             "titan-text-premier-v1", "openai-gpt-4o", 
%             "openai-gpt-4o-mini", "granite-3-2b-instruct", 
%             "granite-3-8b", "llama-3-1-70b-instruct", 
%             "llama-3-1-8b-instruct", "llama-3-2-1b-instruct", 
%             "llama-3-2-3b-instruct", "llama-3-3-70b-instruct", 
%             "llama-3-405b-instruct", "mixtral-8x7b-instruct-v01" ]
% \end{minted}
% \end{tcolorbox}

% \texttt{features: ["key", "dataset", "dataset level", "dataset idx", "prompt", "golden answer",
% "claude-3-5-sonnet-v1", "titan-text-premier-v1", "openai-gpt-4o", "openai-gpt-4o-mini",
% "granite-3-2b-instruct", "granite-3-8b", "llama-3-1-70b-instruct", "llama-3-1-8b-instruct",
% "llama-3-2-1b-instruct", "llama-3-2-3b-instruct", "llama-3-3-70b-instruct", "llama-3-405b-instruct", "mixtral-8x7b-instruct-v01"]} 

\begin{lstlisting}[language=Python]

features = ['key', 'dataset', 'dataset level', 'dataset idx', 'prompt', 
            'golden answer', 'o3-mini', 'aws-claude-3-5-sonnet-v1', 
            'titan-text-premier-v1', 'openai-gpt-4o', 
            'openai-gpt-4o-mini', 'granite-3-2b-instruct', 
            'granite-3-8b', 'llama-3-1-70b-instruct', 
            'llama-3-1-8b-instruct', 'llama-3-2-1b-instruct', 
            'llama-3-2-3b-instruct', 'llama-3-3-70b-instruct', 
            'llama-3-405b-instruct', 'mixtral-8x7b-instruct-v01']

\end{lstlisting}

Each key corresponds to another list. \texttt{"prompt"} contains the model queries, the \texttt{"dataset"} list indicates which sub-task a given query falls in (\cf\ Table \ref{tab:dataset_splits} for info), and \texttt{golden answer} contains a desirable response for each query. Finally, the model keys each correspond to a list of dictionaries that contains further information on the responses of that model. The important keys in each dictionary of the list are \texttt{["num input tokens", "num output tokens", "response", "score"]}. They contain the number of input tokens for a query, the number of output tokens a model gives in response to a query, the actual response of the model, and finally the score that the judge provides for the response (using the corresponding \texttt{golden answer} entry). The conversion of token count to cost is given in Table \ref{tab:price_by_token_newdata} and additional details on the judging process are described in Section \ref{sec: sprout collection}.

\subsection{\newdata\ ingredients}
Table \ref{tab:dataset_splits} gives the benchmark ingredients for \newdata. Namely, we use the
MATH Lvl 1-5 \citep{hendrycks2021measuringmathematicalproblemsolving}, MMLU-PRO \citep{wang2024mmluprorobustchallengingmultitask}, %BBH \citep{suzgun2022challengingbigbenchtaskschainofthought}, 
GPQA \citep{rein2023gpqagraduatelevelgoogleproofqa}, MUSR \citep{sprague2023musr}, RAGBench \citep{friel2025ragbenchexplainablebenchmarkretrievalaugmented}, and openhermes \citep{teknium_openhermes_2.5} datasets. These six benchmarks are varied and designed to simulate real-world scenarios where LLMs encounter a wide range of prompts. MATH focuses solely on mathematical word problems, whereas MMLU-PRO and GPQA include both mathematical and advanced science questions. %In contrast, BBH emphasizes logical deduction and linguistic reasoning challenges. 
MuSR serves as a benchmark for assessing multistep soft reasoning tasks framed within natural language narratives. RAGBench is a retrieval augmented generation (RAG) benchmark dataset collected from Question-Answer (QA) datasets (CovidQA \citep{moller-etal-2020-covidqa}, PubmedQA \citep{jin-etal-2019-pubmedqa}, HotpotQA \citep{yang2018hotpotqa}, MS Marco \citep{nguyen2016msmacro}, CUAD \citep{hendrycks2021cuad}, EManual \citep{nandy-etal-2021-emanual}, TechQA \citep{castelli-etal-2020-techqa}, FinQA \citep{chen-etal-2021-finqa}, TAT-QA \citep{zhu-etal-2021-tatqa}, ExpertQA \citep{malaviya2024expertqa}, HAGRID \citep{kamalloo2023hagrid}), as well as one that was specifically adapted for RAG (DelucionQA \citep{sadat2024delucionqa}). This measures the ability of a LLM to incorporate retrieved documents along with user queries to generate accurate answers for problems that require in-depth domain knowledge. As such, RAGbench is grouped by the needed domain knowledge: bio-medical research (PubmedQA,
CovidQA), general knowledge (HotpotQA, MS Marco, HAGRID, ExperQA), legal contracts (CuAD),
customer support (DelucionQA, EManual, TechQA), and finance (FinBench, TAT-QA). Finally, openhermes is a collection of GPT4 generated questions designed to emulate real user queries to an LLM. 

%\begin{center}

%\end{center}

%\usepackage{booktabs}

\begin{table}
    \centering
        \caption{Dataset Splits for \newdata.}
    \begin{tabular}{lccc}
        \toprule
        \textbf{Benchmark} & \textbf{Train} & \textbf{Validation} & \textbf{Test} \\
        \midrule
        ragbench/expertqa & 98 & 17 & 16 \\
        MATH (test) & 1725 & 363 & 384 \\
        ragbench (emanual) & 82 & 27 & 23 \\
        ragbench (cuad) & 151 & 35 & 29 \\
        MuSR & 178 & 35 & 35 \\
        MATH & 5217 & 1061 & 1134 \\
        MuSR (team allocation) & 157 & 52 & 41 \\
        ragbench (hagrid) & 92 & 23 & 17 \\
        gpqa (extended) & 368 & 89 & 84 \\
        MuSR (object placements) & 169 & 47 & 34 \\
        ragbench (pubmedqa) & 92 & 14 & 26 \\
        ragbench (hotpotqa) & 89 & 22 & 21 \\
        ragbench (msmarco) & 85 & 24 & 23 \\
        ragbench (techqa) & 85 & 24 & 23 \\
        MMLU-Pro & 8204 & 1784 & 1798 \\
        openhermes & 13703 & 2917 & 2835 \\
        ragbench (tatqa) & 90 & 17 & 25 \\
        ragbench (finqa) & 97 & 15 & 20 \\
        ragbench (covidqa) & 162 & 38 & 41 \\
        ragbench (delucionqa) & 124 & 32 & 28 \\
        TOTAL & 30968 & 6636 & 6637 \\
        \bottomrule
    \end{tabular}

    \label{tab:dataset_splits}
\end{table}

\subsection{\newdata\ models and response collection}
\label{sec: sprout collection}
Table \ref{tab:price_by_token_newdata} provides the models and their associated costs that a router trained on \newdata\ can select between. The input and output token counts are collected by simply gathering the count of the tokenized queries and outputs of a model from its tokenizer. In order to emulate real-world use cases, responses from each LLM are collected using a corresponding \emph{chat template} with a generic prompt and \emph{zero shot prompting}.

Given the use of chat templates and zero-shot prompting, evaluation is challenging because model responses will not necessarily follow a specific format. To alleviate this, we adopt the evaluation protocol from MixEval \citep{ni2024mixeval} and use LLama-3.1-70B as a grader to score model queries against a given gold standard answer. The prompt format that we use is provided in \ref{sec: JP}. Note that this prompt format needs to be converted to openai-api compatible messages while prompting the LLMs, which can be inferred from the special delimiters contained within the prompt format.

% \texttt{'''<dmf>system
% <dmf>user
% I want you to act as a judge for how well a model did answering a user-defined task.
% You will be provided with a user-defined task that was given to the model, its golden answer(s), and the model's answer. The context of the task may not be given here. Your task is to judge how correct is the model's answer. Your task is to judge how correct the model’s answer is based on the golden answer(s), without seeing the context of the task, and then give a correctness score. The correctness score should be one of the below numbers: 0.0 (totally wrong), 0.1, 0.2, 0.3, 0.4, 0.5, 0.6, 0.7, 0.8, 0.9, or 1.0 (totally right). You should also add a brief justification regarding how the model’s answer
% conforms to or contradicts the golden answer(s). Your response must follow the format
% \{\{
%    correctness score: your score,
%    justification: your justification
% \}\}
% Note that each one of the golden answers is considered correct. Thus if the Model’s Answer matches anyone of the golden answers, it should be considered correct.}

%\begin{lstlisting}[label=prompt-format, caption=Prompt format for the evaluator Llama 3.1 70b Instruct LLM.]
%<dmf>system
%<dmf>user
  %  I want you to act as a judge for how well a model did answering a user-defined task. You will be provided with a user-defined task that was given to the model, its golden answer(s), and the model's answer. The context of the task may not be given here. Your task is to judge how correct is the model's answer. Your task is to judge how correct the model's answer is based on the golden answer(s), without seeing the context of the task, and then give a correctness score. The correctness score should be one of the below numbers: 0.0 (totally wrong), 0.1, 0.2, 0.3, 0.4, 0.5, 0.6, 0.7, 0.8, 0.9, or 1.0 (totally right). You should also add a brief justification regarding how the model's answer conforms to or contradicts the golden answer(s). 

%Your response must follow the format
 %  {correctness score: your score, justification: your justification}

%Note that each one of the golden answers is considered correct. Thus if the Model's Answer matches anyone of the golden answers, it should be considered correct. 
%\end{lstlisting}

\begin{table}
    \centering
    \caption{Models in \newdata\ dataset and their API prices according to token counts.}
    \begin{tabular}{c c c } 
 \toprule
 \multirow{2}{*}{Model} & Input Token Cost & Output Token Cost\\
 & (in \$ per 1M tokens) & (in \$ per 1M tokens)\\
% Model & Input Token Cost (\$ per million tokens) & Output Token Cost (\$ per million tokens) \\ [0.5ex] 
 \hline\hline
 openai-o3-mini & 1.1 & 4.4 \\ 
 \hline
 claude-3-5-sonnet-v1 & 3 & 15  \\ 
 \hline
 titan-text-premier-v1 & 0.5 & 1.5  \\
 \hline
 openai-gpt-4o & 2.5 & 10  \\
 \hline
 openai-gpt-4o-mini & 0.15 & 0.6  \\
 \hline
 openai-o3-mini & 1.1 & 4.4  \\
 \hline
 granite-3-2b-instruct & 0.1 & 0.1  \\ 
 \hline
 granite-3-8b-instruct & 0.2 & 0.2  \\
 \hline
 llama-3-1-70b-instruct & 0.9 & 0.9  \\ 
 \hline
 llama-3-1-8b-instruct & 0.2 & 0.2  \\ 
 \hline
  llama-3-2-1b-instruct & 0.06 & 0.06  \\ 
 \hline
 llama-3-2-3b-instruct & 0.06 & 0.06  \\ 
 \hline
 llama-3-3-70b-instruct & 0.9 & 0.9  \\ 
 \hline
 mixtral-8x7b-instruct & 0.6 & 0.6  \\ 
 \hline
 llama-3-405b-instruct & 3.5 & 3.5  \\ 
 \bottomrule
\end{tabular}
    \label{tab:price_by_token_newdata}
\end{table}

\section{Additional Plots and Experimental details}
\subsection{RouteBench}
Figure \ref{fig:routerbench-table} lays out the models and benchmarks present in the Routerbench dataset. To implement the transformer-based plug-in estimate of cost and accuracy, we utilize the \texttt{roberta-base} architecture with a learning rate of \texttt{3e-5} and a weight decay of $0.01$. A training, validation, test split of $0.72$, $0.8$, $0.2$ is used. Learning proceeds for $5$ epochs, and the model with the best validation performance is saved at the end. To fit the KNN-based router, the OpenAI text-embedding-small-3 model is used, while the KNN regressor utilizes the $40$-nearest neighbors measured by the 'cosine' similarity metric.

The same \texttt{roberta-base} parameters are used to fit the Roberta technique from RouteLLM \citep{ong2024routellmlearningroutellms}.  The matrix factorization method assumes that 
\[\Pr(\text{GPT-4 Win}|q) = \sigma(w_2^T(v_{\text{GPT-4}}\odot(W_1^T v_q+b)- v_{\text{mixtral}}\odot(W_1^T v_q+b)) )\]
where $v_{\text{GPT-4}}$,$v_{\text{mixtral}}$ are learnable embeddings of the model of interest. We use the \texttt{text-embeddder-small-3} from OpenAI to embed the queries, and a projection dimension of $d=128$. The model is fit using Adam, with a learning rate of $3e-4$ and a weight decay of $1e-5$. To fit RoRF from not-diamond, we again use \texttt{text-embeddder-small-3} while the default parameters from Not-Diamond are used (max-depth = 20, 100 estimators).
\begin{figure}[H]
    \centering
        \includegraphics[width=0.95\textwidth]{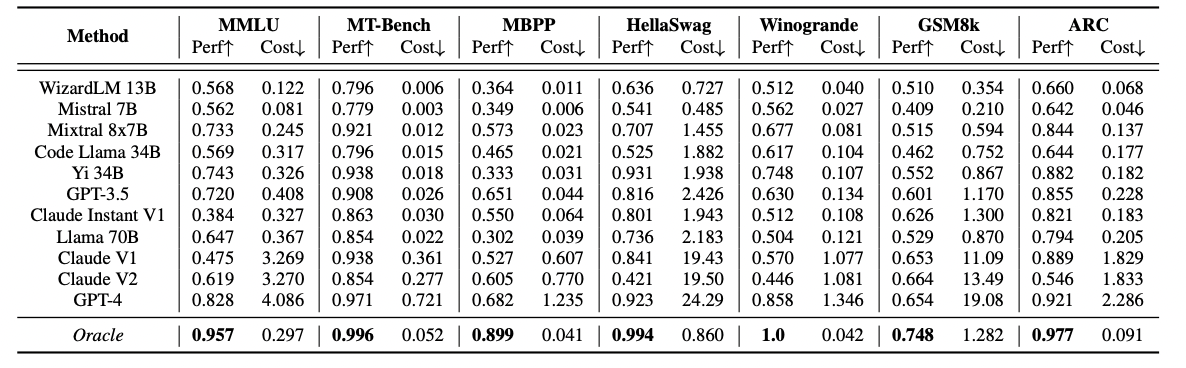} 
        %\caption{Test 1}
        %\label{fig:routerbench}
    %\hfill
    %\begin{minipage}{0.49\textwidth}
      % \includegraphics[width=\textwidth]{plots/router-vs-gpt4.pdf}
      %  \caption{Test 2}
     %   \label{fig:router-vs-gpt4}
    %\end{minipage}
\caption{Routerbench models and benchmarks (\citet{hu2024routerbench} Table 1).}
\label{fig:routerbench-table}
\end{figure}

\begin{figure}[h]
    \centering
    \includegraphics[width=.45\linewidth]{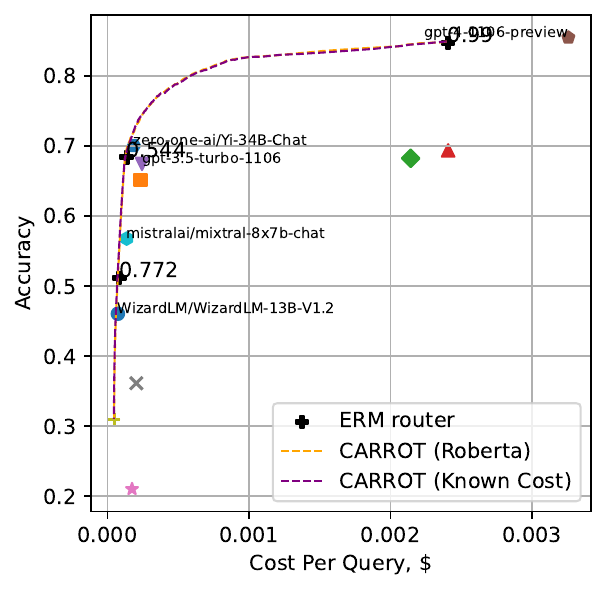}
    \caption{Router Bench Supplementary.}
    \label{fig:rbenchsupp}
\end{figure}

\subsection{Open LLM Leaderboard V2}\label{append:llms_open}

\paragraph{LLMs and costs:} Table \ref{tab:llms_open} gives all models used for the Open LLM Leaderboard experiment and their respective costs.
\begin{longtable}{lc}
\caption{Models used and their respective costs for the Open LLM Leaderboard experiment.}

\\ \hline
\textbf{Model Name} & \textbf{Price (USD per 1M tokens)} \\ \hline
NousResearch/Nous-Hermes-2-Mixtral-8x7B-DPO & 0.6 \\ \hline
01-ai/Yi-34B-Chat & 0.8 \\ \hline
Qwen/QwQ-32B-Preview & 1.2 \\ \hline
Qwen/Qwen2-72B-Instruct & 0.9 \\ \hline
Qwen/Qwen2.5-7B-Instruct & 0.3 \\ \hline
Qwen/Qwen2.5-72B-Instruct & 1.2 \\ \hline
alpindale/WizardLM-2-8x22B & 1.2 \\ \hline
deepseek-ai/deepseek-llm-67b-chat & 0.9 \\ \hline
google/gemma-2-27b-it & 0.8 \\ \hline
google/gemma-2-9b-it & 0.3 \\ \hline
google/gemma-2b-it & 0.1 \\ \hline
meta-llama/Llama-2-13b-chat-hf & 0.3 \\ \hline
meta-llama/Meta-Llama-3.1-70B-Instruct & 0.9 \\ \hline
%meta-llama/Meta-Llama-3.1-8B-Instruct & 0.2 \\ \hline
mistralai/Mistral-7B-Instruct-v0.1 & 0.2 \\ \hline
mistralai/Mistral-7B-Instruct-v0.2 & 0.2 \\ \hline
mistralai/Mistral-7B-Instruct-v0.3 & 0.2 \\ \hline
mistralai/Mixtral-8x7B-Instruct-v0.1 & 0.6 \\ \hline
nvidia/Llama-3.1-Nemotron-70B-Instruct-HF & 0.9 \\ \hline
%upstage/SOLAR-10.7B-Instruct-v1.0 & 0.3 \\ \hline
\label{tab:llms_open}
\end{longtable}

\paragraph{Model fitting:} The model fitting details for baseline methods are all the same as in the RouterBench experiment (following the original implementations). To fit our methods, we employ some hyperparameter tuning for both KNN and \texttt{roberta-base}. For KNN, we employ 5-fold cross-validation using ROC-AUC and the possible number of neighbors as 2, 4, 8, 16, 32, 64, 128, 256, or 512. For \texttt{roberta-base} hyperparameter tuning, we train for 3k steps, using 20\% of the training data for validation, a batch size of 8, and search for the best combination of learning rate, weight decay, and gradient accumulation steps in \{5e-5, 1e-5\}, \{1e-2, 1e-4\}, and \{1, 2, 4, 8\}. The final model is trained for 10k steps.

%\paragraph{Ablation study:} 

\begin{figure}[H]
    \centering
    \includegraphics[width=.45\linewidth]{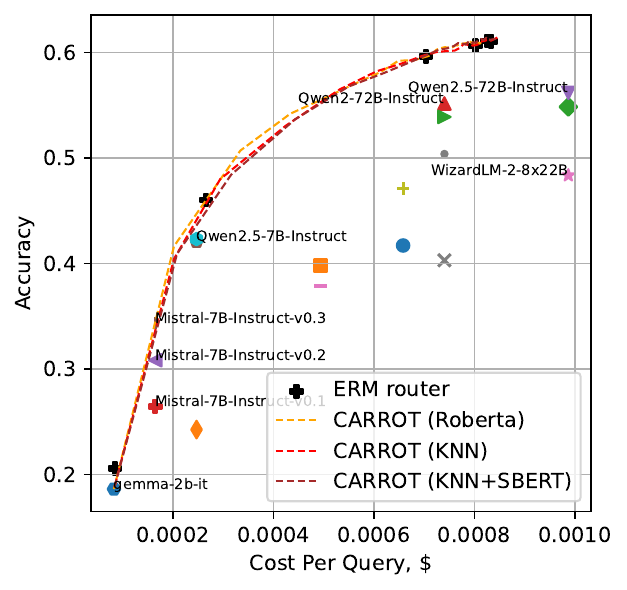}
    \caption{Open LLM leaderboard v2.}
    \label{fig:rexp_open_llm2}
\end{figure}

\section{Supplementary definitions, results and proofs}
\label{sec:proofs}

\subsection{Minimax approaches to learning the risk functions}
\label{sec:reg-fn-estimate}

In remark \ref{cor:efficient-routers} we discussed the required condition for $\widehat \Phi$ so that the plug-in router has minimax rate optimal excess risk. In this section we show that estimating $\widehat \Phi$ using \emph{local polynomial regression} (LPR) meets the requirement.  To describe the LPR estimates consider a kernel $\psi: \reals^d \to [0, \infty)$ that satisfies the regularity conditions described in the Definition \ref{def:kernel-reg} in  Appendix \ref{sec:proofs} with parameter $\max_k \gamma_k$ and define $\Theta(p)$ as the class of all $p$-degree polynomials from $\reals^d$ to $\reals$. For bandwidths $h_k > 0; k \in [K_1]$ we define the LPR estimate as 
\begin{align}\label{eq:LPR}
    [\widehat \Phi(x_0)]_{m, k} = \hat\theta_{x_0}^{(m, k)}(0); \nonumber \\
    ~~ \hat \theta_x^{(m, k)} \in \underset{\theta \in \Theta(p)}{\argmin}  \textstyle \sum_{i} \psi (\frac{X_i - x_0}{h}) \big \{[Y_i]_{m ,k} - \theta (X_i -x_0 )\big\}^2. 
\end{align}
In Theorem 3.2 of \citet{audibert2007Fast}, a similar rate of convergence for LPR estimates is established. In their case, the losses were binary. For our instance, we assume that the $Y_i$ are sub-Gaussian, but the conclusions are identical. We restate their result below.
% to \citet[Theorem 3.2]{audibert2007Fast}. 
\begin{lemma}
    Assume that $Y_i$ are sub-Gaussian random variables, \ie\ there exist constants $c_1$ and $c_2$ such that  
    \[
    \textstyle P\big ( \|Y_i\|_\infty > t \mid X\big ) \le c_1 e^{-c_2t^2}\,. 
    \] If $\psi$ is regular (\cf\ Definition \ref{def:kernel-reg}) with parameter $\max_k \gamma_k$  and $p \ge  \lfloor \max_k \gamma_k \rfloor$ then for $h_k = n^{-\nicefrac{1}{(2\gamma_k + d)}}$ the Assumption \ref{assmp:convergence} is satisfied with $a_{k, n} = n^{-\nicefrac{\gamma_k}{(2\gamma_k + d)}}$, \ie\ for some constants $\rho_1, \rho_2 > 0$ and any $n \ge 1$ and $t > 0$ and almost all $X$ with respect to $P_X$ we have the following concentration bound for $\widehat \Phi$:
    \begin{align}\label{eq:concentration-phi-2}
        \max_{P\in \cP} P \big \{ \max_{m, k} a_{k, n}^{-1}\big |[\widehat \Phi (X)]_{m, k} - [\Phi  (X)]_{m, k}\big |  \ge t\big \} \nonumber \\
        \le  \rho_1 \exp\big (- \rho_2  t^2 \big )\,.
    \end{align}
\end{lemma}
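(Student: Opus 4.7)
The plan is to follow the template of Theorem 3.2 in \citet{audibert2007Fast}, replacing Bernoulli responses by sub-Gaussian ones; essentially only the concentration step of the stochastic term needs adjustment. Fix $(m,k) \in [M]\times[K_1]$ and a query point $X = x_0 \in \supp(P_X)$, and write the LPR estimator in its closed normal-equation form $\hat\theta_{x_0}^{(m,k)} = B_n^{-1} a_n$, where $B_n = n^{-1}\sum_i \psi\!\big((X_i-x_0)/h_k\big)\, U(X_i-x_0) U(X_i-x_0)^\top$ is the weighted local Gram matrix with $U$ the vector of monomials of degree $\le p$, and $a_n$ is the analogous weighted moment vector involving $[Y_i]_{m,k}$. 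Then decompose
\begin{equation*}
    [\widehat\Phi(x_0)]_{m,k} - [\Phi(x_0)]_{m,k} = \underbrace{e_1^\top B_n^{-1} (\mathrm{bias\ term})}_{T_{\mathrm{bias}}} + \underbrace{e_1^\top B_n^{-1} (\mathrm{noise\ term})}_{T_{\mathrm{stoch}}},
\end{equation*}
where the noise term aggregates the residuals $\epsilon_i = [Y_i]_{m,k} - [\Phi(X_i)]_{m,k}$, which are conditionally sub-Gaussian by hypothesis.

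First I would handle $T_{\mathrm{bias}}$. Using Assumption~\ref{assmp:smooth} (Hölder smoothness with parameter $\gamma_k$) together with regularity of $\psi$ and the condition $p \ge \lfloor \max_k \gamma_k\rfloor$, the standard Taylor-expansion argument yields $|T_{\mathrm{bias}}| \le C_1 h_k^{\gamma_k}$ on the event that $B_n$ is well-conditioned, with $C_1$ depending only on $K_{\gamma,k}, \psi, p, d$. Controlling the smallest eigenvalue of $B_n$ uniformly in $x_0 \in \supp(P_X)$ and in $P \in \cP$ is where the strong density condition (Assumption~\ref{assmp:strong-density}) enters: $(c_0, r_0)$-regularity forces enough mass of $P_X$ to fall in the ball $\cB(x_0, h_k, \ell_2)$, and the density bounds $\mu_{\min} \le p_X \le \mu_{\max}$ ensure that a Bernstein argument shows $\lambda_{\min}(B_n) \gtrsim h_k^d$ except on an event of probability $O(\exp(-c n h_k^d))$. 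This part is identical to the covariate-only argument in \citet{audibert2007Fast} and can be quoted directly.

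Next I would control $T_{\mathrm{stoch}}$. Conditional on $X_1,\ldots,X_n$, it is a linear combination $\sum_i w_i \epsilon_i$ of sub-Gaussian variables with deterministic weights, and one checks that $\sum_i w_i^2 \lesssim 1/(n h_k^d)$ on the good event for $B_n$. A standard sub-Gaussian (Hoeffding-type) tail bound then gives
\begin{equation*}
    P\!\big(|T_{\mathrm{stoch}}| \ge s \,\big|\, X_1,\ldots,X_n\big) \le 2 \exp\!\big(-c' n h_k^d\, s^2\big).
\end{equation*}
Plugging in $h_k = n^{-1/(2\gamma_k + d)}$ one computes $n h_k^d = n^{2\gamma_k/(2\gamma_k + d)} = a_{k,n}^{-2}$, so $n h_k^d\cdot (t\, a_{k,n})^2 = t^2$. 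Taking $s = t\, a_{k,n}/2$ and combining with the bias bound $|T_{\mathrm{bias}}| \le C_1 a_{k,n}$ (absorbable once $t \ge 2 C_1$; the regime $t < 2C_1$ is trivial by inflating $\rho_1$), we obtain $P\big( a_{k,n}^{-1}|[\widehat\Phi(X)]_{m,k} - [\Phi(X)]_{m,k}| \ge t\big) \le \rho_1 \exp(-\rho_2 t^2)$ for each fixed $(m,k)$. Finally, take a union bound over the finite set of pairs $(m,k) \in [M]\times[K_1]$ and verify that every constant above depends only on $(c_0, r_0, \mu_{\min}, \mu_{\max}, K_{\gamma,k}, c_1, c_2, \psi, p, d)$, hence is uniform over $P \in \cP$; this delivers \eqref{eq:concentration-phi-2}.

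The main obstacle is the uniform lower bound on $\lambda_{\min}(B_n)$ over all $x_0 \in \supp(P_X)$ and all $P \in \cP$, which is what makes the whole argument go through and is the only place the geometric assumptions on $P_X$ are used in a nontrivial way. Everything else (bias via Taylor, variance via sub-Gaussian concentration, bandwidth choice, union bound) is mechanical once that is in hand; in particular, the generalization from Bernoulli to sub-Gaussian $Y_i$ costs only a change of constants, since both tails are of the form $e^{-c t^2}$.
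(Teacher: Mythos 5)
Your proposal is correct and takes essentially the same route the paper relies on: the paper does not actually write out a proof of this lemma, it simply invokes Theorem 3.2 of Audibert and Tsybakov (2007) and asserts that replacing Bernoulli responses by sub-Gaussian ones changes only constants, which is exactly the bias/stochastic decomposition with strong-density control of the local Gram matrix that you spell out. The one technicality worth stating explicitly (handled in Audibert--Tsybakov by convention) is to define the LPR estimate as a fixed default value on the event that the local Gram matrix is near-degenerate, so that the claimed bound holds for \emph{all} $t>0$ even though the sub-Gaussian responses are unbounded; with that convention your argument goes through uniformly over $\cP$.
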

This result is related to our Remark \ref{cor:efficient-routers} about the rate-efficient estimation of routers. Estimating $\Phi(X)$ with an LPR and a suitable bandwidth and polynomial degree leads to our desired rate of convergence $a_{k, n} = n^{-\nicefrac{\gamma_k}{(2\gamma_k + d)}}$ in Assumption \ref{assmp:convergence}. 

\subsection{Examples, additional assumptions and lemmas}

Next, we describe the regularity conditions needed for local polynomial regression in eq. \eqref{eq:LPR} and \eqref{eq:LPR}. These conditions are taken directly from \citet[Section 3]{audibert2007Fast}. 
\begin{definition}[Kernel regularity]
\label{def:kernel-reg}
    For some $\beta > 0$ we say that a kernel $K:\reals^d \to [0, \infty)$ satisfies the regularity condition with parameter $\beta$, or simply $\beta$-regular if the following are true: 
    \[
    \begin{aligned}
        & \text{for some } c> 0, K(x)\ge c, ~~ \text{for } \|x\|_2 \le c\,,\\
        & \textstyle \int K(x)dx = 1\\
        & \textstyle\int (1 + \|x\|_2^{4\beta}) K^2(x)dx < \infty, \\
        & \textstyle\sup\limits_{x} (1 + \|x\|_2^{2\beta}) K(x) < \infty\,. 
    \end{aligned}
    \]
\end{definition}
An example of a kernel that satisfies these conditions is the Gaussian kernel: $K(x) = \prod_{j = 1}^d \phi(x_j)$, where $\phi$ is the density of a standard normal distribution.

% Next, establish the form of the Bayes classifier seen in eq. \eqref{eq:oracle-router-gen}.

% \begin{lemma}\label{lemma:bayes-classifier-gen}
%     For the classification loss 
%     \[
%      \textstyle \eta_{\lambda, m}(X, Y) =  \sum_{l = 1}^{L_1}\lambda_l \ell_l\{m; X, Y\} + \sum_{l = L_1 + 1}^{L}\lambda_l \ell_l\{m; X\}\,,
%     \]
%     for predicting a sample $(X, Y)$ as class $m$ the Bayes classifier
%     \[
%     \textstyle \textstyle g_\lambda^\star = \argmin_{g: \cX \to [M]}  \Ex_P\big[\sum_{m = 1}^M \bbI\{g(X) = m\} \eta_{\lambda, m} (X, Y) \big ]
%     \]
%     has the form 
%     \[
%      \textstyle  g_\lambda^\star (X) = \argmin_m  \big\{\sum_{l = 1}^L \lambda_l \Phi_{l, m}^\star (X) \big\}  , ~~ \Phi_{l, m}^\star (X) = \begin{cases}
%          \Ex_P[\ell_{l}\{m; X, Y\}\mid X] & 1 \le l \le L_1\\
%          \ell_{l}\{m; X\} & L_1 + 1 \le l\le L\,.
%      \end{cases}
%     \] 
% \end{lemma}
% \begin{proof}[Proof of the Lemma \ref{lemma:bayes-classifier-gen}]
%     Note that 
%     \[
%     \begin{aligned}
%         & \textstyle\Ex_P\big[\sum_{m = 1}^M \bbI\{g(X) = m\} \eta_{\lambda, m} (X, Y) \big ]\\
%         & \textstyle = \Ex_P\big[\Ex_P\big[\sum_{m = 1}^M \bbI\{g(X) = m\} \eta_{\lambda, m} (X, Y) \mid X\big ]\big]\\
%         & = \textstyle\sum_{m = 1}^M \bbI\{g(X) = m\} \eta_{\lambda, m}^\star (X)
%     \end{aligned}
%     \] Thus, the Bayes classifier is $g_\lambda^\star(X) = \argmin_m \eta_{\lambda, m}^\star (X)$ and $\eta_{\lambda, m}^\star (X) = \Ex_P[\eta_{\lambda, m}(X, Y)\mid X] = \sum_{l = 1}^L \lambda_l \Phi_{l, m}^\star (X)$. 
% \end{proof}

Next, we establish sufficient conditions for a class of distributions $\{p_\theta, \theta\in \reals\}$ to satisfy the condition that $\KL(p_\theta, p_{\theta'}) \le K(\theta - \theta')^2$ for some $K> 0$ and any $\theta, \theta'\in \reals$. 
\begin{lemma}\label{lemma:KL-bound}
    Assume that a parametric family of distributions $\{p_\theta, \theta\in \reals\}$ satisfies the following conditions: 
    \begin{enumerate}
        \item The distributions have a density $p_\theta$ with respect to a base measure $\mu$ such that $p_\theta$ is twice continuously differentiable with respect to $\theta$. 
        \item  $\textstyle \int  \partial_\theta  p_\theta (x) d\mu(x) = \partial_\theta \int \textstyle   p_\theta (x)d\mu(x) = 0$
        \item For some $K> 0$ and all $\theta \in \reals$ the  $\textstyle - \partial_\theta^2 \int \textstyle  \log p_\theta (x) p_\theta(x)d\mu(x) \le K$.
    \end{enumerate}
    Then $\KL(p_\theta, p_{\theta'}) \le \frac{K(\theta - \theta')^2}2$.
\end{lemma}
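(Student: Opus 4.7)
I would prove the bound via a second-order Taylor expansion in the parameter. The plan is to define $f(t) = \int p_\theta(x) \log p_{\theta + t(\theta' - \theta)}(x)\,d\mu(x)$ for $t \in [0,1]$, so that $\KL(p_\theta, p_{\theta'}) = f(0) - f(1)$ (the two endpoints give $\int p_\theta \log p_\theta\, d\mu$ and $\int p_\theta \log p_{\theta'}\, d\mu$, the two pieces of the KL formula). I would then show $f'(0) = 0$, apply Taylor's theorem with integral remainder, and bound $-f''(t)$ uniformly in $t$ using condition~3.

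First, I would verify $f'(0) = 0$. Condition~1 justifies differentiating under the integral sign, giving $f'(t) = (\theta'-\theta) \int p_\theta(x) \, \partial_\alpha \log p_\alpha(x) \bigr|_{\alpha = \theta + t(\theta'-\theta)}\, d\mu(x)$; at $t = 0$ the integrand collapses to $p_\theta \cdot (\partial_\theta p_\theta / p_\theta) = \partial_\theta p_\theta$, whose integral vanishes by condition~2. Next, Taylor's theorem with integral remainder gives $f(1) = f(0) + f'(0) + \int_0^1 (1-t) f''(t)\, dt$, hence $\KL(p_\theta, p_{\theta'}) = -\int_0^1 (1-t) f''(t)\, dt$, where differentiating once more yields $f''(t) = (\theta'-\theta)^2 \int p_\theta \, \partial_\alpha^2 \log p_\alpha \bigr|_{\alpha = \theta + t(\theta'-\theta)}\, d\mu$. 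If condition~3, suitably commuted through the integral, supplies $-\int p_\theta\, \partial_\alpha^2 \log p_\alpha\, d\mu \le K$ for every $\alpha$ on the interpolating segment, then substituting and using $\int_0^1(1-t)\,dt = 1/2$ yields $\KL(p_\theta, p_{\theta'}) \le K(\theta'-\theta)^2/2$, finishing the proof.

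The hard part will be the careful use of condition~3. As written, it differentiates both $\log p_\theta$ and the weighting $p_\theta$ in the same variable, while the Taylor remainder asks for the version in which $p_\theta$ is held fixed as the base measure and only $\log p_\alpha$ is differentiated at $\alpha$ on the interpolating segment. To reconcile the two, I would expand $\partial_\theta^2(\log p_\theta \cdot p_\theta)$ by the product rule (obtaining the Fisher-information term plus two correction terms involving $\int \partial_\theta^2 p_\theta\, d\mu$ and $\int (\partial_\theta p_\theta)^2/p_\theta\, d\mu$), then apply condition~2 twice to get $\int \partial_\theta p_\theta\, d\mu = \int \partial_\theta^2 p_\theta\, d\mu = 0$ and cancel the extraneous terms, recovering the Fisher-information-type inequality the Taylor argument needs. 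This bookkeeping is the one nontrivial step; the rest is standard Taylor expansion with integral remainder.
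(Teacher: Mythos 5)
Your core argument is the same as the paper's: a second-order Taylor expansion of the log-density in the parameter, with the first-order term killed by condition~2 and the second-order term bounded via condition~3. The only structural difference is cosmetic — you Taylor-expand the integrated function $f(t)=\int p_\theta \log p_{\theta+t(\theta'-\theta)}\,d\mu$ with integral remainder, whereas the paper expands $\log p_{\theta'}(x)$ pointwise with a Lagrange remainder at an ($x$-dependent) intermediate point $\tilde\theta$ and then integrates; both reduce the problem to bounding $-\int p_\theta(x)\,\partial_\alpha^2 \log p_\alpha(x)\,d\mu(x)$ by $K$ for $\alpha$ between $\theta$ and $\theta'$, and both use $\int \partial_\theta p_\theta\,d\mu=0$ for the linear term.

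The one place your write-up goes astray is the ``hard part'' you flag: the proposed product-rule reconciliation of condition~3 does not work as described. Expanding $\partial_\theta^2\big(p_\theta\log p_\theta\big)$ gives $\partial_\theta^2 p_\theta\,(1+\log p_\theta)+(\partial_\theta p_\theta)^2/p_\theta$, so after integrating, the extraneous term is $\int (\partial_\theta^2 p_\theta)\log p_\theta\,d\mu$; this is weighted by $\log p_\theta$ and does \emph{not} vanish merely because $\int\partial_\theta^2 p_\theta\,d\mu=0$, so condition~2 cannot cancel it. Moreover, even if that bookkeeping went through, it would only control the same-parameter quantity $-\int p_\theta\,\partial_\theta^2\log p_\theta\,d\mu$ (the Fisher information at $\theta$), whereas your Taylor remainder — like the paper's — needs the cross-weighted quantity with weight $p_\theta$ and the second derivative of $\log p_\alpha$ evaluated at an interpolating $\alpha\neq\theta$. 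The paper itself does not attempt any such reconciliation: its proof simply invokes condition~3 in exactly that cross-weighted, uniform-in-$\tilde\theta$ form ($-\int p_\theta\,\partial^2_\theta\log p_{\tilde\theta}\,d\mu\le K$). So the clean resolution is to read condition~3 as the paper's proof does (a uniform bound on that Fisher-information-type quantity along the segment) and apply it directly in your remainder; with that reading your argument is complete, and the product-rule step should be dropped.
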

Some prominent examples of such family are location families of normal, binomial, Poisson distributions, etc.

\begin{proof}[Proof of the Lemma \ref{lemma:KL-bound}]
    Notice that 
    \[
\begin{aligned}
    &\textstyle \KL(\mu_\theta, \mu_{\theta'})\\
    & = \textstyle \int  p_{\theta}(x) \log\big\{  \frac{p_\theta (x)}{p_{\theta'}(x)}\big\} d\mu(x) \\
    & = \textstyle \int  p_{\theta}(x)\big\{  \log p_\theta(x) - \log p_{\theta'}(x)\big \}  d\mu(x)\\
    & = \textstyle \int  p_{\theta}(x)\big\{  \log p_\theta(x) - \log p_{\theta}(x) - (\theta' - \theta) \partial_\theta \log p_\theta (x)  - \frac{(\theta' - \theta)^2}{2}\partial^2_\theta \log p_{\tilde \theta} (x) \big \}  d\mu(x)\\
\end{aligned}
\] Here, 
$\int p_\theta (x) \partial _\theta \log p_\theta (x) d\mu(x) = \int \partial _\theta  p_\theta (x) d\mu(x) dx =0$ and  $- \int p_\theta (x) \partial^2 _\theta \log p_{\tilde \theta} (x)d\mu(x) \le K$. Thus, we have the upper bound $\KL(\mu_\theta, \mu_{\theta'}) \le \frac{K}{2}(\theta - \theta')^2$. 
\end{proof}

\subsection{Proof of Lemma \ref{lemma:oracle-router}}
\begin{proof}[Proof of Lemma \ref{lemma:oracle-router}]
    The $\mu$-th risk 
    \[
    \begin{aligned}
        \textstyle \cR_P(g, \mu) & \textstyle = \Ex\big [\Ex\big[Y\mu]_m\mid X\big] \bbI\{g(X) = m\} \big]\\
        & \textstyle =\Ex\big [\big\{ \sum_{k = 1}^K \mu_k [\Phi (X)]_{m, k} \big\}\bbI\{g(X) = m\} \big]
    \end{aligned}
    \] is minimized at $g(X) = \argmin_m \big\{ \sum_{k = 1}^K \mu_k [\Phi (X)]_{m, k} \big\}$.
\end{proof}

% \subsection{Other results}
% % \begin{lemma} \label{lemma:oracle-router}
% %     For a $\lambda > 0$ define $\eta_{\lambda, m} ^\star(X) = \Ex_P [\eta_{\lambda, m}(X, Y) \mid X]$. Then the oracle router $g_\lambda^\star$ that minimizes the loss $\cL_P(g, \lambda)$ is $g_\lambda^\star(X) = \argmin_l ~ \eta_{\lambda, m} ^\star(X)$\,. 
% % \end{lemma}

% \begin{proof}[Proof of Lemma \ref{lemma:oracle-router}] The loss 
%     \[
%     \begin{aligned}
%         \cL_P(g, \lambda) &=\textstyle  \Ex_P \big[\sum_{m = 1}^M \bbI \{ g(X) = l\} \eta_{\lambda, m}(X, Y) \big]\\
%         & = \textstyle  \Ex_P \big[\sum_{m = 1}^M \bbI \{ g(X) = l\} \Ex_P[\eta_{\lambda, m}(X, Y)\mid X] \big]\\
%         & = \textstyle  \Ex_P \big[\sum_{m = 1}^M \bbI \{ g(X) = l\} \eta_{\lambda, m}^\star(X) \big]
%     \end{aligned}
%     \] is minimized at $g_\lambda^\star(X) = \argmin_l ~ \eta_{\lambda, m} ^\star(X)$.
% \end{proof}

\subsection{The upper bound}

\begin{lemma} \label{lemma:excess-risk-bound-generalization}
    Suppose that we have a function $f:\cX \to \reals ^M $ for which we define the coordinate minimizer $g:\cX \to [M]$ as $g(x) = \argmin_m f_m(x) $ and the margin function 
    \[
   \Delta(x) =  \begin{cases}
       \min_{m \neq g(x)} f_m(x) - f_{g(x)}(x) & \text{if} ~ g(x) \neq [M]\\
       0 & \text{otherwise}\,.
    \end{cases}
    \]
    Assume that the margin condition is satisfied, \ie\ there exist $\alpha, K_\alpha$ such that 
\begin{equation}
     P_X \big\{0 < \Delta (X) \le t\big \}  \le K_\alpha t^{\alpha}\,.
\end{equation} Additionally, assume that there exists an estimator $\widehat f$ of the function $f$ such that it satisfies a concentration bound: 
for some $\rho_1, \rho_2 > 0$ and any $n \ge 1$ and $t > 0$ and almost all $x$ with respect to $P_X$ we have the following concentration bound for $\widehat \Phi$:
    \begin{equation}\label{eq:concentration-f}
         P_{\cD_n} \big \{ \|\widehat f(x) - f (x)\|_\infty  \ge t\big \} \le  \rho_1 \exp\big (- \rho_2 a_n^{-2} t^2 \big )\,, 
    \end{equation}
    where $\{a_n; n \ge 1\}\subset \reals$ is a sequence that decreases to zero. Then for $\widehat g(x) = \argmin_m \widehat f_m(x)$ there exists a $K> 0$ such that for any $n \ge 1$  we have the upper bound
    \begin{equation}
         \Ex_{\cD_n}\big [ \Ex_P\big [ f_{\widehat g(X)}(X) - f_{g(X)}(X)  \big]\big ] \le K a_n^{{1+ \alpha}}\,. 
    \end{equation}
\end{lemma}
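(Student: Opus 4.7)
The plan is to bound the pointwise excess risk by a quantity involving $\|\widehat f - f\|_\infty$ and $\Delta$, and then integrate using a dyadic peeling argument that trades concentration of $\widehat f$ against the margin condition.

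First I would establish the following sandwich identity, which is the workhorse. When $\widehat g(x)$ coincides with (a member of) $g(x)$, the excess vanishes. Otherwise, inserting $\pm \widehat f$ and using the definition $\widehat f_{\widehat g(x)}(x) \le \widehat f_{g(x)}(x)$ gives the upper bound
\begin{equation*}
f_{\widehat g(x)}(x) - f_{g(x)}(x) \;\le\; 2\,\|\widehat f(x)-f(x)\|_\infty,
\end{equation*}
while the definition of $\Delta$ yields the lower bound $f_{\widehat g(x)}(x) - f_{g(x)}(x) \ge \Delta(x)$ whenever $\Delta(x)>0$. Consequently, on $\{\Delta(X)=0\}$ the excess is identically zero (all coordinates of $f(X)$ are tied), and on $\{\Delta(X)>0\}$ misclassification forces $\|\widehat f(X)-f(X)\|_\infty \ge \Delta(X)/2$, so overall
\begin{equation*}
f_{\widehat g(X)}(X) - f_{g(X)}(X) \;\le\; 2\,\|\widehat f(X)-f(X)\|_\infty\,\bbI\{\Delta(X)>0,\;\|\widehat f(X)-f(X)\|_\infty \ge \Delta(X)/2\}.
\end{equation*}

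Next I would peel $\|\widehat f(X)-f(X)\|_\infty$ along dyadic thresholds $\delta_j = 2^{j}a_n$. Let $E_0 = \{\|\widehat f(X)-f(X)\|_\infty \le a_n\}$ and $E_j = \{\delta_{j-1} < \|\widehat f(X)-f(X)\|_\infty \le \delta_j\}$ for $j\ge 1$. Using independence of the test query $X$ and the training data $\cD_n$, I would apply Fubini to pull $\Ex_{\cD_n}$ inside $\Ex_{P_X}$ and bound the contribution on $E_j$ with $j\ge 1$ by
\begin{equation*}
2\,\delta_j \cdot P_X\bigl(0<\Delta(X)\le 2\delta_j\bigr) \cdot \sup_{x} P_{\cD_n}\bigl(\|\widehat f(x)-f(x)\|_\infty > \delta_{j-1}\bigr)\,.
\end{equation*}
Since \eqref{eq:concentration-f} holds for $P_X$-almost every $x$, the last factor is at most $\rho_1 \exp(-\rho_2\,4^{j-1})$, and by Assumption \ref{assmp:margin} the middle factor is at most $K_\alpha (2\delta_j)^\alpha$. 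For the base peel $E_0$, I would ignore concentration entirely and bound the contribution by $2a_n\cdot P_X(0<\Delta(X)\le 2a_n) \le 2K_\alpha\,2^\alpha a_n^{1+\alpha}$.

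Collecting the pieces produces the bound
\begin{equation*}
\Ex_{\cD_n}\Ex_P\bigl[f_{\widehat g(X)}(X) - f_{g(X)}(X)\bigr] \;\le\; C\,a_n^{1+\alpha}\Bigl(1 + \sum_{j\ge 1} 2^{j(1+\alpha)}\,e^{-\rho_2\,4^{j-1}}\Bigr),
\end{equation*}
and the series converges because the Gaussian-type factor $e^{-\rho_2\,4^{j-1}}$ dominates the polynomial growth $2^{j(1+\alpha)}$; this yields the claim with $K$ depending only on $\rho_1,\rho_2,K_\alpha,\alpha$. The main obstacle I expect is the correct handling of ties in $\argmin_m f_m(x)$: with $g(x)$ treated as a subset of $[M]$, I must verify that $\{\Delta(X)=0\}$ really contributes zero to the excess (true, since all $f_m(X)$ coincide there) before reducing to the unique-minimizer case where the sandwich identity is standard. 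A secondary, essentially cosmetic, point is that the sub-Gaussian form of \eqref{eq:concentration-f} is precisely what makes the peeling sum finite without logarithmic loss; a merely polynomial tail would not suffice for the rate $a_n^{1+\alpha}$.
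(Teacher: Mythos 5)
Your proposal is correct, and it reaches the paper's bound by a slightly different decomposition than the one used in the paper's proof. Both arguments share the same two pillars: (i) a misrouting at $x$ forces $\|\widehat f(x)-f(x)\|_\infty \ge \Delta(x)/2$ (so ties, where $\Delta(x)=0$, contribute nothing), and (ii) the margin condition plus the sub-Gaussian tail of \eqref{eq:concentration-f} are traded off across dyadic scales of order $a_n$. The difference is the peeling variable. The paper peels on the margin: it splits the excess risk over the events $\{\Delta(X)\le\tau\}$ and $\{\tau 2^{i-1}<\Delta(X)\le\tau 2^i\}$ with $\tau\asymp a_n$, writes the excess coordinate-wise through the gaps $\delta_m(X)=f_m(X)-f_{g(X)}(X)$, bounds $\Ex_{\cD_n}\bbI\{\|\widehat f(X)-f(X)\|_\infty\ge \delta_m(X)/2\}$ by $\rho_1 e^{-\rho_2 a_n^{-2}\delta_m^2(X)/4}$, and then needs an auxiliary maximization of $x e^{-x^2/\tau^2}$ over $x\ge\Delta(X)$ (its eq.\ \eqref{eq:tech-2}) to convert $\delta_m(X)$ into $\Delta(X)$ before invoking the margin condition on each shell; summing over $m$ yields a constant proportional to $M$. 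You instead peel on the estimation error $\|\widehat f(X)-f(X)\|_\infty$ at levels $2^j a_n$, after first bounding the pointwise excess by $2\|\widehat f(X)-f(X)\|_\infty$ via the standard three-term sandwich; on each shell the indicator forces $0<\Delta(X)\le 2\cdot 2^j a_n$, so the margin condition and the concentration bound at level $2^{j-1}a_n$ apply directly (Fubini over $X$ and $\cD_n$ is legitimate, as you note, since the tail bound holds for $P_X$-almost every $x$). This buys you two small simplifications: no per-coordinate decomposition (so no explicit factor of $M$ in the constant) and no calculus lemma, with the final constant depending only on $\rho_1,\rho_2,K_\alpha,\alpha$. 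Your closing remarks are also on target: the tie case is exactly where $g(x)=[M]$ and all coordinates of $f(x)$ coincide, and the exponential tail is what makes the series $\sum_j 2^{j(1+\alpha)}e^{-\rho_2 4^{j-1}}$ converge, mirroring the paper's $\sum_i 2^{i(1+\alpha)}e^{-2^{2i-2}}$.
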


\begin{proof}
    For an $x \in \cX$ define $\delta_m(x) = f_m(x) -  f_{g(x)}(x)$. Since $g(x) = \argmin_m f_m(x)$ we have $\delta_m(x) \ge 0$ for all $m$,  $\min_{m}\delta_m(x) = 0$. Furthermore, define $h(x) = \argmin\{m \neq g(x): f_m(x)  \}$, \ie\ the coordinate of $f(x)$ where the second minimum is achieved. Clearly, $\delta_{h(x)}(x) = \Delta(x)$. With these definitions, lets break down the excess risk as: 
    \begin{equation}\label{eq:tech-1}
    \begin{aligned}
         & \Ex_{\cD_n}\big [ \Ex_P\big [ f_{\widehat g(X)}(X) - f_{g(X)}(X)  \big]\big ]\\
         & = \textstyle  \Ex_{\cD_n}\big [ \Ex_P\big [ \sum _{m = 1}^M\{f_{m}(X) - f_{g(X)}(X)\} \bbI \{ \widehat g(X) = m\}  \big]\big ] \\
         & = \textstyle  \Ex_{\cD_n}\big [ \Ex_P\big [ \sum _{m = 1}^M\{f_{m}(X) - f_{g(X)}(X)\} \bbI \{ \widehat g(X) = m\} \bbI \{ \Delta(X) \le \tau\}   \big]\big ] \\
         & \quad \textstyle + \sum_{i \ge 1} \Ex_{\cD_n}\big [ \Ex_P\big [ \sum _{m = 1}^M\{f_{m}(X) - f_{g(X)}(X)\} \bbI \{ \widehat g(X) = m\} \bbI \{ \tau 2^{i -1} < \Delta(X) \le \tau 2^i \}   \big]\big ]
    \end{aligned}
    \end{equation} where $\tau = 2\rho_2^{-\nicefrac{1}{2}}a_n$. 
    We deal with the summands one by one. First, if $\Delta(X) = 0$ then all the coordinates of $f(X)$ are identical, which further implies that $f_m(X) - f_{g(X)}(X) = 0$ for any $m$. Thus, 
    \[
    \begin{aligned}
        & \textstyle \Ex_{\cD_n}\big [ \Ex_P\big [ \sum _{m = 1}^M\{f_{m}(X) - f_{g(X)}(X)\} \bbI \{ \widehat g(X) = m\} \bbI \{ \Delta(X) \le \tau\}   \big]\big ]\\
        & \textstyle = \Ex_{\cD_n}\big [ \Ex_P\big [ \sum _{m = 1}^M\{f_{m}(X) - f_{g(X)}(X)\} \bbI \{ \widehat g(X) = m\} \bbI \{ 0 < \Delta(X) \le \tau\}   \big]\big ]
    \end{aligned}
    \] If $m = g(X)$ then the summand is zero. For the other cases, $\widehat g(X) = m $ if $\widehat f(X)$ has the minimum value at the $m$-th coordinate. This further implies $\widehat f_m (X) \le \widehat f_{ g(X)}(X)$. The only way this could happen if $|\widehat f_m(X) - f_m(X)| \ge \nicefrac{\delta_m(X)}{2}$ or $|\widehat f_{g(X)}(X) - f_{g(X)}(X)| \ge \nicefrac{\delta_m(X)}{2}$. Otherwise, if both are $|\widehat f_m(X) - f_m(X)| < \nicefrac{\delta_m(X)}{2}$ and $|\widehat f_{g(X)}(X) - f_{g(X)}(X)| < \nicefrac{\delta_m(X)}{2}$ this necessarily implies 
    \[
    \begin{aligned}
        \textstyle \widehat f_{g(X)} (X) & <\textstyle  f_{g(X)}(X) + \frac{\delta_m(X)}{2} \\
        & \textstyle = f_m(X) - \delta_m(X) + \frac{\delta_m(X)}{2}\\
        & = \textstyle f_m(X) - \frac{\delta_m(X)}{2} < \widehat f_m(X)\,, 
    \end{aligned}
    \] which means for $\widehat f(X)$ the minimum is not achieved at the $m$-th coordinate. Now, $|\widehat f_m(X) - f_m(X)| \ge \nicefrac{\delta_m(X)}{2}$ or $|\widehat f_{g(X)}(X) - f_{g(X)}(X)| \ge \nicefrac{\delta_m(X)}{2}$ implies $\|\widehat f(X) - f(X) \|_\infty \ge \nicefrac{\delta_m(X)}{2}$. With these observations we split the expectation as
    \[
    \begin{aligned}
        & \textstyle  \Ex_{\cD_n}\big [ \Ex_P\big [ \{f_{m}(X) - f_{g(X)}(X)\} \bbI \{ \widehat g(X) = m\} \bbI \{ 0 < \Delta(X) \le \tau\}   \big]\big ]\\
        & \textstyle = \Ex_{\cD_n}\big [ \Ex_P\big [ \{f_{m}(X) - f_{g(X)}(X)\} \bbI \{ \widehat g(X) = m = g(X)\} \bbI \{ 0 < \Delta(X) \le \tau\}   \big]\big ] \\
        & \textstyle \quad + \Ex_{\cD_n}\big [ \Ex_P\big [ \{f_{m}(X) - f_{g(X)}(X)\} \bbI \{ \widehat g(X) = m \neq  g(X)\} \bbI \{ 0 < \Delta(X) \le \tau\}   \big]\big ] 
    \end{aligned}
    \] The first part is zero, whereas the second part further simplifies as: 
    \[
    \begin{aligned}
        & \textstyle  \Ex_{\cD_n}\big [ \Ex_P\big [ \{f_{m}(X) - f_{g(X)}(X)\} \bbI \{ \widehat g(X) = m \neq  g(X)\} \bbI \{ 0 < \Delta(X) \le \tau\}   \big]\big ] \\
        & \le  \textstyle \Ex_{\cD_n}\big [ \Ex_P\big [ \{f_{m}(X) - f_{g(X)}(X)\} \bbI \big \{\|\widehat f(X) - f(X) \|_\infty \ge \frac{\delta_m(X)}{2} \big \} \bbI \{ 0 < \Delta(X) \le \tau\}   \big]\big ] \\
        & = \textstyle \Ex_P\big [ \{f_{m}(X) - f_{g(X)}(X)\} \Ex_{\cD_n}\big [ \bbI \big \{\|\widehat f(X) - f(X) \|_\infty \ge \frac{\delta_m(X)}{2} \big \}\big] \bbI \{ 0 < \Delta(X) \le \tau\}   \big ] \\
        & = \textstyle \Ex_P\big [ \delta_m(X) P_{\cD_n} \big \{\|\widehat f(X) - f(X) \|_\infty \ge \frac{\delta_m(X)}{2} \big \} \bbI \{ 0 < \Delta(X) \le \tau\}   \big ] \\
        & \le  \textstyle \Ex_P\big [ \delta_m(X) \rho_1 e^{- \frac{\rho_2 a_n^{-2}\delta_m^2(X)}4 } \bbI \{ 0 < \Delta(X) \le \tau\}   \big ] = \textstyle \Ex_P\big [ \delta_m(X) \rho_1 e^{- \frac{\delta_m^2(X)}{\tau^2} } \bbI \{ 0 < \Delta(X) \le \tau\}   \big ]
    \end{aligned}
    \]
    Notice that $\delta_m(X) \ge \Delta(X)$ whenever $\Delta(X) > 0$. Thus, we perform a maximization on $\delta_m(X)  e^{- \frac{\delta_m^2(X)}{\tau^2} }$ on the feasible set $\delta_m(X) \ge \Delta(X)$. Here, we use the result: 
    \begin{equation}\label{eq:tech-2}
        \max_{x \ge y} xe^{-
        \frac{x^2}{\tau^2 }} \le \begin{cases}
            \frac{\tau }{\sqrt{2e}} & \text{if} ~ \frac{\tau}{\sqrt{2}} \ge y\\
             ye^{-
        \frac{y^2}{\tau^2 }} & \text{otherwise}\,, 
        \end{cases}
    \end{equation} where $x = \delta_m(X)$ and $y = \Delta(X)$. Since $
    \Delta(X) \le \tau$ we have $\delta_m(X)  e^{- \frac{\delta_m^2(X)}{\tau^2} } \le \tau$ and thus
    \[
    \begin{aligned}
        \textstyle \Ex_P\big [ \delta_m(X) \rho_1 e^{- \frac{\delta_m^2(X)}{\tau^2} } \bbI \{ 0 < \Delta(X) \le \tau\}   \big ] \le \rho_1 \tau P\{0 < \Delta(X) \le \tau\} = \rho_1 \tau^{1 + \alpha}\,.
    \end{aligned}
    \] This finally results in 
    \[
    \begin{aligned}
        \textstyle  \Ex_{\cD_n}\big [ \Ex_P\big [ \sum _{m = 1}^M\{f_{m}(X) - f_{g(X)}(X)\} \bbI \{ \widehat g(X) = m\} \bbI \{ \Delta(X) \le \tau\}   \big]\big ] \le M\rho_1\tau^{1+\alpha}\,,
    \end{aligned}
    \] which takes care of the first summand in eq. \eqref{eq:tech-1}. Now, for an $i \ge 1 $, let us consider the summand 
    \[
    \begin{aligned}
        & \textstyle\Ex_{\cD_n}\big [ \Ex_P\big [ \sum _{m = 1}^M\{f_{m}(X) - f_{g(X)}(X)\} \bbI \{ \widehat g(X) = m\} \bbI \{ \tau 2^{i -1} < \Delta(X) \le \tau 2^i \}   \big]\big ]
    \end{aligned}
    \] Again, on the event $m = g(X)$ the the summand is zero and on the other cases we have $\|\widehat f(X) - f(X) \|_\infty \ge \nicefrac{\delta_m(X)}{2}$.
    Thus, we write 
    \[
    \begin{aligned}
        & \textstyle\Ex_{\cD_n}\big [ \Ex_P\big [ \sum _{m = 1}^M\{f_{m}(X) - f_{g(X)}(X)\} \bbI \{ \widehat g(X) = m\} \bbI \{ \tau 2^{i -1} < \Delta(X) \le \tau 2^i \}   \big]\big ]\\
        & \le  \textstyle \textstyle\sum _{m = 1}^M\Ex_{\cD_n}\big [ \Ex_P\big [ \delta_m(X)\bbI \big \{\|\widehat f(X) - f(X) \|_\infty \ge \frac{\delta_m(X)}{2} \big \} \bbI \{ \tau 2^{i -1} < \Delta(X) \le \tau 2^i \}   \big]\big ]\\
        & \le  \textstyle \textstyle\sum _{m = 1}^M\Ex_P\big [ \delta_m(X) \rho_1 e^{- \frac{\delta_m^2(X)}{\tau^2} } \bbI \{ \tau 2^{i -1} < \Delta(X) \le \tau 2^i \}   \big]\\
    \end{aligned}
    \] Because $\Delta(X) \ge \tau 2^{i-1} > \nicefrac\tau{\sqrt{2}}$ we again use the inequality in eq. \eqref{eq:tech-2} to obtain
\[
\begin{aligned}
&   \textstyle \textstyle\sum _{m = 1}^M\Ex_P\big [ \delta_m(X) \rho_1 e^{- \frac{\delta_m^2(X)}{\tau^2} } \bbI \{ \tau 2^{i -1} < \Delta(X) \le \tau 2^i \}   \big]\\
& \le  \textstyle \textstyle\sum _{m = 1}^M\Ex_P\big [ \Delta(X) \rho_1 e^{- \frac{\Delta^2(X)}{\tau^2} } \bbI \{ \tau 2^{i -1} < \Delta(X) \le \tau 2^i \}   \big]\\ 
& \le  \textstyle \textstyle\sum _{m = 1}^M \tau2^i \rho_1 e^{- \frac{\tau^2 2^{2i - 2}}{\tau^2} } P \{ \tau 2^{i -1} < \Delta(X) \le \tau 2^i \}   \\ 
& \le  \textstyle \textstyle M \tau2^i \rho_1 e^{- \frac{\tau^2 2^{2i - 2}}{\tau^2} } P \{ 0 < \Delta(X) \le \tau 2^i \} = M \rho_1 \tau^{1 + \alpha} 2^{i(1+\alpha)} e^{-2^{2i-2}}   \\ 
\end{aligned}
\] Combining all the upper bounds in \eqref{eq:tech-1} we finally obtain
\begin{equation}
    \begin{aligned}
        \Ex_{\cD_n}\big [ \Ex_P\big [ f_{\widehat g(X)}(X) - f_{g(X)}(X)  \big]\big ] \le \textstyle M\rho_1\tau^{1+\alpha} \big \{ 1+ \sum_{i \ge 1}2^{i(1+\alpha)} e^{-2^{2i-2}} \big \} 
    \end{aligned}
\end{equation} As $\sum_{i \ge 1}2^{i(1+\alpha)} e^{-2^{2i-2}}$ is finite we have the result. 
 \end{proof}

% \begin{proof}[Proof of Theorem \ref{thm:upper-bound}] The upper bound is a direct consequence of the Lemma \ref{lemma:excess-risk-bound-generalization}. For a $\lambda \in (0, 1]$ notice that 
% \[
% \textstyle \widehat\eta_{\lambda , m}(X) - \eta^\star_{\lambda, m}(X) = \lambda \big \{\widehat \Phi_m(X) - \Phi_m^\star(X) \big\} \,.
% \]Then, 
% the concentration inequality in eq. \eqref{eq:concentration-phi} implies  
% \[
% \textstyle \max_{P\in \cP} P \big \{ \max_m \big |\widehat\eta_{\lambda , m}(X) - \eta^\star_{\lambda, m}(X)\big |  \ge t\big \} \le  \rho_1 \exp\big (- \rho_2 \lambda^{-2}a_n^2 t^2 \big )\,.
% \] By letting $\widehat f_m = \widehat \eta_{\lambda, m}$ and $f_m = \eta_{\lambda, m}^\star$ in Lemma \ref{lemma:excess-risk-bound-generalization} we obtain the upper bound in Theorem \ref{thm:upper-bound}, where we replace $a_n$ with $\lambda^{-1} a_n$. 

% \end{proof}

\begin{proof}[Proof of Theorem \ref{thm:upper-bound}]

    The proof of the upper bound follows directly from the lemma \ref{lemma:excess-risk-bound-generalization} once we establish that for $a_n = \sum_{k = 1}^{K_1}\mu_k a_{k, n}$ the following concentration holds: 
    for constants $\rho_{ 1}, \rho_{ 2} > 0$ and any $n \ge 1$ and $t > 0$ and almost all $X$ with respect to $P_X$ we have 
    \begin{equation} \label{eq:concentration-phi-gen}
        \max_{P\in \cP} P \big \{ \max_m \big |\widehat \eta_{\mu, m} (X) - \eta^\star_{\mu, m}  (X)\big |  \ge t\big \} \le  \rho_1 \exp\big (- \rho_2 a_{ n}^{-2} t^2 \big )\,.   
    \end{equation} To this end, notice that 
    \[
    \begin{aligned}
        & \textstyle\max_m \big |\widehat \eta_{\mu, m} (X) - \eta_{\mu, m}  (X)\big | \\
        & \textstyle \le \sum_{k = 1}^{K} \mu_k  \max_m\big |[\widehat \Phi(X)]_{m, k} - [ \Phi(X)]_{m, k} \big| \\
        & = \textstyle \sum_{k = 1}^{K_1} \mu_k  \max_m\big |[\widehat \Phi(X)]_{m, k} - [ \Phi(X)]_{m, k} \big|
    \end{aligned}
    \] where the last equality holds because $[\widehat \Phi(X)]_{m, k} = [\Phi(X)]_{m, k}$ for $k \ge K_1+1$. Following this inequality, we have that for any $P\in \cP$ \[
    \begin{aligned}
        & \textstyle P \big \{ \max_m \big |\widehat \eta_{\mu, m} (X) - \eta_{\mu, m}  (X)\big |  \ge K_1 t\big \}\\
        & \le \textstyle \sum_{k = 1}^{K_1} P \big \{ \max_m \big |[\widehat \Phi(X)]_{m, k} - [ \Phi(X)]_{m, k} \big|  \ge \frac t{\mu_k}\big \} \\
        & \le \textstyle \sum_{k = 1}^{K_1}\rho_{k, 1} \exp\big (- \rho_{k, 2} \mu_k ^{-2}a_{ k, n}^{-2} t^2 \big )\\
        & \textstyle \le  \rho_1 \exp\big (- {\rho_{ 2}}{K_1^2} \{\wedge_{k = 1}^{K_1}\mu_k^{-1}a_{ k, n}^{-1}\}^2 t^2 \big )
    \end{aligned}
    \] where $\rho_1 = \frac{\max_{k \le K_1} \rho_{k, 1}}{K_1}$ and $\rho_2 = K_1^{-2} \times \{ \wedge_{k \le K_1} \rho_{k, 2}\} $. Note that 
    \[
    \textstyle K_1\{\wedge_{k = 1}^{K_1}\mu_k^{-1}a_{ k, n}^{-1}\}^{-1} = K_1 \max_{k = 1}^{K_1}\mu_k a_{ l, n} \ge  \sum_{k \le K_1 } \mu_k a_{ k, n} = a_n\,. 
    \] Thus, 
    \[
    \begin{aligned}
       & \textstyle P \big \{ \max_m \big |\widehat \eta_{\mu, m} (X) - \eta_{\mu, m}  (X)\big |  \ge K_1 t\big \}\\
        & \textstyle \le  \rho_1 \exp\big (- {\rho_{ 2}}{K_1^2} \{\wedge_{k = 1}^{K_1}\mu_k^{-1}a_{ k, n}^{-1}\}^2 t^2 \big ) \le \rho_1 \exp\big (- {\rho_{ 2}}a_n ^2 t^2 \big ) \,.
    \end{aligned}
    \]

\end{proof}

\subsection{The lower bound}
To begin, we discuss the high-level proof strategy that will achieve our lower bound. Ultimately, for every $k \le K_1$ we shall establish that for any $\eps_k \in [0, 1]$ and $n \ge 1$
\begin{equation} \label{eq:lower-bound-individual}
     \textstyle  \min\limits_{A_n \in \cA_n} \max\limits_{P \in \cP} ~~ \cE_P(\mu, A_n) \ge c_k \big \{ \mu_k n^{- \frac{\gamma_k}{2\gamma_k + d}}\big\}^{1+\alpha} \,,
\end{equation} for some constant $c_k > 0$. Then, defining $c = \min\{c_k: k \le K_1\}$ we have the lower bound
\[
\begin{aligned}
    \textstyle  \min\limits_{A_n \in \cA_n} \max\limits_{P \in \cP} ~~ \cE_P(\mu, A_n) & \ge \textstyle  \max\limits_{k \le K_1 } c_k \big \{ \mu_k n^{- \frac{\gamma_k}{2\gamma_k + d}}\big\}^{1+\alpha}\\
    & \ge \textstyle  \max\limits_{k \le K_1 } c \big \{ \mu_k n^{- \frac{\gamma_k}{2\gamma_k + d}}\big\}^{1+\alpha}\\
    & \ge \textstyle  c  \big \{ \sum_{k \le K_1 }\frac{\mu_k n^{- \frac{\gamma_k}{2\gamma_k + d}}}{K}\big\}^{1+\alpha}\\
    & \ge \textstyle  c K^{-1-\alpha}  \big \{ \sum_{k \le K_1 }{\mu_k n^{- \frac{\gamma_k}{2\gamma_k + d}}}\big\}^{1+\alpha}\,,\\
\end{aligned}
\] which would complete the proof. 

It remains to establish \eqref{eq:lower-bound-individual} for each $k \in [K_1]$. To obtain this, we construct a finite family of probability measures $\cM_r \subset \cP$ (indexed by $[r]$) and study $\max\limits_{P \in \cM_r}$. The technical tool which allows this to be fruitful is a generalized version of Fano's lemma. 

\begin{lemma}[Generalized Fano's lemma]
\label{lemma:fano}
Let $r \ge 2$ be an integer and let $\cM_r \subset \cP$ contains $r$ probability measures indexed by $\{1, \dots , r\}$ such that for a pseudo-metric $d$ (\ie\ $d(\theta , \theta') = 0$ if and only if $\theta = \theta'$) any $j \neq j'$
\[
\textstyle d\big (\theta(P_j), \theta(P_{j'})\big) \ge \alpha_r , ~~\text{and} ~~ \text{KL}(P_j , P_{j'}) \le \beta_r\,.
\] Then 
\[
\textstyle \max\limits_j \Ex_{P_j}\big[ d(\theta(P_j), \widehat \theta)\big ] \ge \frac{\alpha_r}{2} \big (1 - \frac{\beta_r + \log 2}{\log r}\big )\,. 
\]
    
\end{lemma}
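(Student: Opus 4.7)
The plan is to reduce estimation to a multi-way hypothesis test and then apply the classical (discrete) Fano inequality. Concretely, introduce an auxiliary index $J$ that is uniform on $\{1,\dots,r\}$ and, conditional on $J=j$, let the data $X$ be drawn from $P_j$. From any estimator $\widehat\theta=\widehat\theta(X)$ construct the minimum-distance decoder
\[
\widehat\psi(X)\;=\;\mathop{\mathrm{argmin}}_{j\in[r]}\, d\bigl(\widehat\theta(X),\theta(P_j)\bigr),
\]
breaking ties arbitrarily. Using the triangle inequality for $d$ (implicit in the pseudo-metric assumption) together with the $\alpha_r$-separation, if $d(\widehat\theta,\theta(P_j))<\alpha_r/2$ then for every $j'\neq j$ we have $d(\widehat\theta,\theta(P_{j'}))\geq \alpha_r - \alpha_r/2 = \alpha_r/2$, so $\widehat\psi=j$. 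Hence $\{\widehat\psi\neq j\}\subseteq\{d(\widehat\theta,\theta(P_j))\geq \alpha_r/2\}$, and Markov's inequality gives
\[
P_j(\widehat\psi\neq j)\;\le\; \frac{2\,\Ex_{P_j}\bigl[d(\widehat\theta,\theta(P_j))\bigr]}{\alpha_r}.
\]

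Next I would invoke the standard Fano inequality applied to the uniform prior on $J$: averaging the testing error over $j$,
\[
\frac1r\sum_{j=1}^r P_j(\widehat\psi\neq j)\;\ge\;1-\frac{I(J;X)+\log 2}{\log r},
\]
where $I(J;X)$ is the mutual information between $J$ and the observation $X$. To control $I(J;X)$, write $\bar P=\frac1r\sum_{j'} P_{j'}$ and use $I(J;X)=\frac1r\sum_j\mathrm{KL}(P_j,\bar P)$. By convexity of $\mathrm{KL}(P_j,\cdot)$ (equivalently, the standard mixture bound),
\[
\mathrm{KL}(P_j,\bar P)\;\le\;\frac1r\sum_{j'=1}^r\mathrm{KL}(P_j,P_{j'})\;\le\;\beta_r,
\]
so $I(J;X)\le \beta_r$. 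Plugging this back into the Fano bound, replacing the average over $j$ on the left-hand side by $\max_j\ge\mathrm{avg}_j$, and using the per-$j$ Markov bound from the first paragraph gives
\[
\max_j \frac{2\,\Ex_{P_j}[d(\widehat\theta,\theta(P_j))]}{\alpha_r}\;\ge\;\frac1r\sum_{j}\frac{2\,\Ex_{P_j}[d(\widehat\theta,\theta(P_j))]}{\alpha_r}\;\ge\;1-\frac{\beta_r+\log 2}{\log r},
\]
which rearranges to the claimed inequality.

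There is no genuinely hard step here—this is the textbook reduction—so the main thing to get right is (i) the triangle-inequality argument that turns an $\alpha_r/2$-accurate estimator into a correct test, and (ii) the mixture bound $I(J;X)\le \max_{j,j'}\mathrm{KL}(P_j,P_{j'})$ via $\bar P$, which is the only place one has to be careful not to accidentally pick up a $\log r$ factor. Everything else (Markov, max $\ge$ average, and the standard form of Fano for the uniform prior) is mechanical.
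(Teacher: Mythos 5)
Your proof is correct, and it is the standard argument for this result (it is essentially Lemma~3 of Yu's ``Assouad, Fano, and Le Cam''). The paper itself does not prove this lemma --- it is quoted as a known tool and used as a black box in the lower-bound construction --- so there is no in-paper proof to compare against; your reduction (minimum-distance decoder, triangle inequality to convert $\alpha_r/2$-accuracy into correct testing, Markov, Fano with uniform prior, and the mixture bound $I(J;X)\le \max_{j\neq j'}\mathrm{KL}(P_j,P_{j'})$ via convexity of $\mathrm{KL}(P_j,\cdot)$) is exactly how this lemma is proved in the literature, and each step checks out. One small point worth flagging: your argument genuinely needs the triangle inequality for $d$, whereas the lemma's parenthetical only records the identity-of-indiscernibles property of the pseudo-metric; the triangle inequality is part of the intended hypothesis (and of the standard statement), so you should state it explicitly rather than call it ``implicit.''
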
 In our construction $\theta(P^\sigma)  = g^\star_{\mu, \sigma}$ and $d\big (\theta(P^{\sigma_0}), \theta(P^{\sigma_1})\big) = \cE_{P^{\sigma_0}}(g^\star_{\mu, \sigma_{1}}, \mu)$.

Next, we lay out the template for constructing the family $\cM_r$. Fix a $k_0 \in [K_1]$ and define the following. 
\begin{definition}
    \begin{enumerate}
    \item For an $h = L \times   \mu_{k_0}^{\frac{1}{\gamma_{k_0}}} n^{-\frac{1}{2\gamma_{k_0} + d}}$ ($L > 0$ is a constant to be decided later) define $m =  \lfloor h^{-1} \rfloor $.   
    \item  Define $\cG = \big[\{ ih + \frac{h}{2} : i = 0,  \dots, m - 1\}^d\big]$ as a uniform grid in $[0, 1]^d$ of size $m^d$ and $\cG_\eps$ as an $\epsilon$-net in $\ell_\infty$ metric, \ie\  $\cG_\eps = \cup_{x \in \cG} \cB(x, \eps, \ell_\infty) $, where $\cB(x, \eps, \ell_{\infty}) = \{y \in \cX : \|x-y\|_\infty \le \eps\}$.
    \item Define $P_X = \text{Unif}(\cG_\eps) $. For such a distribution, note that  $\vol (\cG_\eps) = (m\eps)^d \le (h^{-1}\eps)^d $, which implies that for all $x \in \cG_\eps$ we have $p_X(x) = (h\eps^{-1})^d $. Setting $\eps = p_0 ^{-\nicefrac{1}{d}} h \wedge \frac{h}{3}$ we have $p_X(x) \ge p_0$ that satisfies the strong density assumption for $P_X$.
    \item  Fix an $m_0 \le m^d$ and consider $\cG_0 \subset \cG$ such that $|\cG_0| =m_0$ and define $\cG_1 = \cG \backslash \cG_0$.
    \item  For a function $\sigma: \cG_0 \to [M]$ define \begin{equation}
    \Phi_{m, k}^{\sigma}(x) = \begin{cases}
     \frac{1 - K_{\gamma, k_0} \mu_{k_0}^{-1} \eps^{\gamma_{k_0}}\bbI\{\sigma(y) = m\}}{2} & \text{when} ~ k =k_0, ~  x \in\cB(x, \eps, \ell_{\infty})~ \text{for some}   ~  y \in \cG_0,\\
     \frac{1}{2} & \text{elsewhere.}
    \end{cases}
\end{equation}
\item    Consider a class of probability distributions $\{\mu_\theta: \theta \in \reals\}$ defined on the same support $\text{range}(\ell)$ that have mean $\theta$ and satisfy $\text{KL}(\mu_\theta, \mu_{\theta'})  \le c  (\theta - \theta')^2$ for some $c > 0$. A sufficient condition for constricting such a family of distributions can be found in Lemma \ref{lemma:KL-bound}. Some prominent examples of such family are location families of normal, binomial, Poisson distributions, etc.  Define the probability $P^\sigma([Y]_{m, k} \mid X = x) \sim \mu_{\Phi_{m, k}^{(\sigma)}(x)}$. 
\end{enumerate}
\end{definition}
The following two lemmas (along with the observation on the strong density condition) will establish that for a given $\sigma$, the distribution over $\cX, \cY$ given by $P^\sigma([Y]_{m, k} \mid X = x) \times \text{Unif}[\cG_{\epsilon}]$ is indeed a member of the class $\cP$.
\begin{lemma}
    \label{lem: margin condition}
    Fix a choice for $\sigma$ and let $\eta_{\mu, m}^\sigma = \sum_k \mu_k \Phi_{k,m}^{\sigma}(x)$, then $\eta_{\mu, m}^\sigma $ satisfies $\alpha$-margin condition.
\end{lemma}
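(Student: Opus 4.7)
The plan is to exploit the very explicit two-level structure of $\Phi^\sigma$ to reduce the margin to a two-valued random variable, after which the margin condition becomes a simple counting inequality on $|\cG_0|$. First I would plug the definition of $\Phi_{m,k}^\sigma$ into $\eta_{\mu,m}^\sigma(x)=\sum_k \mu_k \Phi_{m,k}^\sigma(x)$ and do a case split on whether $x$ lies in some $\cB(y,\eps,\ell_\infty)$ with $y\in\cG_0$ (which, for $\eps\le h/3$, is disjoint across grid points, so at most one such $y$ exists). If $x$ lies outside every such ball, then $\Phi_{m,k}^\sigma(x)=\tfrac{1}{2}$ for every $(m,k)$, hence $\eta_{\mu,m}^\sigma(x)=\tfrac{1}{2}$ for every $m$, so $g_\mu^\star(x)=[M]$ and by convention $\Delta_\mu(x)=0$. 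If $x\in\cB(y,\eps,\ell_\infty)$ for some $y\in\cG_0$, then only the single entry with $m=\sigma(y)$ and $k=k_0$ is perturbed, and a direct computation gives
\[
\eta_{\mu,\sigma(y)}^\sigma(x)=\tfrac{1}{2}-\tfrac{1}{2}K_{\gamma,k_0}\eps^{\gamma_{k_0}},\qquad \eta_{\mu,m}^\sigma(x)=\tfrac{1}{2}\ \text{for all }m\neq\sigma(y).
\]
Hence the argmin is the singleton $\{\sigma(y)\}$ and $\Delta_\mu(x)=\tfrac{1}{2}K_{\gamma,k_0}\eps^{\gamma_{k_0}}$.

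Consequently $\Delta_\mu(X)$ takes only the two values $0$ and $\Delta^\star \triangleq \tfrac{1}{2}K_{\gamma,k_0}\eps^{\gamma_{k_0}}$. For any $t\in(0,\Delta^\star)$ the event $\{0<\Delta_\mu(X)\le t\}$ is empty, so the margin inequality is vacuous. For $t\ge\Delta^\star$ the event coincides with $\{X\in\bigcup_{y\in\cG_0}\cB(y,\eps,\ell_\infty)\}$, whose mass under $P_X=\mathrm{Unif}(\cG_\eps)$ equals $|\cG_0|/|\cG|=m_0/m^d$, since the $\eps$-balls have equal volume and are disjoint across $\cG$.

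It therefore suffices to verify $m_0/m^d \le K_\alpha (\Delta^\star)^\alpha$, which translates into a constraint of the form $m_0 \lesssim m^d \eps^{\alpha\gamma_{k_0}}$ on the cardinality parameter $m_0$. This is precisely the order of magnitude of $m_0$ that the subsequent Fano construction will impose (to balance the packing and KL bounds needed for \eqref{eq:lower-bound-individual}), and since $\alpha\gamma_{k_0}<d$ by the standing assumption in Section~\ref{sec:lower-bound}, the constraint is compatible with $1\le m_0\le m^d$ for all large $n$. The main ``obstacle'' is really only bookkeeping: carefully choosing $\eps\le h/3$ so the grid balls are disjoint (and simultaneously small enough that the strong density condition still holds), and aligning the eventual choice of $m_0$ with the requirement above so that the constant $K_\alpha$ in Assumption~\ref{assmp:margin} can be taken independent of $n$.
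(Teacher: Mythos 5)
Your proposal is correct and follows essentially the same route as the paper: compute $\eta^\sigma_{\mu,m}$ explicitly (the $\mu_{k_0}^{-1}$ in $\Phi^\sigma_{m,k_0}$ cancels against $\mu_{k_0}$), observe that the margin takes only the two values $0$ and $\tfrac12 K_{\gamma,k_0}\eps^{\gamma_{k_0}}$, and reduce the margin condition to the counting constraint $m_0 \lesssim K_\alpha K_{\gamma,k_0}^{\alpha}\eps^{\alpha\gamma_{k_0}-d}$ on $|\cG_0|$. The only (cosmetic) difference is that the paper closes the loop inside the lemma by setting $m_0 = \lfloor K_\alpha 2^{-\alpha}K_{\gamma,k_0}^{\alpha}\eps^{\alpha\gamma_{k_0}-d}\rfloor$, whereas you defer this choice to the Fano construction while correctly noting it is compatible since $\alpha\gamma_{k_0}<d$.
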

\begin{proof}
To see that $\eta_{\mu, m}^\sigma $ satisfies $\alpha$-margin condition, notice that
\[
\begin{aligned}
    & \textstyle \eta_{\mu, m}^\sigma(x) = \begin{cases}
     \frac{1 - K_{\gamma, k_0} \eps^{\gamma_{k_0}} \bbI\{\sigma(y) = m\}}{2} & \text{when} ~  x \in\cB(x, \eps, \ell_{\infty})~ \text{for some}   ~  y \in \cG_0,\\
     \frac{1}{2} & \text{elsewhere.}
    \end{cases}
\end{aligned}
\]
Thus, for every $x \in \cB(y, \eps, \ell_\infty), y \in \cG_0$ the $\Phi_{\mu, m}^\sigma(x) = \frac12$ for all but one $m$ and at $m = \sigma(x)$ the $\Phi_{\mu, m}^\sigma(x) = \frac{1 - K_{\gamma, k_0} \eps^{\gamma_{k_0}} }{2}$, leading to $\Delta_\mu^\sigma (x)  = \frac{K_{\gamma, k_0} \eps^{\gamma_{k_0}} }{2}$ at those $x$, and at all other $x$ we have $\Delta_\mu^\sigma(x) =0$. This further implies $P_X(0 < \Delta_\mu^\sigma(X) \le t) = 0 $ whenever $t < \frac{K_{\gamma, k_0} \eps^{\gamma_{k_0}} }{2} $ and for $t \ge \frac{K_{\gamma, k_0} \eps^{\gamma_{k_0}} }{2} $ we have 
\[
\begin{aligned}
   P_X(0 < \Delta^\sigma(X) \le t) & \textstyle = P_X\big ( \Phi_m^\sigma(X) \neq \frac12 \text{ for some } m \in [M]\big ) \\
   & \textstyle \le m_0 \eps^d \le K_\alpha\big(\frac{K_{\gamma, k_0} \eps^{\gamma_{k_0}} }{2}\big)^\alpha 
\end{aligned}
\] whenever
\[
\begin{aligned}
    m_0  \textstyle \le  K_\alpha 2^{-\alpha}   K_{\gamma, k_0}^\alpha \eps^{ \alpha \gamma_{k_0} -d}
\end{aligned}
\] We set $m_0 = \lfloor K_\alpha 2^{-\alpha}   K_{\gamma, k_0}^\alpha \eps^{ \alpha \gamma_{k_0} -d} \rfloor$ to meet the requirement.
Since $d > \min_{k} \alpha\gamma_{k}$, for sufficiently small  $\eps$ we have $m_0 \ge 8$. 
\end{proof}
% $m_0 \le K_\alpha 2^{-\alpha }K_\beta ^\alpha  \eps^{\alpha \beta - d}$. We set $m_0 = \lfloor  K_\alpha 2^{-\alpha }K_\beta ^\alpha  \eps^{\alpha \beta - d} \rfloor$ to meet the requirement. 
% Since $\alpha\beta < d$ for sufficiently small  $\eps$ we have $m_0 \ge 1$. 
\begin{lemma}
\label{lem: Holder}
   On the support of $P_X$ the $\Phi_{m, k}^\sigma$ are $(\gamma_{k}, K_{\gamma, k})$ H\"older smooth. 
\end{lemma}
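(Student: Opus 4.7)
The plan is to split on whether $k = k_0$ or not. For $k \neq k_0$, the function $\Phi^\sigma_{m,k}(x) \equiv \tfrac{1}{2}$ is constant on the entire support $\cG_\eps$, hence $\lfloor \gamma_k \rfloor$-times continuously differentiable with all derivatives identically zero, so the Hölder inequality is trivially satisfied with the constant $K_{\gamma,k}$. The remainder of the argument focuses on $k = k_0$.

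For $k = k_0$, the key observation is that $\Phi^\sigma_{m,k_0}$ is \emph{piecewise constant} on $\cG_\eps$: on each ball $\cB(y, \eps, \ell_\infty)$ with $y \in \cG$ it equals either $\tfrac12$ or $\tfrac{1}{2} - \tfrac{1}{2}K_{\gamma, k_0}\mu_{k_0}^{-1}\eps^{\gamma_{k_0}}$, depending on whether $y \in \cG_0$ and $\sigma(y) = m$. Because $\eps \le h/3$ by construction, the closed balls around distinct grid points are pairwise disjoint with $\ell_\infty$-gap at least $h - 2\eps \ge h/3$. In particular, every $x \in \cG_\eps$ has a neighborhood (inside $\cG_\eps$) on which the function is constant, so all partial derivatives of every order exist at $x$ and equal zero. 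Consequently, the Taylor polynomial $\phi_x^{(\lfloor \gamma_{k_0} \rfloor)}(y)$ collapses to $\Phi^\sigma_{m,k_0}(x)$, and the Hölder bound reduces to checking the pointwise inequality $|\Phi^\sigma_{m,k_0}(y) - \Phi^\sigma_{m,k_0}(x)| \le K_{\gamma,k_0}\|x-y\|_2^{\gamma_{k_0}}$ for every $x,y \in \cG_\eps$.

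I then split into two geometric cases. If $x, y$ lie in the same ball, both sides are zero. If they lie in distinct balls around $y_1 \neq y_2 \in \cG$, then
\[
\|x-y\|_2 \,\ge\, \|x-y\|_\infty \,\ge\, \|y_1-y_2\|_\infty - 2\eps \,\ge\, h - 2\eps,
\]
while $|\Phi^\sigma_{m,k_0}(y) - \Phi^\sigma_{m,k_0}(x)| \le \tfrac{1}{2}K_{\gamma,k_0}\mu_{k_0}^{-1}\eps^{\gamma_{k_0}}$. It therefore suffices to show $\tfrac12\mu_{k_0}^{-1}\eps^{\gamma_{k_0}} \le (h-2\eps)^{\gamma_{k_0}}$, equivalently $(h/\eps - 2)^{\gamma_{k_0}} \ge (2\mu_{k_0})^{-1}$. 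I expect this last inequality to be the delicate step: it forces the ratio $h/\eps$ to be taken large enough relative to $\mu_{k_0}$, which must be reconciled with the strong density requirement $p_X(x) = (h/\eps)^d \in [\mu_{\min}, \mu_{\max}]$. The resolution is to let the free scaling constant $L$ in $h = L\,\mu_{k_0}^{1/\gamma_{k_0}} n^{-1/(2\gamma_{k_0}+d)}$ and the density constants $\mu_{\min}, \mu_{\max}$ (which are free parameters of the problem class $\cP$) be chosen jointly so that $h/\eps \ge 2 + (2\mu_{k_0})^{-1/\gamma_{k_0}}$ holds while $(h/\eps)^d$ stays bounded above; the main bookkeeping obstacle is making these choices uniform in the relevant range of $\mu_{k_0}$ used for the lower-bound construction.
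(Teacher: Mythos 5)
Your argument is essentially the paper's: both proofs use that $\Phi^\sigma_{m,k}$ is locally constant on the support, that regions where its values differ are separated by an $\ell_\infty$-gap of order $h$ (the paper uses $h/3$, you use $h-2\eps\ge h/3$), and then compare the single jump height against $K_{\gamma,k}\|x-x'\|_2^{\gamma_k}$; your treatment of the $k\neq k_0$ coordinates and of the vanishing derivatives/Taylor polynomial just makes explicit what the paper leaves implicit.

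The divergence is exactly the step you flagged, and as written it is a gap. The paper's proof bounds the jump by $\tfrac12 K_{\gamma,k}\eps^{\gamma_k}\le K_{\gamma,k}(h/3)^{\gamma_k}$, i.e.\ it silently drops the factor $\mu_{k_0}^{-1}$ that appears in the definition of $\Phi^\sigma_{m,k_0}$, so nothing beyond $\eps\le h/3$ is needed. Keeping the factor, as you do, your requirement $(h/\eps-2)^{\gamma_{k_0}}\ge(2\mu_{k_0})^{-1}$ is real, but your proposed repair cannot close it: $\eps$ is defined proportionally to $h$ (namely $\eps=p_0^{-1/d}h\wedge h/3$), so the ratio $h/\eps=\max(p_0^{1/d},3)$ does not depend on $L$ or $n$, and tuning $L$ changes nothing; while enlarging $h/\eps$ to depend on $\mu_{k_0}$ inflates the density $p_X\asymp(h/\eps)^d$, which must remain below the fixed $\mu_{\max}$ of the class $\cP$ uniformly over $\mu_{k_0}\in(0,1]$, so the two constraints are incompatible for small $\mu_{k_0}$. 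To finish along your lines one must either read the construction with jump $\tfrac12 K_{\gamma,k_0}\eps^{\gamma_{k_0}}$ (no $\mu_{k_0}^{-1}$), in which case your case analysis concludes immediately exactly as the paper's does, or keep the $\mu_{k_0}^{-1}$ and shrink $\eps$ by an extra factor of order $\mu_{k_0}^{1/\gamma_{k_0}}$, a modification whose consequences would then have to be propagated through the margin, KL, and excess-risk computations rather than handled inside this lemma alone.
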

\begin{proof}
 Note that the only way $\Phi_{m, k}^\sigma(x)$ and $\Phi_{m,k}^\sigma(x')$ can be different if $\|x- x'\|_\infty  \ge \frac{h}{3}$. Since $\eps \le \frac h3$ for such a choice, we have 
\[
\begin{aligned}
    \textstyle |\Phi_{m, k}^\sigma(x) - \Phi_{m, k}^\sigma(x') | & \textstyle \le \frac12 K_{\gamma, k} \eps ^\beta\\
    & \textstyle \le  K_{\gamma, k} (\frac h3) ^\beta\\
    & \textstyle \le K_{\gamma, k} \|x - x'\|_\infty ^\beta \le K_{\gamma, k} \|x - x'\|_2^\beta\,.
\end{aligned}
\]
\end{proof}
In order transfer the inequality in Fano's lemma to a statement on rate of convergence, we need an upper bound on $ \text{KL}(P^{\sigma_1}, P^{\sigma_2})$ and a lower bound on the semi-metric $\cE_{P^{\sigma_0}}(\mu, g^\star_{\mu, \sigma_1})$. These are established in the next two lemmas.
\begin{lemma}
   Consider the probability distribution $P^\sigma$ for the random pair $(X, Y)$ where $X \sim P_X$ and given $X$ the $\{[Y]_{m, k}; m \in [M], k \le K_1\}$ are all independent and distributed as $[Y]_{m, k} \mid X = x \sim \mu_{\Phi_{m, k}^{\sigma}(x)}$. Let $C$ be a positive constant and $\delta(\sigma_1, \sigma_2) = \sum_{y \in \cG_0 }   \bbI\{\sigma_1(y) \neq \sigma_2(y) \} $ the Hamming distance between $\sigma_1$ and $\sigma_2$. Then following upper bound holds on $\text{KL}(P^{\sigma_1}, P^{\sigma_2})$. 
   \[ \text{KL}(P^{\sigma_1}, P^{\sigma_2}) \leq C \mu_{k_0}^{-2} h ^{2\gamma_{k_0} + d }\delta(\sigma_1 , \sigma_2)\]
\end{lemma}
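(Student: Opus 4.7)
The plan is to invoke tensorization of KL divergence over the conditional independence structure of $Y$ given $X$, then use the quadratic KL bound assumed for the family $\{\mu_\theta\}$, and finally exploit the fact that $\Phi^{\sigma_1}_{m,k}$ and $\Phi^{\sigma_2}_{m,k}$ differ only on a very small set.

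First I would write
\begin{equation*}
\textrm{KL}(P^{\sigma_1}, P^{\sigma_2}) = \Ex_{X \sim P_X}\bigl[\textrm{KL}(P^{\sigma_1}_{Y \mid X}, P^{\sigma_2}_{Y \mid X})\bigr],
\end{equation*}
and since under each $P^{\sigma_j}$ the entries $\{[Y]_{m,k}\}$ are conditionally independent given $X$, the inner KL tensorizes into a double sum over $m \in [M]$ and $k \in [K_1]$ (components with $k > K_1$ are the same known functions under both measures and contribute zero). Applying the assumed bound $\textrm{KL}(\mu_\theta, \mu_{\theta'}) \le c(\theta - \theta')^2$ coordinatewise yields
\begin{equation*}
\textrm{KL}(P^{\sigma_1}, P^{\sigma_2}) \le c \sum_{m, k} \Ex_X\bigl[(\Phi^{\sigma_1}_{m,k}(X) - \Phi^{\sigma_2}_{m,k}(X))^2\bigr].
\end{equation*}

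Next I would use the structure of $\Phi^\sigma_{m,k}$. By construction the two functions agree unless $k = k_0$ and $X$ lies in some $\cB(y,\eps,\ell_\infty)$ with $y \in \cG_0$ and $\sigma_1(y) \ne \sigma_2(y)$. For such a disagreement point $y$, the only coordinates $m$ where the two functions differ are $m = \sigma_1(y)$ and $m = \sigma_2(y)$, and on those coordinates the pointwise squared difference is at most $\bigl(K_{\gamma,k_0}\mu_{k_0}^{-1}\eps^{\gamma_{k_0}}/2\bigr)^2$. Thus the sum collapses to
\begin{equation*}
\textrm{KL}(P^{\sigma_1}, P^{\sigma_2}) \le c \cdot 2 \cdot \tfrac{1}{4}K_{\gamma,k_0}^2 \mu_{k_0}^{-2}\eps^{2\gamma_{k_0}} \sum_{y \in \cG_0 :\, \sigma_1(y) \ne \sigma_2(y)} P_X(X \in \cB(y,\eps,\ell_\infty)).
\end{equation*}

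Finally I would compute $P_X(\cB(y,\eps,\ell_\infty))$: since $p_X \le (h/\eps)^d$ up to an absolute constant (from the definition of $\cG_\eps$ and $P_X = \mathrm{Unif}(\cG_\eps)$) and $\vol(\cB(y,\eps,\ell_\infty) \cap \cG_\eps) \lesssim \eps^d$, we obtain $P_X(\cB(y,\eps,\ell_\infty)) \lesssim h^d$. Combining with $\eps \asymp h$ (up to a constant), the $\eps^{2\gamma_{k_0}}$ factor becomes $h^{2\gamma_{k_0}}$, and summing over the $\delta(\sigma_1,\sigma_2)$ disagreement points yields the claimed bound
\begin{equation*}
\textrm{KL}(P^{\sigma_1}, P^{\sigma_2}) \le C \mu_{k_0}^{-2} h^{2\gamma_{k_0} + d} \delta(\sigma_1,\sigma_2),
\end{equation*}
with $C$ absorbing $c$, $K_{\gamma,k_0}$, and the constants from the density and volume bounds.

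The only real bookkeeping step is verifying the $P_X$-mass of a single ball $\cB(y,\eps,\ell_\infty) \cap \cG_\eps$; once the density $p_X \asymp (h/\eps)^d$ and the local volume $\eps^d$ are in hand, the product cleanly gives the $h^d$ factor that combines with $\eps^{2\gamma_{k_0}} \asymp h^{2\gamma_{k_0}}$ to produce the target exponent $2\gamma_{k_0}+d$. No step is genuinely hard; the main pitfall is careful tracking of which $(m,k,y)$ triples actually contribute, so that the final count is linear in $\delta(\sigma_1,\sigma_2)$ rather than in $M \cdot \delta(\sigma_1,\sigma_2)$.
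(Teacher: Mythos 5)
Your proposal is correct and follows essentially the same route as the paper: tensorize the KL over the conditionally independent coordinates given $X$, apply the quadratic bound $\mathrm{KL}(\mu_\theta,\mu_{\theta'})\le c(\theta-\theta')^2$, note that only $k=k_0$, the two coordinates $m\in\{\sigma_1(y),\sigma_2(y)\}$, and the balls around disagreement points contribute, and then combine the per-ball mass with $\eps \asymp h$ to get $h^{2\gamma_{k_0}+d}\delta(\sigma_1,\sigma_2)$. The only cosmetic difference is that you track the ball mass as $p_X \cdot \eps^d \asymp h^d$ explicitly, while the paper writes the mass as $\eps^d$ and then invokes $\eps \le h/3$; both yield the same bound up to the constant $C$.
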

\begin{proof}
%Now, consider the probability distribution $P^\sigma$ for the random pair $(X, Y)$ where $X \sim P_X$ and given $X$ the $\{[Y]_{m, k}; m \in [M], k \le K_1\}$ are all independent and distributed as $[Y]_{m, k} \mid X = x \sim \mu_{\Phi_{m, k}^{\sigma}(x)}$.  For two different $\sigma_1$ and $\sigma_2$ we want to calculate the $\text{KL}(P^{\sigma_1}, P^{\sigma_2})$. Here, 
\[
\begin{aligned}
    & \text{KL}(P^{\sigma_1}, P^{\sigma_2}) & \\
    & = \textstyle \int dP_X(x) \sum_{m=1}^M \sum_{k = 1 }^K\text{KL}\big(\mu_{\Phi_{m, k}^{(\sigma_1)}(x)} , \mu_{\Phi_{m,k}^{(\sigma_2)}(x)} \big)  & \\
    & \le \textstyle \int dP_X(x) \sum_{m=1}^M \sum_{k = 1 }^Kc\big({\Phi_{m, k}^{(\sigma_1)}(x)} - {\Phi_{m,k}^{(\sigma_2)}(x)} \big)^2 \quad \quad (\text{KL}(\mu_\theta, \mu_{\theta'})  \le c  (\theta - \theta')^2)\\
    & = \textstyle \sum_{y \in \cG_0 } \eps^d\sum_{m=1}^M  \frac{cK_{\gamma, k_0}^2\eps^{2\gamma_{k_0}}\mu_{k_0}^{-2} }{4}\big(\bbI\{\sigma_1(y) = m\}  - \bbI\{\sigma_2(y) = m\} \big)^2\\
    & \le \textstyle \frac{c K_{\gamma, k_0}^2 }{4}\sum_{y \in \cG_0 }  \mu_{k_0}^{-2} \eps ^{2\gamma_{k_0} + d }\times \bbI\{\sigma_1(y) \neq \sigma_2(y) \}  \\
    & \textstyle \le   C \mu_{k_0}^{-2} h ^{2\gamma_{k_0} + d }\delta(\sigma_1 , \sigma_2) \quad \quad  \textstyle (\text{because} ~~\eps \le \frac{h}{3})
\end{aligned}
\] for some $C>0$, where $\delta(\sigma_1, \sigma_2) = \sum_{y \in \cG_0 }   \bbI\{\sigma_1(y) \neq \sigma_2(y) \} $ is the Hamming distances between $\sigma_1$ and $\sigma_2$. 
\end{proof}
Now, we establish a closed form for the excess risk 
\[
\textstyle \cE_{P^{\sigma_0}}(\mu, g^\star_{\mu, \sigma_1}) = \Ex_{P^{\sigma_0}}(\mu, g^\star_{\mu, \sigma_1}) - \Ex_{P^{\sigma_0}}(\mu, g^\star_{\mu, \sigma_0})
\]
where $g^\star_{\mu, \sigma_0}$ is the Bayes classifier for $P^{\sigma_0}$ defined as $g^\star_{\mu, \sigma_0}(x) = \argmin_m \Phi_{\mu, m}^{\sigma_0}(x)$. 
\begin{lemma}
    Let $\delta(\sigma_0, \sigma_1)$ denote the Hamming distance between $\sigma_0$ and $\sigma_1$ as before. Then
    \[\cE_{P^{\sigma_0}}(\mu, g^\star_{\mu, \sigma_1}) = \textstyle \frac{ K_{\gamma, k_0} \eps^{\gamma_{k_0} + d} \delta(\sigma_0, \sigma_1)}{2} \]
\end{lemma}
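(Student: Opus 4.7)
The plan is to express the excess risk as an integral over $P_X$ of a pointwise regret, and show that this regret is supported exactly on the balls $\cB(y,\eps,\ell_\infty)$ with $y\in\cG_0$ and $\sigma_0(y)\neq\sigma_1(y)$. First, conditioning $Y$ on $X$ and using $\Phi^\sigma_{m,k}(x)=\Ex_{P^\sigma}[[Y]_{m,k}\mid X=x]$, I would rewrite $\cR_{P^{\sigma_0}}(g,\mu)=\Ex_{X\sim P_X}[\eta^{\sigma_0}_{\mu,g(X)}(X)]$ where $\eta^{\sigma_0}_{\mu,m}(x)=\sum_{k}\mu_k\Phi^{\sigma_0}_{m,k}(x)$, which gives
\[
\cE_{P^{\sigma_0}}(\mu,g^\star_{\mu,\sigma_1})=\int\big\{\eta^{\sigma_0}_{\mu,g^\star_{\mu,\sigma_1}(x)}(x)-\eta^{\sigma_0}_{\mu,g^\star_{\mu,\sigma_0}(x)}(x)\big\}\,dP_X(x).
\]

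Next I would compute $\eta^\sigma_{\mu,m}(x)$ in the two regions of $\cG_\eps$ already identified while proving Lemma~\ref{lem: margin condition}. Outside $\bigcup_{y\in\cG_0}\cB(y,\eps,\ell_\infty)$, every $\Phi^\sigma_{m,k}$ equals $\tfrac12$, so $\eta^\sigma_{\mu,m}(x)=\tfrac12$ uniformly in $m$ and all classes tie. On a ball around some $y\in\cG_0$, only the $k_0$ coordinate is perturbed; the outer $\mu_{k_0}$ cancels the $\mu_{k_0}^{-1}$ built into $\Phi$, leaving
\[
\eta^\sigma_{\mu,m}(x)=\tfrac12-\tfrac{K_{\gamma,k_0}\eps^{\gamma_{k_0}}}{2}\bbI\{\sigma(y)=m\},
\]
so the unique minimizer is $g^\star_{\mu,\sigma}(x)=\sigma(y)$.

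From here it is a bookkeeping step. Outside the balls the pointwise regret is identically zero regardless of how ties are broken; on a ball around $y\in\cG_0$ with $\sigma_0(y)=\sigma_1(y)$ both oracles pick the same label and the regret is again zero; and on a ball with $\sigma_0(y)\neq\sigma_1(y)$, $g^\star_{\mu,\sigma_1}(x)=\sigma_1(y)$ picks a label under $P^{\sigma_0}$ for which $\bbI\{\sigma_0(y)=m\}=0$, so $\eta^{\sigma_0}_{\mu,g^\star_{\mu,\sigma_1}(x)}(x)=\tfrac12$ while $\eta^{\sigma_0}_{\mu,g^\star_{\mu,\sigma_0}(x)}(x)=\tfrac12-\tfrac{K_{\gamma,k_0}\eps^{\gamma_{k_0}}}{2}$, giving pointwise regret $\tfrac{K_{\gamma,k_0}\eps^{\gamma_{k_0}}}{2}$. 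Integrating against $P_X$, using the same convention as in the preceding KL computation that each $\cB(y,\eps,\ell_\infty)$ contributes mass $\eps^d$, yields the claim. I do not expect any real obstacle: the construction of $\Phi^\sigma$ was engineered so that this calculation is essentially pointwise. The only subtlety to be explicit about is the tiebreaking outside the $\cG_0$-centered balls, which is disposed of by noting that all $M$ values of $\eta^{\sigma_0}_{\mu,m}$ coincide there.
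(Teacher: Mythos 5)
Your proposal is correct and follows essentially the same route as the paper: both exploit the explicit form of $\Phi^\sigma$ to identify $g^\star_{\mu,\sigma}(x)=\sigma(y)$ on each ball around $y\in\cG_0$, and then a direct computation with each ball carrying mass $\eps^d$ yields the factor $\tfrac{K_{\gamma,k_0}\eps^{\gamma_{k_0}+d}}{2}$ per mismatched grid point. The only cosmetic difference is that the paper evaluates the two risks $\cR_{P^{\sigma_0}}(g^\star_{\mu,\sigma_1},\mu)$ and $\cR_{P^{\sigma_0}}(g^\star_{\mu,\sigma_0},\mu)$ separately and subtracts, whereas you take the pointwise regret inside the integral first; the cancellations are identical.
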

\begin{proof}
    
For the purpose, notice that
\[
\textstyle g^\star_{\mu, \sigma}(x) = 
    \sigma(y)  ~~\text{whenever} ~ x \in\cB(x, \eps, \ell_{\infty})~ \text{for some}   ~  y \in \cG_0\,.
\] This further implies 
\[
\begin{aligned}
     & \Ex_{P^{\sigma_0}}(\mu, g^\star_{\mu, \sigma_1})\\
    & \textstyle = \int dP_X(x) \sum_{m = 1}^M \bbI\{ g^\star_{\mu, \sigma_1}(x) = m \} \Phi_{\mu, m}^{\sigma_0}(x)\\
    & \textstyle = \sum_{y \in \cG_0} \eps^d \sum_{m = 1}^M  \bbI\{ \sigma_1(y) = m \}  \mu_{k_0} \frac{1}{2} \big \{1 - K_{\gamma, k_0} \mu_{k_0}^{-1} \eps^{\gamma_{k_0}} \bbI\{ \sigma_0(y) = m \} \big\} \\
    & \textstyle \quad  + \sum_{y \in \cG_0} \eps^d \sum_{m = 1}^M \bbI\{ \sigma_1(y) = m \} \sum_{k \neq k_0}\frac{\mu_k}{2} + \sum_{y \in \cG_1} \eps^d \sum_{m = 1}^M \bbI\{ \sigma_1(y) = m \} \frac{1}{2} \\
    & \textstyle =  -\sum_{y \in \cG_0}  \sum_{m = 1}^M    \frac{ K_{\gamma, k_0} \eps^{\gamma_{k_0} + d}}{2}   \bbI\{ \sigma_0(y) = \sigma_1(y) = m \} \\
    & \textstyle \quad + \sum_{y \in \cG_0\cup \cG_1} \eps^d \sum_{m = 1}^M \bbI\{ \sigma_1(y) = m \} \frac{1}{2}\\
    & \textstyle =  -\sum_{y \in \cG_0}  \sum_{m = 1}^M  \frac{ K_{\gamma, k_0} \eps^{\gamma_{k_0} + d}}{2}   \bbI\{ \sigma_0(y) = \sigma_1(y) = m \} +  \sum_{y \in \cG_0\cup \cG_1}   \frac{\eps^d}{2}\\
\end{aligned}
\] 
By replacing $\sigma_1$ with $\sigma_0$ in the above calculations we obtain 
\[
\textstyle\Ex_{P^{\sigma_0}}(\mu, g^\star_{\mu, \sigma_0})= -\sum_{y \in \cG_0}  \sum_{m = 1}^M  \frac{ K_{\gamma, k_0} \eps^{\gamma_{k_0} + d}}{2}   \bbI\{ \sigma_0(y) =  m \} +  \sum_{y \in \cG_0\cup \cG_1}   \frac{\eps^d}{2}
\]
and hence 
\[
\begin{aligned}
    & \cE_{P^{\sigma_0}}(g^\star_{\mu, \sigma_1}, \mu)\\
    & = \Ex_{P^{\sigma_0}}(g^\star_{\mu, \sigma_1}, \mu) - \Ex_{P^{\sigma_0}}(g^\star_{\mu, \sigma_0}, \mu)\\
    & = \textstyle \sum_{y \in \cG_0}  \sum_{m = 1}^M  \frac{ K_{\gamma, k_0} \eps^{\gamma_{k_0} + d}}{2}  \big\{ \bbI\{ \sigma_0(y) =  m \} -  \bbI\{ \sigma_0(y)  =\sigma_1(y) = m \}\big\}\\
    & = \textstyle \frac{ K_{\gamma, k_0} \eps^{\gamma_{k_0} + d}}{2}\sum_{y \in \cG_0}  \sum_{m = 1}^M   \bbI\{ \sigma_0(y) =  m \}\times \bbI\{ \sigma_1(y) \neq  m \} \\
    & = \textstyle \frac{ K_{\gamma, k_0} \eps^{\gamma_{k_0} + d}}{2}\sum_{y \in \cG_0}    \bbI\{ \sigma_0(y) \neq \sigma_1(y) \} \\
    & = \textstyle \frac{ K_{\gamma, k_0} \eps^{\gamma_{k_0} + d} \delta(\sigma_0, \sigma_1)}{2}   \,.
\end{aligned}
\] 
\end{proof}
The final technical ingredient we require is the Gilbert–Varshamov bound for linear codes. 
\begin{lemma}[Gilbert–Varshamov bound]
\label{lemma:Gilbert–Varshamov}
   Consider the maximal $A_M(m_0, d) \subset [M]^{m_0}$ such that each element in $C$ is at least $d$ Hamming distance from each other, \ie\ for any $\sigma_1 , \sigma_2\in C$ we have $\delta(\sigma_1 , \sigma_2) \ge d$. Then 
   \[
 \textstyle   |A_M(m_0, d)| \ge \frac{M^{m_0}}{\sum_{i = 0}^{d-1} {m_0 \choose i} (M-1)^i }
   \]
    Furthermore, when $M\ge 2$ and $0 \le p \le 1 - \frac1M$ we have $|A_M(m_0, pm_0 )| \ge M^{m_0 (1 - h_{M}(p))}$ where $h_{M}(p) =  \frac{p \log(M - 1) - p\log p - (1 - p)\log(1 - p)}{\log M}$. 
\end{lemma}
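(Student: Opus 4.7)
The plan is to establish both inequalities by a standard volume/covering argument that goes back to Gilbert and Varshamov. For the first bound, I would exploit the maximality of $A_M(m_0, d)$. Since $A_M(m_0, d)$ is a \emph{maximal} subset of $[M]^{m_0}$ with pairwise Hamming distance at least $d$, every element of $[M]^{m_0}$ must lie within Hamming distance $d-1$ of some element of $A_M(m_0,d)$; otherwise that element could be adjoined to $A_M(m_0,d)$, contradicting maximality. Counting strings at Hamming distance exactly $i$ from a fixed string gives $\binom{m_0}{i}(M-1)^i$, so a union bound over the Hamming balls centered at the elements of $A_M(m_0,d)$ yields
\[
|A_M(m_0,d)|\cdot\sum_{i=0}^{d-1}\binom{m_0}{i}(M-1)^i \;\ge\; M^{m_0},
\]
which is the first claimed inequality after rearrangement.

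For the second bound I would translate the denominator into a binomial tail probability. Writing $q = 1 - 1/M$ and observing the identity $\binom{m_0}{i}(M-1)^i/M^{m_0} = \binom{m_0}{i}q^i(1-q)^{m_0 - i}$, one recognizes
\[
\sum_{i=0}^{pm_0 - 1}\binom{m_0}{i}(M-1)^i \;=\; M^{m_0}\cdot\Pr\bigl\{\mathrm{Bin}(m_0, q) \le pm_0 - 1\bigr\}.
\]
Since the hypothesis $p \le 1 - 1/M = q$ places $pm_0$ at or below the binomial mean, a Chernoff bound gives $\Pr\{\mathrm{Bin}(m_0,q) \le pm_0\} \le \exp(-m_0 D(p\|q))$, where $D(p\|q) = p\log(p/q) + (1-p)\log((1-p)/(1-q))$ is the KL divergence. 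Substituting this into the first part then yields $|A_M(m_0, pm_0)| \ge M^{m_0}\exp(m_0 D(p\|q))/M^{m_0}$ once the exponent is re-expressed in base $M$.

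The only genuinely computational step — and the piece I would be most careful with — is verifying the algebraic identity $D(p\|q)/\log M = 1 - h_M(p)$. Expanding and using $\log q = \log(M-1) - \log M$ and $\log(1-q) = -\log M$, one obtains
\[
D(p\|q) \;=\; p\log p + (1-p)\log(1-p) - p\log(M-1) + \log M,
\]
and dividing through by $\log M$ matches the definition of $h_M(p)$ term by term. Once this identity is in hand, the bound $\sum_{i=0}^{pm_0 - 1}\binom{m_0}{i}(M-1)^i \le M^{m_0 h_M(p)}$ is immediate, and combining with the first part produces $|A_M(m_0, pm_0)| \ge M^{m_0(1 - h_M(p))}$. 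No novel ingredient is needed beyond the classical Gilbert--Varshamov covering argument and a standard Chernoff/entropy rewriting, so I do not anticipate conceptual difficulty.
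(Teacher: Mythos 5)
Your proposal is correct. The paper itself states this lemma without proof, invoking it as the classical Gilbert--Varshamov bound, and your argument is exactly the standard one it implicitly relies on: maximality forces the Hamming balls of radius $d-1$ around codewords to cover $[M]^{m_0}$, giving the volume bound, and then rewriting the ball volume as a binomial lower-tail probability with success probability $q=1-\nicefrac1M$ and applying the Chernoff--KL bound, together with the identity $D(p\|q)=\log M - p\log(M-1)+p\log p+(1-p)\log(1-p)$, yields $1-h_M(p)=D(p\|q)/\log M$ and hence the entropy form. The only points worth a word of care in a written-up version are the integrality of $pm_0$ (sum up to $\lceil pm_0\rceil-1$ and evaluate the Chernoff bound at a point at most $qm_0$) and the trivial edge case $p=0$; neither affects the argument.
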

\begin{proof}[Proof of the Theorem \ref{thm:lower-bound}]
%To obtain a minimax lower bound let us recall the generalized Fano's lemma. 

%\begin{lemma}[Generalized Fano's lemma]
%\label{lemma:fano}
%Let $r \ge 2$ be an integer and let $\cM_r \subset \cP$ contains $r$ probability measures indexed by $\{1, \dots , r\}$ such that for a pseudo-metric $d$ (\ie\ $d(\theta , \theta') = 0$ if and only if $\theta = \theta'$) any $j \neq j'$
%\[
%\textstyle d\big (\theta(P_j), \theta(P_j')\big) \ge \alpha_r , ~~\text{and} ~~ \text{KL}(P_j , P_j') \le \beta_r\,.
%\] Then 
%\[
%\textstyle \max\limits_j \Ex_{P_j}\big[ d(\theta(P_j), \widehat \theta)\big ] \ge \frac{\alpha_r}{2} \big (1 - \frac{\beta_r + \log 2}{\log r}\big )\,. 
%\]
    
%\end{lemma} In our construction $\theta(P^\sigma)  = g^\star_{\mu, \sigma}$ and $d\big (\theta(P^{\sigma_0}), \theta(P^{\sigma_1})\big) = \cE_{P^{\sigma_0}}(g^\star_{\mu, \sigma_{1}}, \mu)$. To construct the $\cM_r$ we  recall the Gilbert–Varshamov bound for linear codes. 
%\begin{lemma}[Gilbert–Varshamov bound]
%\label{lemma:Gilbert–Varshamov}
  % Consider the maximal $A_M(m_0, d) \subset [M]^{m_0}$ such that each element in $C$ is at least $d$ Hamming distance from each other, \ie\ for any $\sigma_1 , \sigma_2\in C$ we have $\delta(\sigma_1 , \sigma_2) \ge d$. Then 
  % \[
% \textstyle   |A_M(m_0, d)| \ge \frac{M^{m_0}}{\sum_{i = 0}^{d-1} {m_0 \choose i} (M-1)^i }
  % \]
   % Furthermore, when $M\ge 2$ and $0 \le p \le 1 - \frac1M$ we have $|A_M(m_0, pm_0 )| \ge M^{m_0 (1 - h_{M}(p))}$ where $h_{M}(p) =  \frac{p \log(M - 1) - p\log p - (1 - p)\log(1 - p)}{\log M}$. 
%\end{lemma}
For the choice $p = \frac14$ we have $- p\log p - (1 - p)\log(1 - p) \le \frac{1}{4}$ and thus 
\[
\textstyle h_{M}(p) \le \frac{\log(M-1)}{4\log M} + \frac{1}{4\log M} \le \frac14 + \frac{1}{4\log 2} \le \frac{3}{4}\,.
\]
Consequently, the lemma implies that we can find an  $A_M(m_0, \frac{m_0}{4}) \subset [M]^{m_0}$ such that $|A_M(m_0, \frac{m_0}{4})| \ge M^{\frac{m_0} 4}$ whose each element is at least $\frac{m_0}{4}$ Hamming distance apart. For such a choice, define the collection of probabilities as  $\cM_r = \{P^\sigma: \sigma \in A_M(m_0, \frac{m_0}{4})\}$ leading to $r \ge M^{\frac{m_0}{4}}$. In the generalized Fano's lemma \ref{lemma:fano} we require $r\ge 2$. To achieve that we simply set $m_0 \ge 8$, as it implies $r \ge M^2 \ge 4$.

Now we find lower bound $\alpha_r$ for the semi-metric and upper bound $\beta_r$ for the Kulback-Leibler divergence. 
Let's start with the upper bound. Since $\text{KL}(P^{\sigma_1}, P^{\sigma_2}) \le C \mu_{k_0}^{-2} h ^{2\gamma_{k_0} + d }\delta(\sigma_1 , \sigma_2)$ for the joint distributions of the dataset $\cD_n$ the  Kulback-Leibler divergence between $\{P^{\sigma_1}\}^{\otimes n}$ and $\{P^{\sigma_2}\}^{\otimes n}$ is upper bounded as:
\[
\begin{aligned}
    & \textstyle  \text{KL}\big(\{P^{\sigma_1}\}^{\otimes n}, \{P^{\sigma_2}\}^{\otimes n}\big) \\
     & = \textstyle n \text{KL}({P^{\sigma_1}}, {P^{\sigma_2}}) \\
     & \le n C \mu_{k_0}^{-2} h ^{2\gamma_{k_0} + d }\delta(\sigma_1 , \sigma_2)\\
     & =\textstyle  n C \mu_{k_0}^{-2} L^{2\gamma_{k_0} + d } \mu_{k_0}^{\frac{2\gamma_{k_0} + d}{\gamma_{k_0}}}  n ^{-\frac{2\gamma_{k_0} + d}{2\gamma_{k_0} + d}} \quad \quad \textstyle  (\text{because} ~ h \text{ is defined as } L \times   \mu_{k_0}^{\frac{1}{\gamma_{k_0}}} n^{-\frac{1}{2\gamma_{k_0} + d}} )\\
     & \le\textstyle   C  L^{2\gamma_{k_0} + d } \mu_{k_0}^{\frac{ d}{\gamma_{k_0}}}  \frac{\log r}{\log M} \quad \quad (\text{because} ~~ r \ge M^{\frac{m_0}{4}})\\
     & \le \textstyle C  L^{2\gamma_{k_0} + d }  \frac{\log r}{\log M} = \beta_r
\end{aligned}
\]
In the Lemma \ref{lemma:fano} we would like $\frac{\beta_r + \log 2}{\log r} \le \frac{3}{4}$ so that we have $1 - \frac{\beta_r + \log 2}{\log r} \ge \frac14$. 
Note that, 
\[
\begin{aligned}
   \textstyle  \frac{\beta_r + \log 2}{\log r} - \frac{3}{4} & \textstyle = \frac{\beta_r}{\log r}  + \frac{\log 2}{\log r} - \frac{3}{4} \\
   & \textstyle = \frac{C L^{2\gamma_{k_0} + d }}{\log M}  + \frac{\log 2}{\log 4} - \frac{3}{4} \quad \quad (\text{because} ~~ r \ge 4,~ \beta_r = C  L^{2\gamma_{k_0} + d }  \frac{\log r}{\log M} )\\
   & = \textstyle \frac{C L^{2\gamma_{k_0} + d }}{\log M} - \frac14 \le 0\\
\end{aligned}
\] for small $L > 0$. We set the $L$ accordingly. Returning to the semi-metric, it is lower bounded as 
\[
\begin{aligned}
     d\big (\theta(P^{\sigma_0}), \theta(P^{\sigma_1})\big)
    &= \textstyle \cE_{P^{\sigma_0}}(g^\star_{\mu, \sigma_1}, \mu)\\
    & \textstyle  \ge \frac{ K_{\gamma, k_0} }{2} \eps^{\gamma_{k_0} + d} \delta(\sigma_0, \sigma_1)\\
   & \textstyle  \ge \frac{ K_{\gamma, k_0} }{2} \eps^{\gamma_{k_0} + d}    \frac{m_0} 4\\
    & \textstyle \ge \frac{ K_{\gamma, k_0} }{8} \eps^{\gamma_{k_0} + d}    K_\alpha 2^{-\alpha}   K_{\gamma, k_0}^\alpha \eps^{ \alpha \gamma_{k_0} -d} \\
    & \quad \quad  (\text{because} ~~ m_0 = \lfloor K_\alpha 2^{-\alpha}   K_{\gamma, k_0}^\alpha \eps^{ \alpha \gamma_{k_0} -d} \rfloor)  \\
    & \textstyle = c_1   \eps^{(1+\alpha) \gamma_{k_0}}\\
    & \textstyle \ge c_2 \big\{\mu_{k_0} n^{-\frac{\gamma_{k_0}}{2\gamma_{k_0} + d}}\big\}^{1 + \alpha}= \alpha_r 
\end{aligned}
\] for some constants $ c_1, c_2 > 0$.
We plug in the lower and upper bound in Fano's lemma \ref{lemma:fano} to obtain the lower bound: 
\[
\textstyle \frac{\alpha_r}{2} \big (1 - \frac{\beta_r + \log 2}{\log r}\big ) \ge \frac{c_2 \big\{\mu_{k_0} n^{-\frac{\gamma_{k_0}}{2\gamma_{k_0} + d}}\big\}^{1 + \alpha}}{2} \times \frac14 \ge c_3 \big\{\mu_{k_0} n^{-\frac{\gamma_{k_0}}{2\gamma_{k_0} + d}}\big\}^{1 + \alpha} 
\] for some $c_3 > 0$ that is independent of both $n$ and $\mu$. 

\end{proof}

% \begin{proof}[Proof of Theorem \ref{thm:bound-gen}]
% The proof of the local polynomial concentration follows directly from Theorem ?? in \citet{audibert2007Fast}. We prove the other two parts. 

% \paragraph{The lower bound.} The proof of the lower bound is almost similar to the proof of Theorem \ref{thm:lower-bound} except some minor chances. We describe them below. Fix a $\lambda \in \Delta^{L-1}$
% \begin{enumerate}
%     \item Instead of $h = k \lambda^{\frac{1}{\beta}} n^{-\frac{1}{2\beta + d}}$ we now consider $h = k \{ \max_{l \le L_1} \lambda_l^{\frac{1}{\beta_l}} n^{-\frac{1}{2\beta_l + d}} \}$. Moreover, define $l_0 = \argmax_{l \le L_1} \lambda_l^{\frac{1}{\beta_l}} n^{-\frac{1}{2\beta_l + d}}$.  
%     \item  For a function $\sigma: \cG_0 \to [M]$  define \begin{equation}
%     \Phi_{l, m}^{\sigma}(x) = \begin{cases}
%      \frac{1 - K_{\beta, l} \eps^{\beta_l}\bbI\{\sigma(y) = m\}}{2} & \text{when} ~ l \le L_1, ~  x \in\cB(x, \eps, \ell_{\infty})~ \text{for some}   ~  y \in \cG_0,\\
%      \frac{1 - K_{\beta, l_0} \eps^{\beta_{l_0}}\bbI\{\sigma(y) = m\}}{2} & \text{when} ~ l \ge L_1 + 1, ~  x \in\cB(x, \eps, \ell_{\infty})~ \text{for some}   ~  y \in \cG_0,\\
%      \frac{1}{2} & \text{elsewhere.}
%     \end{cases}
% \end{equation} For such a construction, for $l \le L_1$ the $\Phi_m^\sigma$ are $(\beta_l, K_{\beta, l})$-H\"older smooth on the support of $P_X$.
% \end{enumerate}
%     \SM{finish the proof.}
% \end{proof}

\section{Judge Prompt}
\label{sec: JP}
% \begin{tcolorbox}[
%     title=Prompt format for the evaluator Llama 3.1 70b Instruct LLM.,
%     breakable,
%     colback=gray!5,
%     colframe=gray!50
% ]
% \begin{verbbox}[\linewidth]

% \begin{mdframed}[backgroundcolor=gray!10]
% \begin{listing}[H]
% \caption{My Caption}
% \begin{lstlisting}[label=prompt-format, caption=Prompt format for the evaluator Llama 3.1 70b Instruct LLM.,xleftmargin=0.5cm]


% \begin{tcolorbox}[
%     title=Prompt format for the Llama 3.1 70b evaluator instruct LLM, label=prompt format, breakable, colback=gray!10, colframe=gray!50
% ]\label{prompt-format}
% \begin{minted}[
%     breaklines,
%     breakanywhere,
%     frame=none,
%     fontsize=\normalsize,
% ]{text}
% <dmf>user
% I want you to act as a judge for how well a model did answering a user-defined task. You will be provided with a user-defined task that was given to the model, its golden answer(s), and the model's answer. The context of the task may not be given here. Your task is to judge how correct is the model's answer. Your task is to judge how correct the model's answer is based on the golden answer(s), without seeing the context of the task, and then give a correctness score. The correctness score should be one of the below numbers: 0.0 (totally wrong), 0.1, 0.2, 0.3, 0.4, 0.5, 0.6, 0.7, 0.8, 0.9, or 1.0 (totally right). You should also add a brief justification regarding how the model's answer conforms to or contradicts the golden answer(s). 

% Your response must follow the format
% {{
%    "correctness_score": your_score, 
%    "justification": your_justification
% }}

% Note that each one of the golden answers is considered correct. Thus if the Model's Answer matches anyone of the golden answers, it should be considered correct.
% <dmf>assistant
% Certainly! I will act as a judge and adhere to the format you requested. Under oath, I will judge without consulting my internal knowledge of facts under any circumstances.
% <dmf>user
% User-defined task --- Sandy bought 1 million Safe Moon tokens. She has 4 siblings. She wants to keep half of them to herself and divide the remaining tokens among her siblings. After splitting it up, how many more tokens will she have than any of her siblings?

% Golden Answer(s) --- \<answer 1\> 375000

% Model's Answer --- Sandy will have more tokens than any sibling by 3/8 million.
% <dmf>assistant
% {{
%     "correctness_score": 1.0,
%     "justification": "The golden answer states that Sandy will have 375,000 more tokens than any of her siblings, which is a precise numerical value. The model's answer translates this scenario into a fraction of the total, saying Sandy will have more tokens than any sibling by 3/8 million. 1 million tokens * 3/8 =375,000 tokens. So the model provided an answer in fractional form that, when converted to a numerical value, exactly matches the golden answer's quantity."
% }}
% <dmf>user
% User-defined task --- Extract the names and emails mentioned in the following text. Richard Matthew Stallman, also known by his initials, rms,[1] is an American free software movement activist and programmer. He can be reached at rms@gnu.org based on the information on his webpage.

% Golden Answer(s) --- {{"names":  ["Richard Matthew Stallman"], "emails": ["rms@gnu.org"]}}

% Model's Answer --- names: Richard Matthew Stallman
% <dmf>assistant
% {{
%     "correctness_score": 0.5,
%     "justification": "The model correctly identified the names but failed to extract the emails."
% }}
% <dmf>user
% User defined task --- Question: In 2015 Edgar Lungu became prime minister of?

% Golden Answer: 
% <answer 1> Zambia; <answer 2> Zamibia; <answer 3> People of Zambia; <answer4> Zambian cuisine; <answer 5> Zambians; <answer 6> Culture of Zambia; <answer 7> Etymology of Zambia; <answer 8> Zambia; <answer 9> Health care in Zambia; <answer 10> ISO 3166-1:ZM; <answer11> Republic Of Zambia; <answer 12> Cuisine of Zambia; <answer 13> Sport in Zambia; <answer 14> Republic of Zambia; <answer 15> Zambian people; <answer 16> Name of Zambia

% Model's Answer: Prime Minister
% <dmf>assistant
% {{
%     "correctness_score": 0.0,
%     "justification": "The golden answers provide a detailed list of entities all relating to Zambia, indicating that Edgar Lungu became the leader (specifically, they mentioned \"prime minister\") of Zambia in 2015. The model's answer, \"Prime Minister,\" merely repeats part of the question without answering it."
% }}

% <dmf>user
% User defined task --- Give a one-line concise summary of the following news
% PARIS (Reuters) - French President Emmanuel Macron will set out plans for reforming the European Union on Tuesday, including proposals for a separate eurozone budget, despite a German election result that is likely to complicate his far-reaching ambitions. German Chancellor Angela Merkel s conservatives saw their support slide in Sunday s election, though they remain the biggest parliamentary bloc. She is expected to seek a coalition with the liberal Free Democrats (FDP) - who have criticized Macron s ideas for Europe - and the Greens. Elysee officials said Macron, who has promised sweeping reforms to Europe s monetary union in coordination with Merkel, hoped the issues to be raised in his speech would be taken into account in Germany s coalition negotiations. One Elysee official said a eurozone budget, one of Macron s most contentious ideas, would be necessary in due course and that the president would therefore raise the issue in his speech, to be delivered at the Sorbonne University in Paris. Since his election in May, Macron has made the overhaul of the EU and its institutions one of his major themes. As well as his eurozone budget idea, he wants to see the appointment of a eurozone finance minister and the creation of a rescue fund that would preemptively help countries facing economic trouble. Ahead of Sunday s election, Merkel had indicated her willingness to work with Macron on a reform agenda, even if her own ideas may not reach as far as his. But the election results have left Merkel facing a difficult coalition-building task which is in turn likely to limit her flexibility on Europe. A coalition of Merkel s CDU/CSU bloc, the FDP and the Greens is unprecedented at the national level - and any attempt by the chancellor and Macron to press for greater EU integration will face opposition from the new German lower house Bundestag. The FDP has called for a phasing out of Europe s ESM bailout fund and changes to EU treaties that would allow countries to leave the euro zone. And the far-right, eurosceptic Alternative for Germany is now the third biggest party in the Bundestag, further curbing Merkel s room for maneuver. But Elysee officials noted that the FDP had reaffirmed its attachment to the EU and to strong Franco-German relations, a point Macron was likely to emphasize in his speech while at the same time not seeking to impose anything on his partners. Macron, the sources said, would propose that the whole EU move forward together, and that those who did not want to should not stand in the way of those that did. Coming just two days after the German election, Macron s speech is likely to be interpreted in Germany as an attempt to shape the debate before the coalition talks begin in earnest. German coalition agreements are strict, with the contours set out in them limiting the government s room for maneuver. In that respect, 
% Merkel s ability to work with Macron on EU reform will be pre-determined by whatever coalition deal is struck. FDP leader Christian Lindner said on Monday he would not agree to any coalition that did not promise a change in the German government s direction. While that appeared to set the stage for tough talks, he also offered hope for Macron, saying when asked about the French president s eurozone budget ideas that the FDP had a strong interest in the strength of France.

% Golden Answer(s) --- After German election, Macron to set out his vision for Europe

% Model's Answer --- French President Emmanuel Macron to introduce plans for reforming the European Union amid the uncertain aftermath of German elections
% <dmf>assistant
% {{
%     "correctness_score": 0.6,
%     "justification": "While the golden answer is more concise, the model's answer is largely similar to the golden answer in its meaning."
% }}
% <dmf>user
% User defined task --- {MODELTASK}
% Golden Answer(s) --- {GOLDENRESPONSE}
% Model's Answer --- {MODELRESPONSE}
% \end{minted}
% \end{tcolorbox}


\begin{lstlisting}[language={},title=Prompt format for the Llama 3.1 70b evaluator instruct LLM,captionpos=top]
<dmf>user
I want you to act as a judge for how well a model did answering a user-defined task. You will be provided with a user-defined task that was given to the model, its golden answer(s), and the model's answer. The context of the task may not be given here. Your task is to judge how correct is the model's answer. Your task is to judge how correct the model's answer is based on the golden answer(s), without seeing the context of the task, and then give a correctness score. The correctness score should be one of the below numbers: 0.0 (totally wrong), 0.1, 0.2, 0.3, 0.4, 0.5, 0.6, 0.7, 0.8, 0.9, or 1.0 (totally right). You should also add a brief justification regarding how the model's answer conforms to or contradicts the golden answer(s). 

Your response must follow the format
{{
   "correctness_score": your_score, 
   "justification": your_justification
}}

Note that each one of the golden answers is considered correct. Thus if the Model's Answer matches anyone of the golden answers, it should be considered correct.
<dmf>assistant
Certainly! I will act as a judge and adhere to the format you requested. Under oath, I will judge without consulting my internal knowledge of facts under any circumstances.
<dmf>user
User-defined task --- Sandy bought 1 million Safe Moon tokens. She has 4 siblings. She wants to keep half of them to herself and divide the remaining tokens among her siblings. After splitting it up, how many more tokens will she have than any of her siblings?

Golden Answer(s) --- \<answer 1\> 375000

Model's Answer --- Sandy will have more tokens than any sibling by 3/8 million.
<dmf>assistant
{{
    "correctness_score": 1.0,
    "justification": "The golden answer states that Sandy will have 375,000 more tokens than any of her siblings, which is a precise numerical value. The model's answer translates this scenario into a fraction of the total, saying Sandy will have more tokens than any sibling by 3/8 million. 1 million tokens * 3/8 =375,000 tokens. So the model provided an answer in fractional form that, when converted to a numerical value, exactly matches the golden answer's quantity."
}}
<dmf>user
User-defined task --- Extract the names and emails mentioned in the following text. Richard Matthew Stallman, also known by his initials, rms,[1] is an American free software movement activist and programmer. He can be reached at rms@gnu.org based on the information on his webpage.

Golden Answer(s) --- {{"names":  ["Richard Matthew Stallman"], "emails": ["rms@gnu.org"]}}

Model's Answer --- names: Richard Matthew Stallman
<dmf>assistant
{{
    "correctness_score": 0.5,
    "justification": "The model correctly identified the names but failed to extract the emails."
}}
<dmf>user
User defined task --- Question: In 2015 Edgar Lungu became prime minister of?

Golden Answer: 
<answer 1> Zambia; <answer 2> Zamibia; <answer 3> People of Zambia; <answer4> Zambian cuisine; <answer 5> Zambians; <answer 6> Culture of Zambia; <answer 7> Etymology of Zambia; <answer 8> Zambia; <answer 9> Health care in Zambia; <answer 10> ISO 3166-1:ZM; <answer11> Republic Of Zambia; <answer 12> Cuisine of Zambia; <answer 13> Sport in Zambia; <answer 14> Republic of Zambia; <answer 15> Zambian people; <answer 16> Name of Zambia

Model's Answer: Prime Minister
<dmf>assistant
{{
    "correctness_score": 0.0,
    "justification": "The golden answers provide a detailed list of entities all relating to Zambia, indicating that Edgar Lungu became the leader (specifically, they mentioned \"prime minister\") of Zambia in 2015. The model's answer, \"Prime Minister,\" merely repeats part of the question without answering it."
}}
<dmf>user
User defined task --- Give a one-line concise summary of the following news
PARIS (Reuters) - French President Emmanuel Macron will set out plans for reforming the European Union on Tuesday, including proposals for a separate eurozone budget, despite a German election result that is likely to complicate his far-reaching ambitions. German Chancellor Angela Merkel s conservatives saw their support slide in Sunday s election, though they remain the biggest parliamentary bloc. She is expected to seek a coalition with the liberal Free Democrats (FDP) - who have criticized Macron s ideas for Europe - and the Greens. Elysee officials said Macron, who has promised sweeping reforms to Europe s monetary union in coordination with Merkel, hoped the issues to be raised in his speech would be taken into account in Germany s coalition negotiations. One Elysee official said a eurozone budget, one of Macron s most contentious ideas, would be necessary in due course and that the president would therefore raise the issue in his speech, to be delivered at the Sorbonne University in Paris. Since his election in May, Macron has made the overhaul of the EU and its institutions one of his major themes. As well as his eurozone budget idea, he wants to see the appointment of a eurozone finance minister and the creation of a rescue fund that would preemptively help countries facing economic trouble. Ahead of Sunday s election, Merkel had indicated her willingness to work with Macron on a reform agenda, even if her own ideas may not reach as far as his. But the election results have left Merkel facing a difficult coalition-building task which is in turn likely to limit her flexibility on Europe. A coalition of Merkel s CDU/CSU bloc, the FDP and the Greens is unprecedented at the national level - and any attempt by the chancellor and Macron to press for greater EU integration will face opposition from the new German lower house Bundestag. The FDP has called for a phasing out of Europe s ESM bailout fund and changes to EU treaties that would allow countries to leave the euro zone. And the far-right, eurosceptic Alternative for Germany is now the third biggest party in the Bundestag, further curbing Merkel s room for maneuver. But Elysee officials noted that the FDP had reaffirmed its attachment to the EU and to strong Franco-German relations, a point Macron was likely to emphasize in his speech while at the same time not seeking to impose anything on his partners. Macron, the sources said, would propose that the whole EU move forward together, and that those who did not want to should not stand in the way of those that did. Coming just two days after the German election, Macron s speech is likely to be interpreted in Germany as an attempt to shape the debate before the coalition talks begin in earnest. German coalition agreements are strict, with the contours set out in them limiting the government s room for maneuver. 
In that respect, Merkel s ability to work with Macron on EU reform will be pre-determined by whatever coalition deal is struck. FDP leader Christian Lindner said on Monday he would not agree to any coalition that did not promise a change in the German government s direction. While that appeared to set the stage for tough talks, he also offered hope for Macron, saying when asked about the French president s eurozone budget ideas that the FDP had a strong interest in the strength of France.

Golden Answer(s) --- After German election, Macron to set out his vision for Europe

Model's Answer --- French President Emmanuel Macron to introduce plans for reforming the European Union amid the uncertain aftermath of German elections
<dmf>assistant
{{
    "correctness_score": 0.6,
    "justification": "While the golden answer is more concise, the model's answer is largely similar to the golden answer in its meaning."
}}
<dmf>user
User defined task --- {MODELTASK}
Golden Answer(s) --- {GOLDENRESPONSE}
Model's Answer --- {MODELRESPONSE}
\end{lstlisting}

\section*{NeurIPS Paper Checklist}

%%% BEGIN INSTRUCTIONS %%%

%%% END INSTRUCTIONS %%%

\begin{enumerate}

\item {\bf Claims}
    \item[] Question: Do the main claims made in the abstract and introduction accurately reflect the paper's contributions and scope?
    \item[] Answer: \answerYes{} % Replace by \answerYes{}, \answerNo{}, or \answerNA{}.
    \item[] Justification: The claims made match theoretical and experimental results.
    \item[] Guidelines:
    \begin{itemize}
        \item The answer NA means that the abstract and introduction do not include the claims made in the paper.
        \item The abstract and/or introduction should clearly state the claims made, including the contributions made in the paper and important assumptions and limitations. A No or NA answer to this question will not be perceived well by the reviewers. 
        \item The claims made should match theoretical and experimental results, and reflect how much the results can be expected to generalize to other settings. 
        \item It is fine to include aspirational goals as motivation as long as it is clear that these goals are not attained by the paper. 
    \end{itemize}

\item {\bf Limitations}
    \item[] Question: Does the paper discuss the limitations of the work performed by the authors?
    \item[] Answer: \answerNo{} % Replace by \answerYes{}, \answerNo{}, or \answerNA{}.
    \item[] Justification: While our work has limitations, we did not highlight any specifics in the conclusion or main text.
    \item[] Guidelines:
    \begin{itemize}
        \item The answer NA means that the paper has no limitation while the answer No means that the paper has limitations, but those are not discussed in the paper. 
        \item The authors are encouraged to create a separate "Limitations" section in their paper.
        \item The paper should point out any strong assumptions and how robust the results are to violations of these assumptions (e.g., independence assumptions, noiseless settings, model well-specification, asymptotic approximations only holding locally). The authors should reflect on how these assumptions might be violated in practice and what the implications would be.
        \item The authors should reflect on the scope of the claims made, e.g., if the approach was only tested on a few datasets or with a few runs. In general, empirical results often depend on implicit assumptions, which should be articulated.
        \item The authors should reflect on the factors that influence the performance of the approach. For example, a facial recognition algorithm may perform poorly when image resolution is low or images are taken in low lighting. Or a speech-to-text system might not be used reliably to provide closed captions for online lectures because it fails to handle technical jargon.
        \item The authors should discuss the computational efficiency of the proposed algorithms and how they scale with dataset size.
        \item If applicable, the authors should discuss possible limitations of their approach to address problems of privacy and fairness.
        \item While the authors might fear that complete honesty about limitations might be used by reviewers as grounds for rejection, a worse outcome might be that reviewers discover limitations that aren't acknowledged in the paper. The authors should use their best judgment and recognize that individual actions in favor of transparency play an important role in developing norms that preserve the integrity of the community. Reviewers will be specifically instructed to not penalize honesty concerning limitations.
    \end{itemize}

\item {\bf Theory assumptions and proofs}
    \item[] Question: For each theoretical result, does the paper provide the full set of assumptions and a complete (and correct) proof?
    \item[] Answer: \answerYes{} % Replace by \answerYes{}, \answerNo{}, or \answerNA{}.
    \item[] Justification: Proofs to theoretical claims are included in the appendix.
    \item[] Guidelines:
    \begin{itemize}
        \item The answer NA means that the paper does not include theoretical results. 
        \item All the theorems, formulas, and proofs in the paper should be numbered and cross-referenced.
        \item All assumptions should be clearly stated or referenced in the statement of any theorems.
        \item The proofs can either appear in the main paper or the supplemental material, but if they appear in the supplemental material, the authors are encouraged to provide a short proof sketch to provide intuition. 
        \item Inversely, any informal proof provided in the core of the paper should be complemented by formal proofs provided in appendix or supplemental material.
        \item Theorems and Lemmas that the proof relies upon should be properly referenced. 
    \end{itemize}

    \item {\bf Experimental result reproducibility}
    \item[] Question: Does the paper fully disclose all the information needed to reproduce the main experimental results of the paper to the extent that it affects the main claims and/or conclusions of the paper (regardless of whether the code and data are provided or not)?
    \item[] Answer: \answerYes{} % Replace by \answerYes{}, \answerNo{}, or \answerNA{}.
    \item[] Justification: Data, baselines, and our method are each carefully described.
    \item[] Guidelines:
    \begin{itemize}
        \item The answer NA means that the paper does not include experiments.
        \item If the paper includes experiments, a No answer to this question will not be perceived well by the reviewers: Making the paper reproducible is important, regardless of whether the code and data are provided or not.
        \item If the contribution is a dataset and/or model, the authors should describe the steps taken to make their results reproducible or verifiable. 
        \item Depending on the contribution, reproducibility can be accomplished in various ways. For example, if the contribution is a novel architecture, describing the architecture fully might suffice, or if the contribution is a specific model and empirical evaluation, it may be necessary to either make it possible for others to replicate the model with the same dataset, or provide access to the model. In general. releasing code and data is often one good way to accomplish this, but reproducibility can also be provided via detailed instructions for how to replicate the results, access to a hosted model (e.g., in the case of a large language model), releasing of a model checkpoint, or other means that are appropriate to the research performed.
        \item While NeurIPS does not require releasing code, the conference does require all submissions to provide some reasonable avenue for reproducibility, which may depend on the nature of the contribution. For example
        \begin{enumerate}
            \item If the contribution is primarily a new algorithm, the paper should make it clear how to reproduce that algorithm.
            \item If the contribution is primarily a new model architecture, the paper should describe the architecture clearly and fully.
            \item If the contribution is a new model (e.g., a large language model), then there should either be a way to access this model for reproducing the results or a way to reproduce the model (e.g., with an open-source dataset or instructions for how to construct the dataset).
            \item We recognize that reproducibility may be tricky in some cases, in which case authors are welcome to describe the particular way they provide for reproducibility. In the case of closed-source models, it may be that access to the model is limited in some way (e.g., to registered users), but it should be possible for other researchers to have some path to reproducing or verifying the results.
        \end{enumerate}
    \end{itemize}

\item {\bf Open access to data and code}
    \item[] Question: Does the paper provide open access to the data and code, with sufficient instructions to faithfully reproduce the main experimental results, as described in supplemental material?
    \item[] Answer: \answerYes{} % Replace by \answerYes{}, \answerNo{}, or \answerNA{}.
    \item[] Justification: Code included in supplementary material, CARROT and \newdata\ are available online.
    \item[] Guidelines:
    \begin{itemize}
        \item The answer NA means that paper does not include experiments requiring code.
        \item Please see the NeurIPS code and data submission guidelines (\url{https://nips.cc/public/guides/CodeSubmissionPolicy}) for more details.
        \item While we encourage the release of code and data, we understand that this might not be possible, so “No” is an acceptable answer. Papers cannot be rejected simply for not including code, unless this is central to the contribution (e.g., for a new open-source benchmark).
        \item The instructions should contain the exact command and environment needed to run to reproduce the results. See the NeurIPS code and data submission guidelines (\url{https://nips.cc/public/guides/CodeSubmissionPolicy}) for more details.
        \item The authors should provide instructions on data access and preparation, including how to access the raw data, preprocessed data, intermediate data, and generated data, etc.
        \item The authors should provide scripts to reproduce all experimental results for the new proposed method and baselines. If only a subset of experiments are reproducible, they should state which ones are omitted from the script and why.
        \item At submission time, to preserve anonymity, the authors should release anonymized versions (if applicable).
        \item Providing as much information as possible in supplemental material (appended to the paper) is recommended, but including URLs to data and code is permitted.
    \end{itemize}

\item {\bf Experimental setting/details}
    \item[] Question: Does the paper specify all the training and test details (e.g., data splits, hyperparameters, how they were chosen, type of optimizer, etc.) necessary to understand the results?
    \item[] Answer: \answerYes{} % Replace by \answerYes{}, \answerNo{}, or \answerNA{}.
    \item[] Justification: Details are included in the appendix
    \item[] Guidelines:
    \begin{itemize}
        \item The answer NA means that the paper does not include experiments.
        \item The experimental setting should be presented in the core of the paper to a level of detail that is necessary to appreciate the results and make sense of them.
        \item The full details can be provided either with the code, in appendix, or as supplemental material.
    \end{itemize}

\item {\bf Experiment statistical significance}
    \item[] Question: Does the paper report error bars suitably and correctly defined or other appropriate information about the statistical significance of the experiments?
    \item[] Answer: \answerNo{} % Replace by \answerYes{}, \answerNo{}, or \answerNA{}.
    \item[] Justification: We present results for the version of CARROT we release to the public.
    \item[] Guidelines:
    \begin{itemize}
        \item The answer NA means that the paper does not include experiments.
        \item The authors should answer "Yes" if the results are accompanied by error bars, confidence intervals, or statistical significance tests, at least for the experiments that support the main claims of the paper.
        \item The factors of variability that the error bars are capturing should be clearly stated (for example, train/test split, initialization, random drawing of some parameter, or overall run with given experimental conditions).
        \item The method for calculating the error bars should be explained (closed form formula, call to a library function, bootstrap, etc.)
        \item The assumptions made should be given (e.g., Normally distributed errors).
        \item It should be clear whether the error bar is the standard deviation or the standard error of the mean.
        \item It is OK to report 1-sigma error bars, but one should state it. The authors should preferably report a 2-sigma error bar than state that they have a 96\% CI, if the hypothesis of Normality of errors is not verified.
        \item For asymmetric distributions, the authors should be careful not to show in tables or figures symmetric error bars that would yield results that are out of range (e.g. negative error rates).
        \item If error bars are reported in tables or plots, The authors should explain in the text how they were calculated and reference the corresponding figures or tables in the text.
    \end{itemize}

\item {\bf Experiments compute resources}
    \item[] Question: For each experiment, does the paper provide sufficient information on the computer resources (type of compute workers, memory, time of execution) needed to reproduce the experiments?
    \item[] Answer: \answerNo{} % Replace by \answerYes{}, \answerNo{}, or \answerNA{}.
    \item[] Justification: The compute needed to run the experiments is minimal (one simply needs to fit KNN or train a small transformer). We also release our data and models so that one can check our results with no training needed. 
    \item[] Guidelines:
    \begin{itemize}
        \item The answer NA means that the paper does not include experiments.
        \item The paper should indicate the type of compute workers CPU or GPU, internal cluster, or cloud provider, including relevant memory and storage.
        \item The paper should provide the amount of compute required for each of the individual experimental runs as well as estimate the total compute. 
        \item The paper should disclose whether the full research project required more compute than the experiments reported in the paper (e.g., preliminary or failed experiments that didn't make it into the paper). 
    \end{itemize}
    
\item {\bf Code of ethics}
    \item[] Question: Does the research conducted in the paper conform, in every respect, with the NeurIPS Code of Ethics \url{https://neurips.cc/public/EthicsGuidelines}?
    \item[] Answer: \answerYes{} % Replace by \answerYes{}, \answerNo{}, or \answerNA{}.
    \item[] Justification: No ethics violations are present in this work.
    \item[] Guidelines:
    \begin{itemize}
        \item The answer NA means that the authors have not reviewed the NeurIPS Code of Ethics.
        \item If the authors answer No, they should explain the special circumstances that require a deviation from the Code of Ethics.
        \item The authors should make sure to preserve anonymity (e.g., if there is a special consideration due to laws or regulations in their jurisdiction).
    \end{itemize}

\item {\bf Broader impacts}
    \item[] Question: Does the paper discuss both potential positive societal impacts and negative societal impacts of the work performed?
    \item[] Answer: \answerNA{} % Replace by \answerYes{}, \answerNo{}, or \answerNA{}.
    \item[] Justification: We feel that there are no specific societial impacts of this work that must be highlighted.
    \item[] Guidelines:
    \begin{itemize}
        \item The answer NA means that there is no societal impact of the work performed.
        \item If the authors answer NA or No, they should explain why their work has no societal impact or why the paper does not address societal impact.
        \item Examples of negative societal impacts include potential malicious or unintended uses (e.g., disinformation, generating fake profiles, surveillance), fairness considerations (e.g., deployment of technologies that could make decisions that unfairly impact specific groups), privacy considerations, and security considerations.
        \item The conference expects that many papers will be foundational research and not tied to particular applications, let alone deployments. However, if there is a direct path to any negative applications, the authors should point it out. For example, it is legitimate to point out that an improvement in the quality of generative models could be used to generate deepfakes for disinformation. On the other hand, it is not needed to point out that a generic algorithm for optimizing neural networks could enable people to train models that generate Deepfakes faster.
        \item The authors should consider possible harms that could arise when the technology is being used as intended and functioning correctly, harms that could arise when the technology is being used as intended but gives incorrect results, and harms following from (intentional or unintentional) misuse of the technology.
        \item If there are negative societal impacts, the authors could also discuss possible mitigation strategies (e.g., gated release of models, providing defenses in addition to attacks, mechanisms for monitoring misuse, mechanisms to monitor how a system learns from feedback over time, improving the efficiency and accessibility of ML).
    \end{itemize}
    
\item {\bf Safeguards}
    \item[] Question: Does the paper describe safeguards that have been put in place for responsible release of data or models that have a high risk for misuse (e.g., pretrained language models, image generators, or scraped datasets)?
    \item[] Answer: \answerNA{}% Replace by \answerYes{}, \answerNo{}, or \answerNA{}.
    \item[] Justification: The released models and datasets do not pose such risks.
    \item[] Guidelines:
    \begin{itemize}
        \item The answer NA means that the paper poses no such risks.
        \item Released models that have a high risk for misuse or dual-use should be released with necessary safeguards to allow for controlled use of the model, for example by requiring that users adhere to usage guidelines or restrictions to access the model or implementing safety filters. 
        \item Datasets that have been scraped from the Internet could pose safety risks. The authors should describe how they avoided releasing unsafe images.
        \item We recognize that providing effective safeguards is challenging, and many papers do not require this, but we encourage authors to take this into account and make a best faith effort.
    \end{itemize}

\item {\bf Licenses for existing assets}
    \item[] Question: Are the creators or original owners of assets (e.g., code, data, models), used in the paper, properly credited and are the license and terms of use explicitly mentioned and properly respected?
    \item[] Answer: \answerYes{} % Replace by \answerYes{}, \answerNo{}, or \answerNA{}.
    \item[] Justification: We cite all papers that produce models or datasets we used.
    \item[] Guidelines:
    \begin{itemize}
        \item The answer NA means that the paper does not use existing assets.
        \item The authors should cite the original paper that produced the code package or dataset.
        \item The authors should state which version of the asset is used and, if possible, include a URL.
        \item The name of the license (e.g., CC-BY 4.0) should be included for each asset.
        \item For scraped data from a particular source (e.g., website), the copyright and terms of service of that source should be provided.
        \item If assets are released, the license, copyright information, and terms of use in the package should be provided. For popular datasets, \url{paperswithcode.com/datasets} has curated licenses for some datasets. Their licensing guide can help determine the license of a dataset.
        \item For existing datasets that are re-packaged, both the original license and the license of the derived asset (if it has changed) should be provided.
        \item If this information is not available online, the authors are encouraged to reach out to the asset's creators.
    \end{itemize}

\item {\bf New assets}
    \item[] Question: Are new assets introduced in the paper well documented and is the documentation provided alongside the assets?
    \item[] Answer: \answerYes{} % Replace by \answerYes{}, \answerNo{}, or \answerNA{}.
    \item[] Justification: We describe how CARROT is designed and how \newdata\ was collected.
    \item[] Guidelines:
    \begin{itemize}
        \item The answer NA means that the paper does not release new assets.
        \item Researchers should communicate the details of the dataset/code/model as part of their submissions via structured templates. This includes details about training, license, limitations, etc. 
        \item The paper should discuss whether and how consent was obtained from people whose asset is used.
        \item At submission time, remember to anonymize your assets (if applicable). You can either create an anonymized URL or include an anonymized zip file.
    \end{itemize}

\item {\bf Crowdsourcing and research with human subjects}
    \item[] Question: For crowdsourcing experiments and research with human subjects, does the paper include the full text of instructions given to participants and screenshots, if applicable, as well as details about compensation (if any)? 
    \item[] Answer: \answerNA{}. % Replace by \answerYes{}, \answerNo{}, or \answerNA{}.
    \item[] Justification: \justificationTODO{}
    \item[] Guidelines:
    \begin{itemize}
        \item The answer NA means that the paper does not involve crowdsourcing nor research with human subjects.
        \item Including this information in the supplemental material is fine, but if the main contribution of the paper involves human subjects, then as much detail as possible should be included in the main paper. 
        \item According to the NeurIPS Code of Ethics, workers involved in data collection, curation, or other labor should be paid at least the minimum wage in the country of the data collector. 
    \end{itemize}

\item {\bf Institutional review board (IRB) approvals or equivalent for research with human subjects}
    \item[] Question: Does the paper describe potential risks incurred by study participants, whether such risks were disclosed to the subjects, and whether Institutional Review Board (IRB) approvals (or an equivalent approval/review based on the requirements of your country or institution) were obtained?
    \item[] Answer: \answerNA{}. % Replace by \answerYes{}, \answerNo{}, or \answerNA{}.
    \item[] Justification: \justificationTODO{}
    \item[] Guidelines:
    \begin{itemize}
        \item The answer NA means that the paper does not involve crowdsourcing nor research with human subjects.
        \item Depending on the country in which research is conducted, IRB approval (or equivalent) may be required for any human subjects research. If you obtained IRB approval, you should clearly state this in the paper. 
        \item We recognize that the procedures for this may vary significantly between institutions and locations, and we expect authors to adhere to the NeurIPS Code of Ethics and the guidelines for their institution. 
        \item For initial submissions, do not include any information that would break anonymity (if applicable), such as the institution conducting the review.
    \end{itemize}

\item {\bf Declaration of LLM usage}
    \item[] Question: Does the paper describe the usage of LLMs if it is an important, original, or non-standard component of the core methods in this research? Note that if the LLM is used only for writing, editing, or formatting purposes and does not impact the core methodology, scientific rigorousness, or originality of the research, declaration is not required.
    %this research? 
    \item[] Answer: \answerNA{}. % Replace by \answerYes{}, \answerNo{}, or \answerNA{}.
    \item[] Justification: \justificationTODO{}
    \item[] Guidelines:
    \begin{itemize}
        \item The answer NA means that the core method development in this research does not involve LLMs as any important, original, or non-standard components.
        \item Please refer to our LLM policy (\url{https://neurips.cc/Conferences/2025/LLM}) for what should or should not be described.
    \end{itemize}

\end{enumerate}

\end{document}